\documentclass{article} % For LaTeX2e
\usepackage{iclr2027_conference,times}
\iclrfinalcopy 
\usepackage{amsmath,amsfonts,bm}

\def\eqref#1{equation~\ref{#1}}
\def\1{\bm{1}}

\DeclareMathAlphabet{\mathsfit}{\encodingdefault}{\sfdefault}{m}{sl}
\SetMathAlphabet{\mathsfit}{bold}{\encodingdefault}{\sfdefault}{bx}{n}

\usepackage{hyperref}
\usepackage{url}
\usepackage{amsmath,mathtools}
\usepackage{amssymb}
\usepackage{amsthm}
\usepackage{algorithm}
\usepackage{algpseudocode}
\usepackage{booktabs}
\usepackage{tcolorbox}
\tcbset{promptbox/.style={colback=gray!5!white, colframe=blue!75!black, boxrule=0.5pt, arc=2mm, left=1mm, right=1mm, top=1mm, bottom=1mm}}
\usepackage{multirow}
\usepackage{graphicx}
\usepackage{colortbl}
\usepackage{adjustbox}
\usepackage{makecell}
\usepackage{subcaption}
\usepackage{booktabs}
\usepackage{tabularx}
\usepackage{array}
\usepackage{makecell}
\usepackage{xcolor}
\newtheorem{appproposition}{Proposition}[section]

\usepackage{booktabs}
\usepackage{multirow}
\usepackage{graphicx}
\usepackage[table]{xcolor}
\usepackage{booktabs}
\usepackage[table]{xcolor}
\usepackage{siunitx}

\title{EnGRICH: Enhancing Generative Reward Modeling with Critiques from Humans}

\author{
Xuancheng Li\thanks{This work was done during an internship at Tencent.}\textsuperscript{1}
\quad
Beining Wang\textsuperscript{1}
\quad
Haitao Li\textsuperscript{1}
\quad
Heng Wang\textsuperscript{2}
\quad
Yujia Zhou\textsuperscript{1}
\\
\textbf{
Qingyi Pan\textsuperscript{1}
\quad
Blaze Chen
\quad
Yiqun Liu\textsuperscript{1}
\quad
Min Zhang\textsuperscript{1}
\quad
Qingyao Ai\textsuperscript{1}
}
\\[0.6em]
\textsuperscript{1}Department of Computer Science and Technology, Tsinghua University, Beijing, China
\\
\textsuperscript{2}Tencent, Beijing, China
\\
\texttt{lixuanch23@mails.tsinghua.edu.cn}
}

\begin{document}

\maketitle
\lhead{}
% \fancyhead{}
% \renewcommand{\headrulewidth}{0pt}

\begin{abstract}
Generative reward models (GRMs) are important for LLM optimization. Unlike scalar reward models, GRMs generate natural-language critiques alongside preference judgments, providing finer-grained evaluation signals. Their effectiveness depends heavily on critique reliability.
However, existing GRM training typically uses final preference correctness as outcome supervision. Because the preference outcome space is highly constrained, unreliable critiques can still yield correct outcomes and thus be reinforced.
Recent work leverages human critiques for process supervision, but such
critiques are scarce and are often reduced to scalar rewards, leaving
their fine-grained evaluative information underutilized.
We argue that evaluative criteria learned from human critiques can be
generalized to broader outcome-only preference data.
To this end, we propose \textbf{EnGRICH}, a GRM training framework that
pairs the GRM with a training-time MetaCritic learned from a small set
of human critiques.
MetaCritic constructs response-specific rubrics and uses them to evaluate
the evidence coverage and correctness of generated critiques.
The resulting signals provide both process rewards for fine-grained
credit assignment and structured guidance for exploring better critiques.
During GRM training, MetaCritic is further optimized to generalize
human-grounded evaluative criteria to outcome-only data.
At inference, the trained GRM operates
independently.
Experiments across seven reward-model benchmarks show that EnGRICH consistently improves over competitive baselines, while further analyses validate the effectiveness of its core mechanisms.\footnote{We release our code at \url{https://anonymous.4open.science/r/ENGRICH}.}
\end{abstract}

\section{Introduction}
\label{sec:introduction}
Reward models (RMs) play an important role across LLM training and inference
\citep{lambert2025rewardbench,zhang2025generativeverifiersrewardmodeling}.
RMs typically take one of two forms: scalar reward models, which directly assign numerical scores to responses, and generative reward models (GRMs), which compare candidate responses by generating language-based critiques and predicting the preferred response \citep{ye2024scalarrewardmodellearning,lai2026ralignenhancinggenerativereward}.
By verbalizing the evaluation process, GRMs can leverage the reasoning capabilities of pretrained LLMs and provide richer evaluation signals than scalar RMs
\citep{chen2025reasongrmenhancinggenerativereward}.
This advantage is particularly valuable in non-verifiable domains such as creative writing and open-ended instruction following, where evaluation often requires nuanced semantic judgment
\citep{xu2026alternatingreinforcementlearningrubricbased}.

A common approach to training GRMs is reinforcement learning with verifiable rewards (RLVR), which rewards agreement between the model's final judgment and the preference label
\citep{liu2025inference,chen2026rmr1rewardmodelingreasoning}.
In conventional RLVR tasks such as mathematics and coding, outcome rewards provide strong supervision because their large solution spaces make correct solutions difficult to obtain by chance
\citep{wang2026rewardmodelingnaturallanguage}.
However, pairwise GRMs operate over a much more restricted outcome space, where the final judgment reduces to choosing between two responses.
Consequently, a GRM may reach the correct preference judgment even when its critique is unreliable
\citep{lai2026ralignenhancinggenerativereward}.
Such spurious successes can still be reinforced during training and subsequently affect downstream tasks such as policy optimization and candidate selection.
Therefore, reliable GRM training requires supervision of both the final preference judgment and the quality of its supporting critique.

Motivated by this limitation, recent work has explored finer-grained supervision for GRM critiques along two directions.
One line derives process supervision from human critiques: RM-NLHF \citep{wang2026rewardmodelingnaturallanguage} uses critique similarity as a process reward and extends it to outcome-only data, while RC-GRPO \citep{wang2026outcomeaccuracyenoughaligning} rewards rationale consistency with human critiques.
However, these methods reduce human critiques to scalar process rewards, leaving their fine-grained diagnostic information underutilized beyond scoring.
Another line uses explicit rubrics to structure evaluation: Rubric-ARM \citep{xu2026alternatingreinforcementlearningrubricbased} jointly optimizes a rubric generator and judge through alternating RL. However, the rubric generator is supervised only by preference correctness rather than human critiques, and both modules are required at inference.

Prior work highlights the value of human critiques for fine-grained
supervision and rubrics as explicit evaluative criteria.
However, since high-quality human critiques are costly to collect,
only a small fraction of preference data is accompanied by them~\citep{akyürek2023rl4fgeneratingnaturallanguage}.
We therefore seek to learn from these limited critiques what evidence
a well-grounded GRM critique should cover and extend such evaluative
criteria to outcome-only preference data.
Rather than reducing this supervision to scalar rewards alone, we
retain its diagnostic structure, allowing the same human-grounded
process supervision to support both process credit assignment and
diagnosis-guided exploration
\citep{li2026mulferlenhancingreinforcementlearning}.

To this end, we propose \textbf{EnGRICH}
(\textbf{En}hancing \textbf{G}enerative \textbf{R}eward Modeling
w\textbf{I}th \textbf{C}ritiques from \textbf{H}umans), a training
framework that pairs a GRM with a MetaCritic.
MetaCritic is initialized from human critiques to construct
response-specific rubrics and diagnose the evidence coverage and
correctness of GRM critiques.
These diagnoses support GRM optimization in two complementary ways:
they provide process rewards for finer-grained credit assignment and
structured guidance for exploring better critiques.
During GRM training, MetaCritic is further optimized to generalize
human-grounded evaluative criteria to outcome-only data.
MetaCritic is used only during training; inference requires only the
standalone GRM.

Using EnGRICH, we train GRMs on limited human critiques together with
outcome-only preference data and evaluate them across diverse
reward-model benchmarks.
EnGRICH consistently outperforms both outcome-supervised and prior
process-supervised baselines, while further analyses validate the
effectiveness of its process supervision and exploration mechanisms.

\section{Related Work}
\label{sec:related_work}

\paragraph{Generative Reward Models and Fine-Grained Supervision.}
Scalar reward models learn scalar rewards from pairwise preferences
\citep{christiano2017deep,ouyang2022training}.
LLM evaluators such as JudgeLM
\citep{zhu2025judgelmfinetunedlargelanguage},
Prometheus~\citep{kim2024prometheusinducingfinegrainedevaluation},
and Critic-RM~\citep{yu2025criticrm} extend reward modeling with
natural-language critiques, while recent GRMs such as
GenRM~\citep{mahan2024generative},
DeepSeek-GRM~\citep{liu2025inference}, and
RM-R1~\citep{chen2026rmr1rewardmodelingreasoning} generate reasoning before predicting
preferences.
Yet their training still largely relies on final preference correctness,
which may reinforce unreliable critiques that yield correct judgments.
RC-GRPO~\citep{wang2026outcomeaccuracyenoughaligning} directly
derives scalar rationale-consistency rewards from human critiques.
RM-NLHF~\citep{wang2026rewardmodelingnaturallanguage} further learns
a scalar MetaRM to extend critique-based process rewards to
outcome-only data, while R-Align~\citep{lai2026ralignenhancinggenerativereward} similarly
uses a Meta-Judge to assess rationale alignment with gold rationales.
These methods improve process supervision but largely compress
fine-grained diagnostic information into scalar or holistic signals.
Another line uses explicit rubrics to structure evaluation.
Rubric-ARM~\citep{xu2026alternatingreinforcementlearningrubricbased}
jointly learns a rubric generator and a judge, but optimizes rubric
generation through preference correctness without explicit supervision
from human critiques, and requires both modules at inference.
EnGRICH instead learns human-grounded, instance-specific rubrics from
limited human critiques, generalizes them to outcome-only preference
data, and uses the resulting diagnoses both as process rewards and as
feedback for targeted revision.

% RLCER~\citep{sheng2026reinforcingchainofthoughtreasoningselfevolving} uses self-evolving rubrics to provide process supervision for reasoning trajectories.
% Rubric-ARM~\citep{xu2026alternatingreinforcementlearningrubricbased}
% jointly learns a rubric generator and a judge, but optimizes rubric
% generation through preference correctness without explicit supervision
% from human critiques, and requires both modules at inference.
% EnGRICH instead learns human-grounded, instance-specific rubrics from
% limited human critiques, generalizes them to outcome-only preference
% data, and uses the resulting diagnoses both as process rewards and as
% feedback for targeted revision.

\paragraph{Learning from Natural-Language Feedback.}
Our use of fine-grained diagnostic signals to guide evaluation
trajectories is inspired by broader work on learning from
natural-language feedback.
ILF~\citep{scheurer2024traininglanguagemodelslanguage} trains on feedback-conditioned
refinements, while Self-Refine~\citep{madaan2023selfrefineiterativerefinementselffeedback},
Reflexion~\citep{shinn2023reflexionlanguageagentsverbal}, and
RL4F~\citep{akyürek2023rl4fgeneratingnaturallanguage} use critiques or reflections to revise
model outputs.
More recent methods incorporate such refinement into online
reinforcement learning.
Critique-GRPO~\citep{zhang2026critiquegrpoadvancingllmreasoning} learns from both initial
solutions and critique-guided refinements, while
MulFeRL~\citep{li2026mulferlenhancingreinforcementlearning} invokes
verbal feedback on failed rollouts and assigns credit to
verifier-confirmed progress.
These methods primarily improve problem-solving outputs or policies.
We argue that fine-grained supervision can serve two complementary roles
in GRMs: providing scalar process rewards for process credit assignment
and keypoint-level diagnosis for guided exploration.

\begin{figure*}[t]
    \centering
    \includegraphics[
        width=\textwidth
    ]{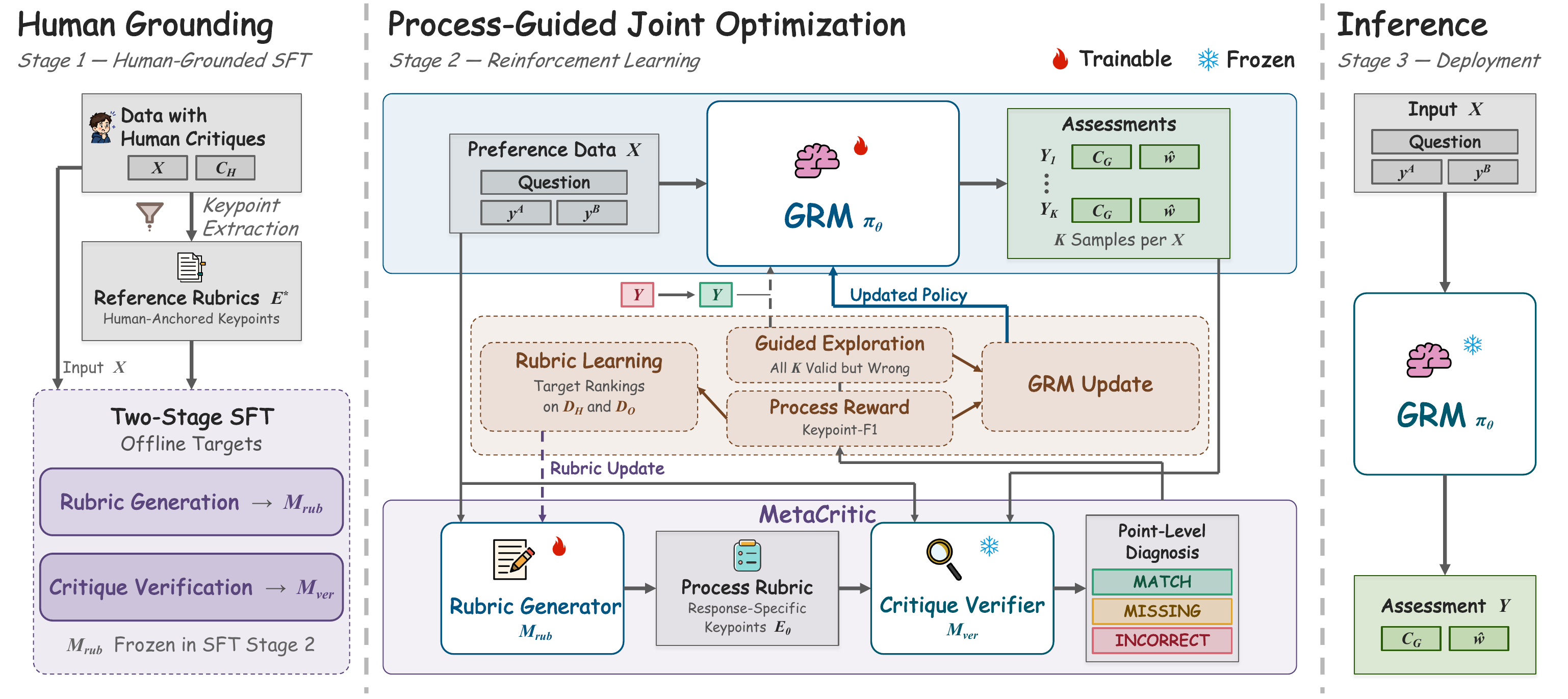}
    \caption{
        Illustration of EnGRICH.
    }
    \label{fig:main}
    \vspace{-1.0em}
\end{figure*}

\section{Method}
\label{sec:method}

\subsection{Problem Formulation}
\label{sec:problem}

A GRM evaluates two candidate responses by generating a comparative
critique followed by a preference judgment.
Given an input $X=(x,y^A,y^B)$, where $x$ is the question and
$y^A,y^B$ are the candidate responses, the GRM $\pi_\theta$
produces an assessment $Y=(C_G,\hat{w})$, where $C_G$ is the generated
critique and $\hat{w}\in\{A,B\}$ is the predicted preference.
Given the cost of collecting human critiques~\citep{akyürek2023rl4fgeneratingnaturallanguage},
we consider a training setting where only a small fraction of the data
is accompanied by human critiques, while the majority provides only
preference labels.
We therefore define a human-critique subset
$\mathcal D_H=\{(X,w,C_H)\}$ and a larger outcome-only subset
$\mathcal D_O=\{(X,w)\}$, where $w$ denotes the gold preference and
$C_H$ is the corresponding human critique.

Our goal is to learn from the limited human critiques the key evaluation
points that characterize a well-grounded critique, and to generalize these
evaluative criteria to supervise GRM training on broader preference data.
This enables the GRM to produce reliable critiques and accurate preference
judgments across diverse tasks.

\subsection{Overview}
\label{sec:overview}

As illustrated in Figure~\ref{fig:main}, \textbf{EnGRICH} turns a small
set of human critiques into reusable process supervision for GRM
optimization.
MetaCritic uses the rubric generator $M_{\mathrm{rub}}$ to construct
response-specific rubrics and the verifier $M_{\mathrm{ver}}$ to
diagnose the evidence coverage and correctness of GRM critiques.
The resulting signals support GRM optimization in two complementary
ways: they provide process rewards for fine-grained credit assignment
and structured guidance for exploring better critiques when outcome
supervision provides limited learning signal.
During GRM training, MetaCritic is further optimized with target
rankings on $\mathcal D_H$ and $\mathcal D_O$ to generalize
human-grounded evaluative criteria to outcome-only data.
MetaCritic is used only during training, while the GRM operates
independently at inference.
We next describe the core mechanisms of EnGRICH in detail.

\subsection{From Human Critiques to Process Supervision}
\label{sec:process_supervision}

Human critiques reveal the keypoints that support a preference judgment.
A reliable GRM assessment should predict the correct preference and accurately address the keypoints.
The challenge is therefore twofold: transforming limited human critiques
into keypoint-level process supervision, and generalizing the resulting
human-grounded evaluative criteria to broader outcome-only data.
We introduce MetaCritic to construct response-specific keypoints, assess
GRM critiques against them, and extend this supervision beyond the
human-critiqued data.

MetaCritic consists of a rubric generator $M_{\mathrm{rub}}$ and a
critique verifier $M_{\mathrm{ver}}$.
Given an input $X$, $M_{\mathrm{rub}}$ generates a rubric $\mathcal E$
consisting of the keypoints that a well-grounded critique of $X$ should
cover.
Given $X$, a rubric $\mathcal E$, and a generated critique $C_G$,
$M_{\mathrm{ver}}$ labels each keypoint as \texttt{MATCH} if
$C_G$ covers it correctly, \texttt{MISSING} if $C_G$ omits it
or covers it incompletely, and \texttt{INCORRECT} if $C_G$
misstates it.
Let $m$, $u$, and $e$ denote the numbers of keypoints with each label,
respectively.

This design turns the same keypoint verification into two complementary
training signals.
We aggregate the keypoint labels into a scalar process reward that
quantifies critique quality beyond preference correctness, while
retaining the rubric and its labels as structured feedback to guide
exploration toward better critiques.
To make the process reward account for both keypoint coverage and
correctness, we draw on precision and recall in information
retrieval~\citep{manning2008introduction} and define it as
\emph{Keypoint-F1}:
\begin{equation}
P=\frac{m}{m+e},\qquad
R=\frac{m}{m+u+e},\qquad
q(\mathcal E,C_G)=\frac{2PR}{P+R}
=\frac{2m}{2m+u+2e}.
\label{eq:process_score}
\end{equation}
where \texttt{INCORRECT} keypoints reduce both precision and recall,
whereas \texttt{MISSING} keypoints reduce only recall\footnote{We omit the fixed input $X$ from the notation of
$q(\mathcal E,C_G)$ for brevity.}.

% \begin{equation}
% q(\mathcal E,C_G)=\frac{2m}{2m+u+2e},
% \label{eq:process_score}
% \end{equation}
% where \texttt{INCORRECT} keypoints reduce both precision and recall,
% whereas \texttt{MISSING} keypoints reduce only recall.

\paragraph{Human-grounded initialization of MetaCritic.}
We initialize MetaCritic on $\mathcal D_H$ in two SFT stages.
First, $M_{\mathrm{rub}}$ learns to generate rubrics from inputs
$X$ in $\mathcal D_H$ using offline targets that a frozen external
LLM $\mathcal A$ derives from the human critique $C_H$.
We then freeze $M_{\mathrm{rub}}$ and train $M_{\mathrm{ver}}$ to label
critiques sampled from the initial GRM against rubrics generated by
$M_{\mathrm{rub}}$, matching its inputs during GRM training.
The corresponding keypoint-level labels are also constructed by $\mathcal A$.
Both components are implemented as independent LoRA adapters over a
shared frozen backbone with the same architecture as the GRM.

\subsection{Learning with Human-Grounded Process Supervision}
\label{sec:grm_optimization}

Keypoint verification supervises the GRM in two ways: scalar process
rewards distinguish critique quality among correct assessments, while
keypoint-guided exploration recovers learning signal in groups where
all assessments fail.
Alongside GRM optimization, we train $M_{\mathrm{rub}}$ to extend
the evaluative criteria learned from human critiques to outcome-only
data.

\paragraph{Optimizing the GRM with process supervision.}
For each $X$, we sample $K$ assessments
$Y_k=(C_{G,k},\hat w_k)$ from the GRM $\pi_\theta$ and a rubric
$\mathcal E_0\sim M_{\mathrm{rub}}(\cdot\mid X)$, and combine preference
correctness with Keypoint-F1:
\begin{small}
\begin{equation}
R_k=
\begin{cases}
-1, & \text{invalid format},\\
0, & \text{valid format},\ \hat w_k\neq w,\\
1+\lambda\, q(\mathcal E_0,C_{G,k}),
& \text{valid format},\ \hat w_k=w,
\end{cases}
\label{eq:grm_credit}
\end{equation}
\end{small}
where $\lambda\geq0$ weights the process reward.
We grant process credit only to correct predictions, since rewarding
process quality for outcome-incorrect assessments can decouple process
from outcome and invite reward hacking~\citep{sun2025stabilizinglongtermmultiturnreinforcement}.
Thus, Keypoint-F1 provides bounded process-level differentiation among
correct assessments.

For failed assessments, keypoint verification instead guides exploration
to recover the learning signal.
When all $K$ assessments have valid formats but incorrect predictions,
their rewards tie at zero and group-relative advantages vanish.
Yet these assessments need not fail in the same way: $M_{\mathrm{ver}}$
reveals which keypoints in $\mathcal E_0$ are \texttt{MISSING} or
\texttt{INCORRECT} in each critique.

Inspired by prior refinement methods with external
guidance~\citep{
zhang2026critiquegrpoadvancingllmreasoning,
li2026mulferlenhancingreinforcementlearning},
we use the sampled rubric $\mathcal E_0$ and its keypoint labels as
privileged information to guide exploration toward a critique
that better addresses the identified gaps.
Specifically, we randomly select one failed assessment in the group
for revision.
The exploration prompt includes the rubric and keypoint labels but withholds the gold preference $w$ and prediction correctness, so that the new assessment is guided by keypoint-level information while preventing the model from simply flipping its prediction based on outcome information.
If the resulting assessment has a valid format and predicts the correct
preference, it replaces its parent and is rescored under $\mathcal E_0$;
otherwise, the group remains unchanged.
Successful exploration thus converts keypoint-level guidance into reward
contrast, restoring learning signal in an otherwise uninformative group.
We optimize the GRM with
GRPO~\citep{shao2024deepseekmathpushinglimitsmathematical},
centering rewards within the resulting group:
\begin{small}
\begin{equation}
A_k=
R_k-\operatorname{mean}_j(R_j),
\qquad
\mathcal L_{\mathrm{GRM}}(\theta)
=
\frac{1}{K}\sum_{k=1}^{K}
\ell(\theta;X,Y_k,A_k),
\label{eq:grm_objective}
\end{equation}
\end{small}
where $\ell$ is the token-level GRPO loss.
For assessments obtained through guided exploration, the standard
importance ratio does not match the policy that generated them.
Following prior work~\citep{
zhang2026critiquegrpoadvancingllmreasoning,
yan2025learningreasonoffpolicyguidance},
we evaluate them under the policy conditioned only on $X$ and apply
$f(r_t)=r_t/(r_t+\gamma)$, where
$r_t=\pi_\theta(y_t\mid X,y_{<t})/
\pi_{\theta_{\mathrm{old}}}(y_t\mid X,y_{<t})$ and $0<\gamma<1$.
This encourages the GRM to internalize successful explorations from
$X$ alone.

\paragraph{Generalizing human-grounded rubric generation.}

Human-grounded SFT in \S\ref{sec:process_supervision} initializes
$M_{\mathrm{rub}}$ with evaluative criteria from human critiques.
During GRM training, we further optimize $M_{\mathrm{rub}}$ on
$\mathcal D_H$ and $\mathcal D_O$ to retain these criteria and
generalize them to outcome-only data, while keeping
$M_{\mathrm{ver}}$ fixed for consistent keypoint verification.

We train $M_{\mathrm{rub}}$ with complementary supervision for
human-grounded retention and outcome-guided adaptation.
On $\mathcal D_H$, the human-derived reference rubric $\mathcal E^*$
scores the same GRM critiques with Keypoint-F1.
Together with preference correctness, these scores define a reference
ranking: correctness determines the primary ordering, while critiques
with the same correctness are ordered by their scores under
$\mathcal E^*$.
Each generated rubric then scores the same critiques, and is rewarded
according to how well its induced ranking matches this reference
ranking.

On $\mathcal D_O$, where human critiques are unavailable, preference
correctness provides coarser target rankings under the same reward form.
These rankings provide RL supervision for $M_{\mathrm{rub}}$, initialized
by human-grounded SFT on $\mathcal D_H$, to generalize the evaluative
criteria learned from human critiques to broader outcome-only data
without reference rubrics.
Meanwhile, continued supervision on $\mathcal D_H$ encourages
$M_{\mathrm{rub}}$ to retain fine-grained distinctions in keypoint
coverage and correctness beyond the final judgment.

Specifically, for each $X$, we sample
$\{\mathcal E_g\}_{g=0}^{G_X-1}$ from
$M_{\mathrm{rub}}(\cdot\mid X)$ and apply them to the same $K$
original GRM critiques; only $\mathcal E_0$ supplies GRM process
rewards and exploration guidance.
Let $c_i=\mathbb{I}[\hat w_i=w]$ denote preference correctness.
On $\mathcal D_H$, let
$q_i^*=q(\mathcal E^*,C_{G,i})$ denote the reference-rubric score.
For format-valid assessments, we define:
\begin{small}
\begin{equation}
\begin{aligned}
\mathcal P_X &=
\begin{cases}
\{(i,j):c_i>c_j\}
\cup
\{(i,j):c_i=c_j,\ q_i^*>q_j^*\},
& \text{on }\mathcal D_H,\\
\{(i,j):c_i>c_j\},
& \text{on }\mathcal D_O,
\end{cases}
\\[2pt]
r_g^{\mathrm{rub}}
&=
\frac{1}{|\mathcal P_X|}
\sum_{(i,j)\in\mathcal P_X}
\log\sigma\!\left(
q(\mathcal E_g,C_{G,i})
-
q(\mathcal E_g,C_{G,j})
\right).
\end{aligned}
\label{eq:rubric_ranking_reward}
\end{equation}
\end{small}
where $\sigma$ is the logistic sigmoid.
Inputs with $\mathcal P_X=\varnothing$ are skipped for rubric optimization.
The resulting rewards are centered across the $G_X$ sampled rubrics
without standard-deviation normalization and used to optimize
$M_{\mathrm{rub}}$ with GRPO.

% Specifically, for each $X$, we sample
% $\{\mathcal E_g\}_{g=0}^{G_X-1}$ from
% $M_{\mathrm{rub}}(\cdot\mid X)$ and apply them to the same $K$
% unrevised GRM critiques; only $\mathcal E_0$ supplies GRM process
% rewards and revision feedback.
% Let $c_i=\mathbb{I}[\hat w_i=w]$ and
% $q_i^*=q_{\mathrm{ver}}(\mathcal E^*,C_{G,i})$ denote preference
% correctness and the reference-rubric score, respectively.
% For format-valid assessments, we define:
% \begin{small}
% \begin{equation}
% \begin{aligned}
% \mathcal P_X &=
% \begin{cases}
% \{(i,j):c_i>c_j\}
% \cup
% \{(i,j):c_i=c_j,\ q_i^*>q_j^*\},
% & \text{on }\mathcal D_H,\\
% \{(i,j):c_i>c_j\},
% & \text{on }\mathcal D_O,
% \end{cases}
% \\[2pt]
% r_g^{\mathrm{rub}}
% &=
% \frac{1}{|\mathcal P_X|}
% \sum_{(i,j)\in\mathcal P_X}
% \log\sigma\!\left(
% q_{\mathrm{ver}}(\mathcal E_g,C_{G,i})
% -
% q_{\mathrm{ver}}(\mathcal E_g,C_{G,j})
% \right),
% \end{aligned}
% \label{eq:rubric_ranking_reward}
% \end{equation}
% \end{small}
% where $\sigma$ is the logistic sigmoid.

\section{Experiments}

\subsection{Experimental Settings}
\label{sec:experimental_settings}

\paragraph{Datasets.}
Following the data-mixture design of RM-NLHF~\citep{wang2026rewardmodelingnaturallanguage},
we construct a 60K pairwise preference mixture with 12K human-critique
examples $\mathcal{D}_H$ and 48K outcome-only examples $\mathcal{D}_O$.
$\mathcal{D}_H$ is drawn from HelpSteer3~\citep{wang2025helpsteer3} and contains
natural-language human critiques.
$\mathcal{D}_O$ combines the Skywork-Reward-Preference-80K mixture
~\citep{liu2024skyworkrewardbagtricksreward},
Code-Preference-Pairs,
Math-Step-DPO-10K~\citep{lai2024stepdpostepwisepreferenceoptimization},
HH-RLHF~\citep{bai2022traininghelpfulharmlessassistant},
Tulu-3~\citep{lambert2025tulu3pushingfrontiers},
and LitBench-Train~\citep{fein-etal-2026-litbench},
covering general assistance, code, mathematics, safety,
instruction following, and writing.
We evaluate on seven benchmarks:
HelpSteer3 (HS3)~\citep{wang2025helpsteer3},
RewardBench~2 (RB2)~\citep{malik2026rewardbench2advancingreward},
RM-Bench (RM-B)~\citep{liu2024rmbenchbenchmarkingrewardmodels},
SCAN-HPD (SCAN)~\citep{wang2026scan},
HREF-HA (HREF)~\citep{lyu2025href},
LitBench~\citep{fein-etal-2026-litbench},
and WQ-Arena (WQ)~\citep{chakrabarty2025aipolish}.
\footnote{Full dataset details are provided in \S~\ref{app:datasets}.}

\paragraph{Baselines.}
We compare against six baselines:
(1) \textit{Base}, the backbone used directly as a GRM;
(2) \textit{SFT}, finetuned on human critiques in $\mathcal D_H$;
(3) \textit{DPO}~\citep{rafailov2024dpo}, optimizing pairwise preferences;
(4) \textit{GRPO}~\citep{shao2024deepseekmathpushinglimitsmathematical},
using format and preference correctness as rewards;
(5) \textit{RC-GRPO}~\citep{wang2026outcomeaccuracyenoughaligning},
which requires human rationales and is thus limited to $\mathcal D_H$,
using rationale consistency as process supervision; and
(6) \textit{RM-NLHF}~\citep{wang2026rewardmodelingnaturallanguage},
which learns process rewards from human-critique similarity on
$\mathcal D_H$ and extends them to $\mathcal D_O$ with a scalar estimator.
Both process-supervised baselines combine their process rewards with
correctness and are optimized with GRPO. Representative public reward models are included for reference only due to differences in training data and backbones.

% \paragraph{Baselines.}
% We focus on six baselines:
% (1) \textit{Base}, the backbone used directly as a GRM;
% (2) \textit{SFT}, supervised finetuning on the human critiques in $\mathcal{D}_H$;
% (3) \textit{DPO}~\citep{rafailov2024dpo}, which directly optimizes pairwise preferences;
% (4) \textit{GRPO}~\citep{shao2024deepseekmathpushinglimitsmathematical}, which optimizes a correctness reward (format compliance and preference accuracy);
% (5)\textit{RC-GRPO}~\citep{wang2026outcomeaccuracyenoughaligning},
% which requires human rationales and is therefore limited to
% $\mathcal D_H$, using rationale consistency as process supervision and
% (6) \textit{RM-NLHF}~\citep{wang2026rewardmodelingnaturallanguage}, which derives a process reward from critique similarity to human critiques in $\mathcal{D}_H$ and trains an LLM-based scalar estimator to extend this supervision to $\mathcal{D}_O$.
% Both methods combine their process supervision with the correctness reward and are optimized with GRPO.
% Several representative public reward models are additionally included as references.

\paragraph{Implementation Details.}
We use Qwen3-4B~\citep{yang2025qwen3technicalreport} in non-thinking mode
and Gemma3-12B-Instruct (Gemma3-12B)~\citep{gemmateam2025gemma3technicalreport}
as backbones.
The GRM, rubric generator $M_{\mathrm{rub}}$, and critique verifier $M_{\mathrm{ver}}$
use separate LoRA~\citep{hu2021loralowrankadaptationlarge} adapters over
a frozen backbone.
For training examples with human-derived supervision available from prior
work~\citep{wang2026rewardmodelingnaturallanguage}, a frozen
Qwen3.7-Max\footnote{\url{https://www.alibabacloud.com/help/en/model-studio/qwen3-7-max}}
teacher constructs reference rubrics and annotates keypoint-level
relations.
The two MetaCritic SFT stages each run for one epoch, while the SFT
baseline is trained for three epochs. Each RL method is trained for one pass over its available training data.
Training uses 32 H20 GPUs and inference uses 8 H20 GPUs.
At inference, the GRM uses greedy decoding.
\footnote{Implementation details, settings, and reward-model sources are provided
in \S~\ref{app:impl}.}

\definecolor{judgeband}{RGB}{235,232,250}
\definecolor{rmband}{RGB}{228,240,249}
\definecolor{trainband}{RGB}{244,220,220}
\definecolor{backboneband}{RGB}{252,242,242}

\begin{table*}[t]
\centering
\caption{
Results across seven reward-model benchmarks.
\textbf{Bold} and \underline{underlined} indicate the best and second-best
controlled results within each backbone; Overall is the unweighted mean.
}
\label{tab:main_results}

\small
\setlength{\tabcolsep}{3.0pt}
\renewcommand{\arraystretch}{1.07}

\begin{tabular*}{\textwidth}{
@{\extracolsep{\fill}}
>{\raggedright\arraybackslash}p{0.18\textwidth}
!{\color{black}\vrule width 0.6pt}
*{7}{>{\centering\arraybackslash}p{0.080\textwidth}}
!{\color{black!25}\vrule width 0.4pt}
>{\centering\arraybackslash}p{0.080\textwidth}
@{}
}
\toprule

\textbf{Model / Method}
& \textbf{HS3}
& \textbf{RB2}
& \textbf{RM-B}
& \textbf{SCAN}
& \textbf{HREF}
& \textbf{LitBench}
& \textbf{WQ}
& \textbf{Overall} \\

\midrule

% ============================================================
% Open-Weight Reward Models
% ============================================================
\rowcolor{rmband}
\multicolumn{9}{c}{\textit{Open-Weight Reward Models}} \\

GRAM-Qwen3-4B
& 74.17
& 66.01
& 66.87
& 74.44
& 60.78
& 62.24
& 55.03
& 65.65 \\

GRAM-R$^2$-3B
& 76.37
& 50.94
& 64.18
& 69.97
& 63.19
& 62.64
& 59.16
& 63.78 \\

CE-RM-4B
& 73.44
& 70.76
& 79.50
& 68.85
& 59.66
& 57.42
& 52.78
& 66.06 \\

UltraRM-13B
& 74.60
& 53.07
& 57.20
& 74.31
& 58.26
& 69.80
& 59.06
& 63.76 \\

RM-R1-14B
& 76.42
& 65.25
& 81.11
& 77.32
& 64.79
& 65.10
& 58.86
& 69.84 \\

UniRRM-14B
& 79.14
& 72.65
& 82.65
& 80.83
& 62.91
& 63.08
& 56.76
& 71.15 \\

\midrule

% ============================================================
% Controlled Training
% ============================================================
\rowcolor{trainband}
\multicolumn{9}{c}{\textit{Controlled Training}} \\

% ------------------------------------------------------------
% Qwen3-4B
% ------------------------------------------------------------
\multicolumn{9}{
@{}>{\columncolor{backboneband}[0pt][0pt]}l@{}
}{
\textit{Qwen3-4B}
} \\

Base
& 72.07
& 52.34
& 69.88
& 75.72
& 52.12
& 62.76
& 55.49
& 62.91 \\

SFT
& 72.23
& 46.23
& 65.14
& 72.20
& 57.02
& 60.50
& 56.56
& 61.41 \\

DPO
& 74.29
& 59.98
& 72.05
& \underline{76.20}
& 56.97
& \underline{64.27}
& \underline{57.02}
& 65.83 \\

GRPO
& \underline{75.16}
& \underline{62.28}
& 73.81
& 74.60
& 56.22
& 63.33
& 56.10
& 65.93 \\

RC-GRPO
& 74.53
& 60.37
& 73.62
& 74.44
& \underline{58.16}
& 63.45
& 56.46
& 65.86 \\

RM-NLHF
& 75.00
& 61.50
& \underline{73.98}
& 75.24
& 57.20
& 63.79
& \underline{57.02}
& \underline{66.25} \\

\textbf{EnGRICH}
& \textbf{77.06}
& \textbf{65.42}
& \textbf{75.12}
& \textbf{77.48}
& \textbf{62.60}
& \textbf{65.69}
& \textbf{61.15}
& \textbf{69.22} \\

\addlinespace[2pt]

% ------------------------------------------------------------
% Gemma3-12B
% ------------------------------------------------------------
\multicolumn{9}{
@{}>{\columncolor{backboneband}[0pt][0pt]}l@{}
}{
\textit{Gemma3-12B}
} \\

Base
& 71.29
& 44.14
& 65.84
& 76.04
& 57.75
& 67.04
& 59.62
& 63.10 \\

SFT
& 74.34
& 52.90
& 66.47
& 75.72
& 61.73
& 63.75
& 61.46
& 65.20 \\

DPO
& \underline{76.42}
& 52.81
& 69.82
& \underline{77.48}
& 60.38
& \underline{69.33}
& 61.77
& 66.86 \\

GRPO
& 75.78
& \underline{55.61}
& \underline{72.56}
& 75.24
& \underline{64.27}
& 66.67
& \underline{61.92}
& \underline{67.44} \\

RC-GRPO
& 74.90
& 53.62
& 71.21
& 75.56
& 63.50
& 65.81
& 61.61
& 66.60 \\

RM-NLHF
& 75.62
& 54.80
& 71.90
& 76.04
& 63.40
& 67.20
& 61.51
& 67.21 \\

\textbf{EnGRICH}
& \textbf{77.71}
& \textbf{59.85}
& \textbf{74.66}
& \textbf{78.27}
& \textbf{66.91}
& \textbf{69.58}
& \textbf{65.24}
& \textbf{70.32} \\

\bottomrule
\end{tabular*}
\vspace{-1.0em}
\end{table*}

\subsection{Main Results}
\label{exp:main}
As shown in Table~\ref{tab:main_results}, EnGRICH consistently improves
performance across all seven reward-model benchmarks and both backbones,
demonstrating its effectiveness across diverse evaluation settings.
The comparison with process-supervised baselines further highlights the
mechanism of EnGRICH.
RC-GRPO uses human-derived rationale consistency as process supervision,
while RM-NLHF extends human-critique supervision through a learned scalar
process reward.
EnGRICH instead couples rubric learning with GRM optimization:
$M_{\mathrm{rub}}$ learns human-grounded evaluative criteria on
$\mathcal D_H$ and generalizes them to outcome-only data $\mathcal D_O$,
while keypoint verification turns these criteria into both scalar process
rewards for fine-grained credit assignment and keypoint-level guidance
for exploration in all-failed groups.
This combination allows limited human critiques to supervise both
critique quality and exploration, leading to stronger and more
consistent gains.

\begin{figure}[t]
    \centering

    \begin{subfigure}[t]{0.52\linewidth}
        \centering
        \includegraphics[
            width=\linewidth,
            height=4cm
        ]{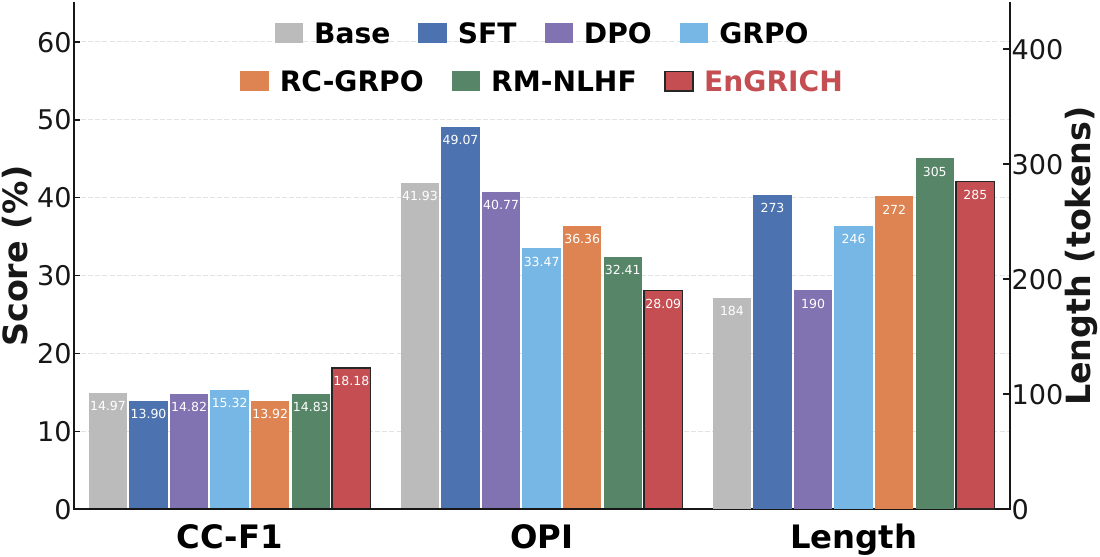}
        \caption{Direct critique evaluation.}
        \label{fig:critique-quality}
    \end{subfigure}
    \hfill
    \begin{subfigure}[t]{0.44\linewidth}
        \centering
        \includegraphics[
            width=\linewidth,
            height=4cm
        ]{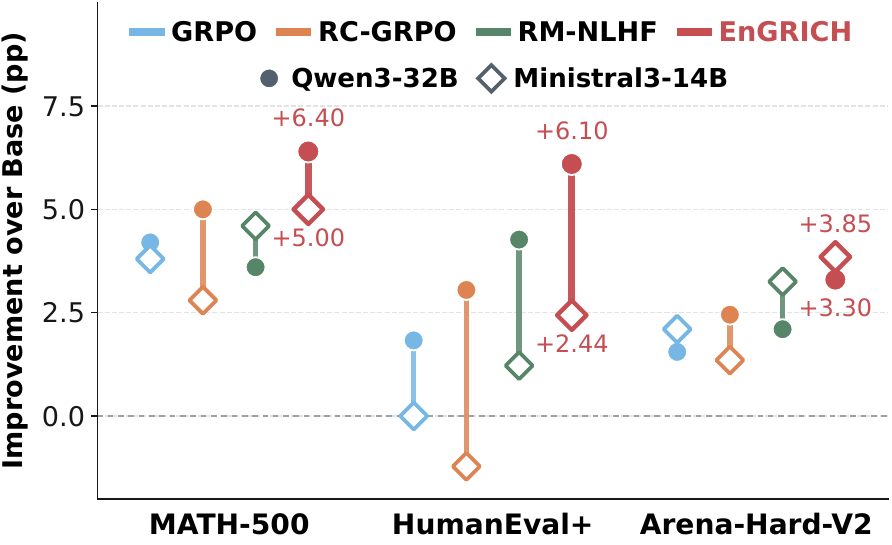}
        \caption{Downstream revision evaluation.}
        \label{fig:feedback-usability}
    \end{subfigure}

\caption{
Critique quality evaluation.
All compared methods use Qwen3-4B as the GRM backbone.
Left: CC-F1, OPI, and average assessment length on HelpSteer3 validation set;
higher CC-F1 and lower OPI indicate better critique quality.
Right: response-revision gains over direct-answer baselines.}
    \label{fig:critique-analysis}
    % \vspace{-2.0em}
\end{figure}

\subsection{Critique Quality Evaluation}
\label{sec:evidence_quality}

We further evaluate the quality of the generated critiques from direct and downstream perspectives.
For direct evaluation, we adopt two critique-quality metrics:
\textbf{Common Correct F1 (CC-F1)}~\citep{wang2026outcomeaccuracyenoughaligning}
and \textbf{Outcome Process Inconsistency (OPI)}~\citep{wang2026rewardmodelingnaturallanguage}.
CC-F1 is computed on examples where all compared GRMs predict the correct
preference and measures the F1 agreement between GRM critiques and human critiques.
OPI measures the fraction of correctly predicted examples with unreliable
critiques, defined by core-argument agreement with the human critique of at most
$0.5$.
Both metrics are computed on the HelpSteer3 validation set with GPT-5.5 as the judge.
As shown in Figure~\ref{fig:critique-quality}, EnGRICH achieves higher CC-F1
and lower OPI than the baselines, while comparable critique lengths indicate
that the gains do not simply come from longer critiques.
For downstream evaluation, we use generated critiques to revise responses on MATH-500~\citep{lightman2023letsverifystepstep}, HumanEval+~\citep{liu2023codegeneratedchatgptreally}, and Arena-Hard-v2.0~\citep{pmlr-v267-li25h}, with frozen Qwen3-32B~\citep{yang2025qwen3technicalreport} and Ministral3-14B~\citep{liu2026ministral3} as editors. Figure~\ref{fig:feedback-usability} shows that EnGRICH consistently
yields larger revision gains, suggesting that
its critiques provide more informative and actionable evaluative
evidence.
Together, the direct and downstream evaluations show that EnGRICH
improves critique quality rather than merely increasing preference
accuracy.

\section{Analysis}

\subsection{Ablation Study}
\label{sec:ablation}

\begin{table*}[t]
\centering
\caption{
Ablation study of EnGRICH on Gemma3-12B.
}
\label{tab:ablation}

\small
\setlength{\tabcolsep}{2.5pt}
\renewcommand{\arraystretch}{1.07}

\begin{tabular*}{\textwidth}{
@{\extracolsep{\fill}}
l
*{7}{c}
*{2}{>{\centering\arraybackslash}p{0.070\textwidth}}
@{}
}
\toprule

&
\multicolumn{7}{c}{
\textit{Benchmark Performance}
}
&
\multicolumn{2}{c}{
\textit{Critique Quality}
}
\\

\cmidrule(lr){2-8}
\cmidrule(lr){9-10}

\textbf{Method}
& \textbf{HS3}
& \textbf{RB2}
& \textbf{RM-B}
& \textbf{SCAN}
& \textbf{HREF}
& \textbf{LitBench}
& \textbf{WQ}
& \textbf{CC-F1}
& \textbf{OPI} \\

\midrule

\textbf{EnGRICH}
& \textbf{77.71}
& \textbf{59.85}
& \textbf{74.66}
& 78.27
& 66.91
& \textbf{69.58}
& \textbf{65.24}
& \textbf{19.36}
& \textbf{17.14} \\

\midrule

w/o Process Reward
& 76.74
& 56.90
& 72.60
& \textbf{78.43}
& 63.15
& 67.16
& 62.07
& 14.96
& 25.81 \\

w/o Guided Exploration
& 77.31
& 55.70
& 74.10
& 77.64
& \textbf{66.94}
& 69.15
& 64.22
& 18.22
& 19.92 \\

w/o Rubric Generalization
& 76.98
& 53.95
& 71.60
& 75.88
& 61.79
& 66.05
& 60.49
& 16.37
& 23.02 \\

w/o $\mathcal D_O$ Process Supervision
& 77.15
& 54.65
& 72.20
& 76.84
& 62.71
& 66.57
& 61.41
& 17.44
& 22.14 \\

\bottomrule
\end{tabular*}

\end{table*}

In Table~\ref{tab:ablation}, we ablate four components of
EnGRICH to examine process credit assignment, keypoint-guided
exploration, rubric generalization, and the transfer of process
supervision to outcome-only data.
\textit{w/o Process Reward} sets $\lambda=0$, removing Keypoint-F1
while retaining keypoint-guided exploration.
\textit{w/o Guided Exploration} removes exploration in all-failed
groups while retaining the process reward.
\textit{w/o Rubric Generalization} freezes $M_{\mathrm{rub}}$ after
human-grounded SFT, removing its continued optimization during GRM
training.
Finally, \textit{w/o $\mathcal D_O$ Process Supervision} removes both
Keypoint-F1 and keypoint-guided exploration on $\mathcal D_O$, leaving
preference correctness as the only GRM supervision on this subset.

The ablation results show complementary benefits from the core
components of EnGRICH.
Removing the process reward causes the largest degradation in critique
quality while also lowering performance on most benchmarks,
supporting the importance of Keypoint-F1 for fine-grained credit
assignment among preference-correct assessments.
Removing guided exploration also generally degrades benchmark
performance, showing complementary gains from recovering learning
signal in all-failed groups.
Freezing $M_{\mathrm{rub}}$ causes the largest average drop in benchmark
performance and clear degradation in critique quality, supporting the
continued generalization of human-grounded evaluative criteria beyond
their SFT initialization.
Finally, restricting process supervision to $\mathcal D_H$ underperforms
full EnGRICH across all benchmarks and critique-quality metrics,
showing the benefit of extending human-grounded process supervision
to $\mathcal D_O$.

\begin{figure}[t]
    \centering

    \begin{subfigure}[t]{0.32\linewidth}
        \centering
        \includegraphics[width=\linewidth]
            {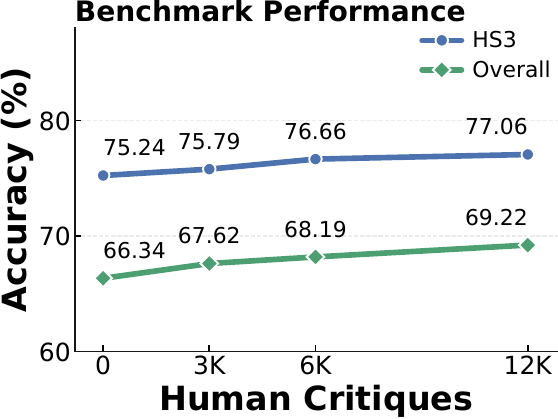}
    \end{subfigure}
    \hfill
    \begin{subfigure}[t]{0.32\linewidth}
        \centering
        \includegraphics[width=\linewidth]
            {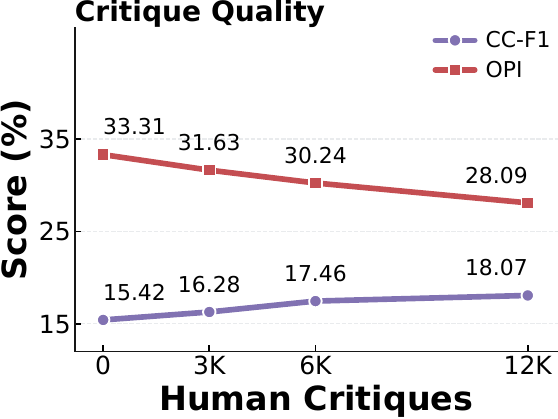}
    \end{subfigure}
    \hfill
    \begin{subfigure}[t]{0.32\linewidth}
        \centering
        \includegraphics[width=\linewidth]
            {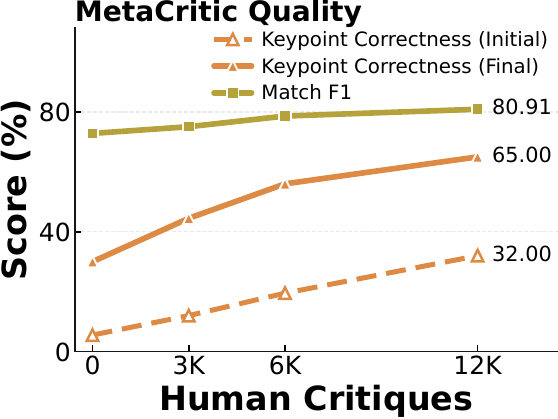}
    \end{subfigure}

\caption{
Scaling human critique supervision with Qwen3-4B as backbone.
We report benchmark performance (left), critique quality (middle),
and MetaCritic quality (right) as human-critique coverage increases.
Initial denotes MetaCritic before online GRM training
(SFT-initialized when human critiques are available), and Final denotes
MetaCritic after GRM training.
}
    \label{fig:human-critique-scaling}
    \vspace{-1.5em}
\end{figure}

\subsection{Scaling Human Critique Supervision}
\label{sec:scaling_human_critique}

We vary the size of the human-critiqued subset $\mathcal D_H$ over
$\{0,3\mathrm{K},6\mathrm{K},12\mathrm{K}\}$, while keeping all $60$K
preference pairs and labels fixed; the remaining examples form the
outcome-only subset $\mathcal D_O$.
Figure~\ref{fig:human-critique-scaling} reports benchmark performance,
critique quality, and MetaCritic quality.
For MetaCritic, GPT-5.5 evaluates sampled HelpSteer3 and RM-Bench
examples: \textit{Keypoint Correctness} measures the fraction of generated
rubrics whose keypoints are all correct, while \textit{Match F1} is the macro-F1
of $M_{\mathrm{ver}}$ over \texttt{MATCH}, \texttt{MISSING}, and
\texttt{INCORRECT} keypoint labels.

Increasing human-critique supervision consistently improves all three
aspects, while rubric generalization further improves rubric quality
across different sizes of $\mathcal D_H$.
Notably, the gains emerge even with limited human critiques.
These results suggest that EnGRICH can generalize human-grounded
evaluative criteria to outcome-only data and reuse them as both scalar
process rewards and keypoint-level guidance for exploration during GRM
optimization.
Thus, limited human critiques can provide reusable supervision beyond
$\mathcal D_H$.
Moreover, $M_{\mathrm{ver}}$ remains consistently accurate, with only
marginal gains from additional human critiques.
This supports keeping $M_{\mathrm{ver}}$ fixed and suggests that the
base model may itself serve as the verifier, further simplifying
EnGRICH.

% We vary the number of HelpSteer3 training examples with human-critique
% supervision across $0$, $3$K, $6$K, and $12$K, while keeping all $60$K
% preference pairs and labels fixed; the remaining examples are treated as
% outcome-only data.
% Figure~\ref{fig:human-critique-scaling} reports benchmark performance,
% critique quality, and MetaCritic quality.
% For MetaCritic, GPT-5.5 evaluates sampled HelpSteer3 and RM-Bench examples
% using Keypoint Correctness for rubric quality and Match F1 for verifier
% accuracy.

% Increasing human-critique supervision consistently improves all three
% aspects, while rubric learning further improves keypoint quality across
% supervision levels.
% Notably, the gains already emerge with sparse human supervision.
% This suggests that EnGRICH can retain and generalize human-derived
% evaluative criteria to outcome-only data, while reusing the resulting
% supervision as scalar process rewards and diagnostic feedback for GRM
% optimization.
% Thus, even limited human critiques can provide reusable supervision
% beyond the annotated subset.
% Moreover, verifier performance remains consistently high with only marginal gains from additional human supervision, supporting our design of keeping the verifier fixed and suggesting that the base model itself may serve as the verifier to further simplify EnGRICH.

\subsection{Transfer to Policy Optimization}
\label{sec:downstream_policy}

Beyond reward-model performance and critique quality, we examine whether
the gains of EnGRICH transfer to downstream policy optimization.
For each GRM variant, we initialize a policy from
Qwen3-4B-Instruct-2507~\citep{yang2025qwen3technicalreport} and optimize
it with GRPO for 100 steps on HelpSteer3 training prompts, using the
corresponding GRM as the reward model.
As shown in Table~\ref{tab:downstream_policy}, the policy trained with
the EnGRICH GRM achieves the strongest performance across all three
downstream benchmarks.
These results indicate that the gains of EnGRICH extend beyond
reward-model evaluation to downstream policy optimization.

\begin{table*}[t]
\centering
\caption{
Downstream performance of Qwen3-4B-Instruct-2507 policies trained with
Gemma3-12B as the GRM backbone. 
}
\label{tab:downstream_policy}

\small
\setlength{\tabcolsep}{8pt}
\renewcommand{\arraystretch}{1.12}

\begin{tabular*}{\textwidth}{
@{\extracolsep{\fill}}
lccc
@{}
}
\toprule

\textbf{Policy Setting}
& \textbf{IFEval P-Strict}
& \textbf{AlpacaEval 2 LC}
& \textbf{Arena-Hard v2 CW} \\

\midrule

Initial policy
& 84.84 & 39.63 & 15.24 \\

\midrule

Outcome-GRPO
& 83.73 & 40.54 & 19.19 \\

RC-GRPO
& 83.55 & 40.21 & 18.87 \\

RM-NLHF
& 84.10 & 40.43 & 19.74 \\

EnGRICH
& \textbf{87.06}
& \textbf{42.80}
& \textbf{21.43} \\

\bottomrule
\end{tabular*}
\end{table*}

\begin{figure*}[t!]
    \centering
    \includegraphics[
        % width=0.98\textwidth
        width=\textwidth
    ]{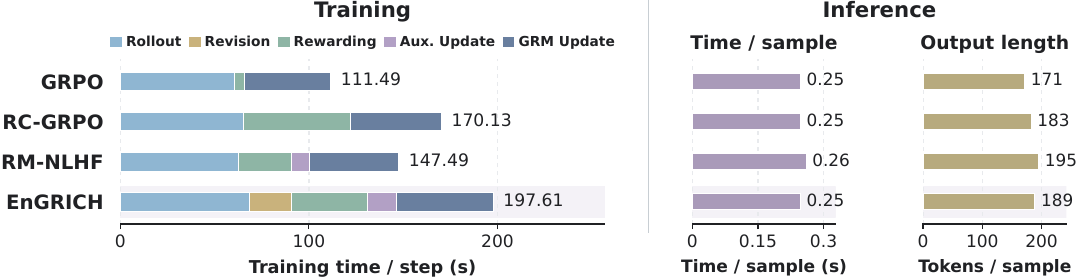}

\caption{
Training and inference efficiency of RL-based GRMs on Gemma3-12B.
\textit{Aux.\ Update} denotes MetaRM updates in RM-NLHF and $M_{\mathrm{rub}}$
updates in EnGRICH. Inference time and output length are averaged per sample across seven benchmarks.
    % Training and inference efficiency of RL-based GRM methods on Gemma3-12B.
    % Aux.\ Update denotes MetaRM updates for RM-NLHF and rubric-generator updates for EnGRICH.
    % GRM Update includes forward/backward computation and optimizer overhead.
    % Inference time and output length are averaged per sample across seven benchmarks.
}
\vspace{-1.0em}
    \label{fig:efficiency}
\end{figure*}

\subsection{Efficiency}
\label{sec:efficiency}

As shown in Figure~\ref{fig:efficiency}, EnGRICH introduces additional 
training overhead over GRPO from Keypoint-F1 verification,
keypoint-guided exploration and rubric
generalization.
This overhead is comparable to RC-GRPO and RM-NLHF, which respectively
add rationale-consistency scoring and a learned process-reward model,
whereas EnGRICH uses MetaCritic to generalize human-grounded evaluative
criteria and provide keypoint-level verification.
At inference, the GRM operates independently of MetaCritic, with
amortized inference time per sample and assessment length comparable
to the baselines.

% \subsection{Efficiency}
% \label{sec:efficiency}

% We measure training and inference costs under the unified setup described in the implementation details.
% As shown in Figure~\ref{fig:efficiency}, EnGRICH incurs additional training cost over outcome-only GRPO due to process-level evaluation and exploration.
% Specifically, sampled assessments are verified against response-specific rubrics, all-failed groups may trigger one diagnosis-guided revision, and the rubric generator is further optimized using evaluation and revision signals.
% Compared with prior process-supervised methods, RC-GRPO mainly introduces rationale-consistency scoring, while RM-NLHF additionally trains a MetaRM to predict process rewards; EnGRICH performs structured keypoint verification together with selective rubric learning and revision.
% Importantly, these components are used only during training.
% At inference time, the trained GRM operates independently of MetaCritic, resulting in latency and output length comparable to the baselines.
% Thus, EnGRICH trades moderate training-time overhead for richer process supervision without introducing additional inference-time modules.

\section{Conclusion}
\label{sec:conclusion}

We study reliable training of generative reward models beyond final
preference correctness.
We propose \textbf{EnGRICH}, which treats human critiques not merely as
additional supervision, but as a source of reusable evaluative criteria
that can be generalized to broader outcome-only data.
Through MetaCritic, these criteria provide both fine-grained process
rewards for credit assignment and structured guidance for exploring
better critiques, while the trained GRM operates independently at
inference.
Experiments across seven reward-model benchmarks and downstream policy
optimization demonstrate consistent improvements, and further analyses
validate the complementary roles of process supervision, guided
exploration, and rubric generalization.
Future work may explore how to better extract, generalize, and reuse
the evaluative criteria in human critiques, enabling more reliable
process supervision across broader data and evaluation settings.

\subsection*{AI use statement}

In this work, GPT-5.6 was used for grammar checking, language
polishing, and assistance with literature search and the identification
of relevant prior work.
Qwen3.7-Max was used to construct reference rubrics from human-derived
keypoints and annotate keypoint-level relations for MetaCritic initialization.
GPT-5.5 was used as an external evaluator for critique-quality,
MetaCritic, and Arena-Hard evaluations.
All AI-assisted writing was manually reviewed by the authors, and
suggested literature was verified against the original sources before
being cited or discussed.
We take full responsibility for the final content and experimental
results of this work.

\subsection*{Ethics statement}

Our work uses existing preference datasets and benchmarks and does not involve the collection of new human-subject data. As with other reward-modeling methods, the learned evaluative criteria may reflect biases or limitations present in the underlying data and human feedback. We encourage careful evaluation across diverse settings when applying the resulting models. We follow the licenses and intended research uses of the datasets and models employed in our experiments.

\subsection*{Reproducibility statement}

To facilitate reproducibility, we provide an anonymous link to our source code and implementation. The main paper describes the proposed method and experimental setup, while the appendix provides additional implementation details, training configurations, dataset and evaluation settings, and baseline specifications needed to reproduce our experiments.

\bibliography{iclr2027_conference}

@misc{zhang2025generativeverifiersrewardmodeling,
      title={Generative Verifiers: Reward Modeling as Next-Token Prediction}, 
      author={Lunjun Zhang and Arian Hosseini and Hritik Bansal and Mehran Kazemi and Aviral Kumar and Rishabh Agarwal},
      year={2025},
      eprint={2408.15240},
      archivePrefix={arXiv},
      primaryClass={cs.LG},
      url={https://arxiv.org/abs/2408.15240}, 
}

@misc{wang2025helpsteer3,
  title={{HelpSteer3-Preference: Open Human-Annotated Preference Data across Diverse Tasks and Languages}},
  author={Zhilin Wang and Jiaqi Zeng and Olivier Delalleau and Hoo-Chang Shin and Felipe Soares and Alexander Bukharin and Ellie Evans and Yi Dong and Oleksii Kuchaiev},
  year={2025},
  eprint={2505.11475},
  archivePrefix={arXiv},
  primaryClass={cs.CL},
  url={https://arxiv.org/abs/2505.11475}
}

@misc{wang2026scan,
  title={{SCAN: Structured Capability Assessment and Navigation for LLMs}},
  author={Zongqi Wang and Tianle Gu and Chen Gong and Xin Tian and Siqi Bao and Yujiu Yang},
  year={2026},
  eprint={2505.06698},
  archivePrefix={arXiv},
  primaryClass={cs.CL},
  url={https://arxiv.org/abs/2505.06698},
}

@misc{lyu2025href,
  title={{HREF: Human Response-Guided Evaluation of Instruction Following in Language Models}},
  author={Xinxi Lyu and Yizhong Wang and Hannaneh Hajishirzi and Pradeep Dasigi},
  year={2025},
  eprint={2412.15524},
  archivePrefix={arXiv},
  primaryClass={cs.CL},
  url={https://arxiv.org/abs/2412.15524},
}

@misc{chakrabarty2025aipolish,
  title={{AI-Slop to AI-Polish? Aligning Language Models through Edit-Based Writing Rewards and Test-time Computation}},
  author={Tuhin Chakrabarty and Philippe Laban and Chien-Sheng Wu},
  year={2025},
  eprint={2504.07532},
  archivePrefix={arXiv},
  primaryClass={cs.CL},
  url={https://arxiv.org/abs/2504.07532},
}

@misc{rafailov2024dpo,
  title={{Direct Preference Optimization: Your Language Model is Secretly a Reward Model}},
  author={Rafael Rafailov and Archit Sharma and Eric Mitchell and Stefano Ermon and Christopher D. Manning and Chelsea Finn},
  year={2024},
  eprint={2305.18290},
  archivePrefix={arXiv},
  primaryClass={cs.LG},
  url={https://arxiv.org/abs/2305.18290},
}

@misc{yang2025qwen3technicalreport,
      title={Qwen3 Technical Report}, 
      author={An Yang and Anfeng Li and Baosong Yang and Beichen Zhang and Binyuan Hui and Bo Zheng and Bowen Yu and Chang Gao and Chengen Huang and Chenxu Lv and Chujie Zheng and Dayiheng Liu and Fan Zhou and Fei Huang and Feng Hu and Hao Ge and Haoran Wei and Huan Lin and Jialong Tang and Jian Yang and Jianhong Tu and Jianwei Zhang and Jianxin Yang and Jiaxi Yang and Jing Zhou and Jingren Zhou and Junyang Lin and Kai Dang and Keqin Bao and Kexin Yang and Le Yu and Lianghao Deng and Mei Li and Mingfeng Xue and Mingze Li and Pei Zhang and Peng Wang and Qin Zhu and Rui Men and Ruize Gao and Shixuan Liu and Shuang Luo and Tianhao Li and Tianyi Tang and Wenbiao Yin and Xingzhang Ren and Xinyu Wang and Xinyu Zhang and Xuancheng Ren and Yang Fan and Yang Su and Yichang Zhang and Yinger Zhang and Yu Wan and Yuqiong Liu and Zekun Wang and Zeyu Cui and Zhenru Zhang and Zhipeng Zhou and Zihan Qiu},
      year={2025},
      eprint={2505.09388},
      archivePrefix={arXiv},
      primaryClass={cs.CL},
      url={https://arxiv.org/abs/2505.09388}, 
}

@misc{liu2026ministral3,
  title={{Ministral 3}},
  author={Alexander H. Liu and Kartik Khandelwal and Sandeep Subramanian and Victor Jouault and Abhinav Rastogi and Adrien Sad{\'e} and Alan Jeffares and Albert Jiang and Alexandre Cahill and Alexandre Gavaudan and Alexandre Sablayrolles and Am{\'e}lie H{\'e}liou and Amos You and Andy Ehrenberg and Andy Lo and Anton Eliseev and Antonia Calvi and Avinash Sooriyarachchi and Baptiste Bout and Baptiste Rozi{\`e}re and Baudouin De Monicault and Cl{\'e}mence Lanfranchi and Corentin Barreau and Cyprien Courtot and Daniele Grattarola and Darius Dabert and Diego de las Casas and Elliot Chane-Sane and Faruk Ahmed and Gabrielle Berrada and Ga{\"e}tan Ecrepont and Gauthier Guinet and Georgii Novikov and Guillaume Kunsch and Guillaume Lample and Guillaume Martin and Gunshi Gupta and Jan Ludziejewski and Jason Rute and Joachim Studnia and Jonas Amar and Jos{\'e}phine Delas and Josselin Somerville Roberts and Karmesh Yadav and Khyathi Chandu and Kush Jain and Laurence Aitchison and Laurent Fainsin and L{\'e}onard Blier and Lingxiao Zhao and Louis Martin and Lucile Saulnier and Luyu Gao and Maarten Buyl and Margaret Jennings and Marie Pellat and Mark Prins and Mathieu Poir{\'e}e and Mathilde Guillaumin and Matthieu Dinot and Matthieu Futeral and Maxime Darrin and Maximilian Augustin and Mia Chiquier and Michel Schimpf and Nathan Grinsztajn and Neha Gupta and Nikhil Raghuraman and Olivier Bousquet and Olivier Duchenne and Patricia Wang and Patrick von Platen and Paul Jacob and Paul Wambergue and Paula Kurylowicz and Pavankumar Reddy Muddireddy and Philom{\`e}ne Chagniot and Pierre Stock and Pravesh Agrawal and Quentin Torroba and Romain Sauvestre and Roman Soletskyi and Rupert Menneer and Sagar Vaze and Samuel Barry and Sanchit Gandhi and Siddhant Waghjale and Siddharth Gandhi and Soham Ghosh and Srijan Mishra and Sumukh Aithal and Szymon Antoniak and Teven Le Scao and Th{\'e}o Cachet and Theo Simon Sorg and Thibaut Lavril and Thiziri Nait Saada and Thomas Chabal and Thomas Foubert and Thomas Robert and Thomas Wang and Tim Lawson and Tom Bewley and Tom Bewley and Tom Edwards and Umar Jamil and Umberto Tomasini and Valeriia Nemychnikova and Van Phung and Vincent Maladi{\`e}re and Virgile Richard and Wassim Bouaziz and Wen-Ding Li and William Marshall and Xinghui Li and Xinyu Yang and Yassine El Ouahidi and Yihan Wang and Yunhao Tang and Zaccharie Ramzi},
  year={2026},
  eprint={2601.08584},
  archivePrefix={arXiv},
  primaryClass={cs.CL},
  url={https://arxiv.org/abs/2601.08584},
}

@misc{li2026mulferlenhancingreinforcementlearning,
      title={MulFeRL: Enhancing Reinforcement Learning with Verbal Feedback in a Multi-turn Loop}, 
      author={Xuancheng Li and Haitao Li and Yujia Zhou and YiqunLiu and Qingyao Ai},
      year={2026},
      eprint={2601.22900},
      archivePrefix={arXiv},
      primaryClass={cs.AI},
      url={https://arxiv.org/abs/2601.22900}, 
}

@misc{zhang2026critiquegrpoadvancingllmreasoning,
      title={Critique-GRPO: Advancing LLM Reasoning with Natural Language and Numerical Feedback}, 
      author={Xiaoying Zhang and Yipeng Zhang and Hao Sun and Kaituo Feng and Chaochao Lu and Chao Yang and Helen Meng},
      year={2026},
      eprint={2506.03106},
      archivePrefix={arXiv},
      primaryClass={cs.CL},
      url={https://arxiv.org/abs/2506.03106}, 
}

@inproceedings{lambert2025rewardbench,
  title={Rewardbench: Evaluating reward models for language modeling},
  author={Lambert, Nathan and Pyatkin, Valentina and Morrison, Jacob and Miranda, LJ and Lin, Bill Yuchen and Chandu, Khyathi and Dziri, Nouha and Kumar, Sachin and Zick, Tom and Choi, Yejin and others},
  booktitle={Findings of the Association for Computational Linguistics: NAACL 2025},
  pages={1755--1797},
  year={2025}
}

@misc{ye2024scalarrewardmodellearning,
      title={Beyond Scalar Reward Model: Learning Generative Judge from Preference Data},
      author={Ziyi Ye and Xiangsheng Li and Qiuchi Li and Qingyao Ai and Yujia Zhou and Wei Shen and Dong Yan and Yiqun Liu},
      year={2024},
      eprint={2410.03742},
      archivePrefix={arXiv},
      primaryClass={cs.CL},
      url={https://arxiv.org/abs/2410.03742},
}

@misc{chen2025reasongrmenhancinggenerativereward,
      title={ReasonGRM: Enhancing Generative Reward Models through Large Reasoning Models},
      author={Bin Chen and Xinzge Gao and Chuanrui Hu and Penghang Yu and Hua Zhang and Bing-Kun Bao},
      year={2025},
      eprint={2506.16712},
      archivePrefix={arXiv},
      primaryClass={cs.CL},
      url={https://arxiv.org/abs/2506.16712},
}

@misc{xu2026alternatingreinforcementlearningrubricbased,
      title={Alternating Reinforcement Learning for Rubric-Based Reward Modeling in Non-Verifiable LLM Post-Training},
      author={Ran Xu and Tianci Liu and Zihan Dong and Tony Yu and Ilgee Hong and Carl Yang and Linjun Zhang and Tao Zhao and Haoyu Wang},
      year={2026},
      eprint={2602.01511},
      archivePrefix={arXiv},
      primaryClass={cs.CL},
      url={https://arxiv.org/abs/2602.01511},
}

@misc{wang2026rewardmodelingnaturallanguage,
      title={Reward Modeling from Natural Language Human Feedback},
      author={Zongqi Wang and Rui Wang and Yuchuan Wu and Yiyao Yu and Pinyi Zhang and Shaoning Sun and Yujiu Yang and Yongbin Li},
      year={2026},
      eprint={2601.07349},
      archivePrefix={arXiv},
      primaryClass={cs.CL},
      url={https://arxiv.org/abs/2601.07349},
}

@misc{wang2026outcomeaccuracyenoughaligning,
      title={Outcome Accuracy is Not Enough: Aligning the Reasoning Process of Reward Models}, 
      author={Binghai Wang and Yantao Liu and Yuxuan Liu and Tianyi Tang and Shenzhi Wang and Chang Gao and Chujie Zheng and Yichang Zhang and Le Yu and Shixuan Liu and Tao Gui and Qi Zhang and Xuanjing Huang and Bowen Yu and Fei Huang and Junyang Lin},
      year={2026},
      eprint={2602.04649},
      archivePrefix={arXiv},
      primaryClass={cs.CL},
      url={https://arxiv.org/abs/2602.04649}, 
}

@misc{liu2025inference,
  title={Inference-Time Scaling for Generalist Reward Modeling},
  author={Zijun Liu and Peiyi Wang and Runxin Xu and Shirong Ma and Chong Ruan and Peng Li and Yang Liu and Yu Wu},
  year={2025},
  eprint={2504.02495},
  archivePrefix={arXiv},
  primaryClass={cs.CL},
  url={https://arxiv.org/abs/2504.02495}
}

@misc{chen2026rmr1rewardmodelingreasoning,
      title={RM-R1: Reward Modeling as Reasoning}, 
      author={Xiusi Chen and Gaotang Li and Ziqi Wang and Bowen Jin and Cheng Qian and Yu Wang and Hongru Wang and Yu Zhang and Denghui Zhang and Tong Zhang and Hanghang Tong and Heng Ji},
      year={2026},
      eprint={2505.02387},
      archivePrefix={arXiv},
      primaryClass={cs.CL},
      url={https://arxiv.org/abs/2505.02387}, 
}

@misc{malik2026rewardbench2advancingreward,
      title={RewardBench 2: Advancing Reward Model Evaluation}, 
      author={Saumya Malik and Valentina Pyatkin and Sander Land and Jacob Morrison and Noah A. Smith and Hannaneh Hajishirzi and Nathan Lambert},
      year={2026},
      eprint={2506.01937},
      archivePrefix={arXiv},
      primaryClass={cs.CL},
      url={https://arxiv.org/abs/2506.01937}, 
}

@misc{scheurer2024traininglanguagemodelslanguage,
      title={Training Language Models with Language Feedback at Scale}, 
      author={Jérémy Scheurer and Jon Ander Campos and Tomasz Korbak and Jun Shern Chan and Angelica Chen and Kyunghyun Cho and Ethan Perez},
      year={2024},
      eprint={2303.16755},
      archivePrefix={arXiv},
      primaryClass={cs.CL},
      url={https://arxiv.org/abs/2303.16755}, 
}

@misc{madaan2023selfrefineiterativerefinementselffeedback,
      title={Self-Refine: Iterative Refinement with Self-Feedback}, 
      author={Aman Madaan and Niket Tandon and Prakhar Gupta and Skyler Hallinan and Luyu Gao and Sarah Wiegreffe and Uri Alon and Nouha Dziri and Shrimai Prabhumoye and Yiming Yang and Shashank Gupta and Bodhisattwa Prasad Majumder and Katherine Hermann and Sean Welleck and Amir Yazdanbakhsh and Peter Clark},
      year={2023},
      eprint={2303.17651},
      archivePrefix={arXiv},
      primaryClass={cs.CL},
      url={https://arxiv.org/abs/2303.17651}, 
}

@misc{shinn2023reflexionlanguageagentsverbal,
      title={Reflexion: Language Agents with Verbal Reinforcement Learning}, 
      author={Noah Shinn and Federico Cassano and Edward Berman and Ashwin Gopinath and Karthik Narasimhan and Shunyu Yao},
      year={2023},
      eprint={2303.11366},
      archivePrefix={arXiv},
      primaryClass={cs.AI},
      url={https://arxiv.org/abs/2303.11366}, 
}

@misc{akyürek2023rl4fgeneratingnaturallanguage,
      title={RL4F: Generating Natural Language Feedback with Reinforcement Learning for Repairing Model Outputs}, 
      author={Afra Feyza Akyürek and Ekin Akyürek and Aman Madaan and Ashwin Kalyan and Peter Clark and Derry Wijaya and Niket Tandon},
      year={2023},
      eprint={2305.08844},
      archivePrefix={arXiv},
      primaryClass={cs.CL},
      url={https://arxiv.org/abs/2305.08844}, 
}

@inproceedings{christiano2017deep,
  title={Deep Reinforcement Learning from Human Preferences},
  author={Paul F. Christiano and Jan Leike and Tom B. Brown and Miljan Martic and Shane Legg and Dario Amodei},
  booktitle={Advances in Neural Information Processing Systems},
  volume={30},
  pages={4299--4307},
  year={2017}
}

@inproceedings{ouyang2022training,
  title={Training Language Models to Follow Instructions with Human Feedback},
  author={Long Ouyang and Jeffrey Wu and Xu Jiang and Diogo Almeida and Carroll L. Wainwright and Pamela Mishkin and Chong Zhang and Sandhini Agarwal and Katarina Slama and Alex Ray and John Schulman and Jacob Hilton and Fraser Kelton and Luke Miller and Maddie Simens and Amanda Askell and Peter Welinder and Paul F. Christiano and Jan Leike and Ryan Lowe},
  booktitle={Advances in Neural Information Processing Systems},
  volume={35},
  pages={27730--27744},
  year={2022}
}

@inproceedings{yu2025criticrm,
  title={Self-Generated Critiques Boost Reward Modeling for Language Models},
  author={Yue Yu and Zhengxing Chen and Aston Zhang and Liang Tan and Chenguang Zhu and Richard Yuanzhe Pang and Yundi Qian and Xuewei Wang and Suchin Gururangan and Chao Zhang and Melanie Kambadur and Dhruv Mahajan and Rui Hou},
  booktitle={Proceedings of the 2025 Conference of the Nations of the Americas Chapter of the Association for Computational Linguistics: Human Language Technologies (Volume 1: Long Papers)},
  month={apr},
  year={2025},
  address={Albuquerque, New Mexico},
  publisher={Association for Computational Linguistics},
  pages={11499--11514},
  doi={10.18653/v1/2025.naacl-long.573}
}

@misc{mahan2024generative,
  title={Generative Reward Models},
  author={Dakota Mahan and Duy Van Phung and Rafael Rafailov and Chase Blagden and Nathan Lile and Louis Castricato and Jan-Philipp Fr{\"a}nken and Chelsea Finn and Alon Albalak},
  year={2024},
  eprint={2410.12832},
  archivePrefix={arXiv},
  primaryClass={cs.LG}
}

@misc{zhu2025judgelmfinetunedlargelanguage,
      title={JudgeLM: Fine-tuned Large Language Models are Scalable Judges}, 
      author={Lianghui Zhu and Xinggang Wang and Xinlong Wang},
      year={2025},
      eprint={2310.17631},
      archivePrefix={arXiv},
      primaryClass={cs.CL},
      url={https://arxiv.org/abs/2310.17631}, 
}

@misc{kim2024prometheusinducingfinegrainedevaluation,
      title={Prometheus: Inducing Fine-grained Evaluation Capability in Language Models}, 
      author={Seungone Kim and Jamin Shin and Yejin Cho and Joel Jang and Shayne Longpre and Hwaran Lee and Sangdoo Yun and Seongjin Shin and Sungdong Kim and James Thorne and Minjoon Seo},
      year={2024},
      eprint={2310.08491},
      archivePrefix={arXiv},
      primaryClass={cs.CL},
      url={https://arxiv.org/abs/2310.08491}, 
}

@misc{shao2024deepseekmathpushinglimitsmathematical,
      title={DeepSeekMath: Pushing the Limits of Mathematical Reasoning in Open Language Models}, 
      author={Zhihong Shao and Peiyi Wang and Qihao Zhu and Runxin Xu and Junxiao Song and Xiao Bi and Haowei Zhang and Mingchuan Zhang and Y. K. Li and Y. Wu and Daya Guo},
      year={2024},
      eprint={2402.03300},
      archivePrefix={arXiv},
      primaryClass={cs.CL},
      url={https://arxiv.org/abs/2402.03300}, 
}

@misc{liu2024skyworkrewardbagtricksreward,
      title={Skywork-Reward: Bag of Tricks for Reward Modeling in LLMs},
      author={Chris Yuhao Liu and Liang Zeng and Jiacai Liu and Rui Yan and Jujie He and Chaojie Wang and Shuicheng Yan and Yang Liu and Yahui Zhou},
      year={2024},
      eprint={2410.18451},
      archivePrefix={arXiv},
      primaryClass={cs.AI},
      url={https://arxiv.org/abs/2410.18451},
}

@misc{lai2024stepdpostepwisepreferenceoptimization,
      title={Step-DPO: Step-wise Preference Optimization for Long-chain Reasoning of LLMs},
      author={Xin Lai and Zhuotao Tian and Yukang Chen and Senqiao Yang and Xiangru Peng and Jiaya Jia},
      year={2024},
      eprint={2406.18629},
      archivePrefix={arXiv},
      primaryClass={cs.LG},
      url={https://arxiv.org/abs/2406.18629},
}

@misc{bai2022traininghelpfulharmlessassistant,
      title={Training a Helpful and Harmless Assistant with Reinforcement Learning from Human Feedback},
      author={Yuntao Bai and Andy Jones and Kamal Ndousse and Amanda Askell and Anna Chen and Nova DasSarma and Dawn Drain and Stanislav Fort and Deep Ganguli and Tom Henighan and Nicholas Joseph and Saurav Kadavath and Jackson Kernion and Tom Conerly and Sheer El-Showk and Nelson Elhage and Zac Hatfield-Dodds and Danny Hernandez and Tristan Hume and Scott Johnston and Shauna Kravec and Liane Lovitt and Neel Nanda and Catherine Olsson and Dario Amodei and Tom Brown and Jack Clark and Sam McCandlish and Chris Olah and Ben Mann and Jared Kaplan},
      year={2022},
      eprint={2204.05862},
      archivePrefix={arXiv},
      primaryClass={cs.CL},
      url={https://arxiv.org/abs/2204.05862},
}

@misc{lambert2025tulu3pushingfrontiers,
      title={Tulu 3: Pushing Frontiers in Open Language Model Post-Training},
      author={Nathan Lambert and Jacob Morrison and Valentina Pyatkin and Shengyi Huang and Hamish Ivison and Faeze Brahman and Lester James V. Miranda and Alisa Liu and Nouha Dziri and Shane Lyu and Yuling Gu and Saumya Malik and Victoria Graf and Jena D. Hwang and Jiangjiang Yang and Ronan Le Bras and Oyvind Tafjord and Chris Wilhelm and Luca Soldaini and Noah A. Smith and Yizhong Wang and Pradeep Dasigi and Hannaneh Hajishirzi},
      year={2025},
      eprint={2411.15124},
      archivePrefix={arXiv},
      primaryClass={cs.CL},
      url={https://arxiv.org/abs/2411.15124},
}

@inproceedings{fein-etal-2026-litbench,
    title = "{L}it{B}ench: A Benchmark and Dataset for Reliable Evaluation of Creative Writing",
    author = "Fein, Daniel  and
      Russo, Sebastian  and
      Xiang, Violet  and
      Jolly, Kabir  and
      Rafailov, Rafael  and
      Haber, Nick",
    editor = "Demberg, Vera  and
      Inui, Kentaro  and
      Marquez, Llu{\'i}s",
    booktitle = "Proceedings of the 19th Conference of the {E}uropean Chapter of the {A}ssociation for {C}omputational {L}inguistics (Volume 1: Long Papers)",
    month = mar,
    year = "2026",
    address = "Rabat, Morocco",
    publisher = "Association for Computational Linguistics",
    url = "https://aclanthology.org/2026.eacl-long.362/",
    doi = "10.18653/v1/2026.eacl-long.362",
    pages = "7740--7755",
    ISBN = "979-8-89176-380-7"
}

@article{zhou2023instructionfollowing,
  title={Instruction-Following Evaluation for Large Language Models},
  author={Zhou, Jeffrey and Lu, Tianjian and Mishra, Swaroop and Brahma, Siddhartha and Basu, Sujoy and Luan, Yi and Zhou, Denny and Hou, Le},
  journal={arXiv preprint arXiv:2311.07911},
  year={2023}
}

@article{dubois2024length,
  title={Length-Controlled AlpacaEval: A Simple Way to Debias Automatic Evaluators},
  author={Dubois, Yann and Galambosi, Bal{\'a}zs and Liang, Percy and Hashimoto, Tatsunori B.},
  journal={arXiv preprint arXiv:2404.04475},
  year={2024}
}

@InProceedings{pmlr-v267-li25h,
  title={From Crowdsourced Data to High-quality Benchmarks: Arena-Hard and Benchbuilder Pipeline},
  author={Li, Tianle and Chiang, Wei-Lin and Frick, Evan and Dunlap, Lisa and Wu, Tianhao and Zhu, Banghua and Gonzalez, Joseph E. and Stoica, Ion},
  booktitle={Proceedings of the 42nd International Conference on Machine Learning},
  pages={34209--34231},
  year={2025},
  volume={267},
  series={Proceedings of Machine Learning Research}
}

@misc{lightman2023letsverifystepstep,
      title={Let's Verify Step by Step}, 
      author={Hunter Lightman and Vineet Kosaraju and Yura Burda and Harri Edwards and Bowen Baker and Teddy Lee and Jan Leike and John Schulman and Ilya Sutskever and Karl Cobbe},
      year={2023},
      eprint={2305.20050},
      archivePrefix={arXiv},
      primaryClass={cs.LG},
      url={https://arxiv.org/abs/2305.20050}, 
}

@misc{liu2023codegeneratedchatgptreally,
      title={Is Your Code Generated by ChatGPT Really Correct? Rigorous Evaluation of Large Language Models for Code Generation}, 
      author={Jiawei Liu and Chunqiu Steven Xia and Yuyao Wang and Lingming Zhang},
      year={2023},
      eprint={2305.01210},
      archivePrefix={arXiv},
      primaryClass={cs.SE},
      url={https://arxiv.org/abs/2305.01210}, 
}

@book{manning2008introduction,
  title     = {Introduction to Information Retrieval},
  author    = {Manning, Christopher D. and Raghavan, Prabhakar
               and Sch{\"u}tze, Hinrich},
  publisher = {Cambridge University Press},
  year      = {2008}
}

@misc{liu2024rmbenchbenchmarkingrewardmodels,
      title={RM-Bench: Benchmarking Reward Models of Language Models with Subtlety and Style}, 
      author={Yantao Liu and Zijun Yao and Rui Min and Yixin Cao and Lei Hou and Juanzi Li},
      year={2024},
      eprint={2410.16184},
      archivePrefix={arXiv},
      primaryClass={cs.CL},
      url={https://arxiv.org/abs/2410.16184}, 
}

@misc{hu2021loralowrankadaptationlarge,
      title={LoRA: Low-Rank Adaptation of Large Language Models}, 
      author={Edward J. Hu and Yelong Shen and Phillip Wallis and Zeyuan Allen-Zhu and Yuanzhi Li and Shean Wang and Lu Wang and Weizhu Chen},
      year={2021},
      eprint={2106.09685},
      archivePrefix={arXiv},
      primaryClass={cs.CL},
      url={https://arxiv.org/abs/2106.09685}, 
}

@misc{gemmateam2025gemma3technicalreport,
      title={Gemma 3 Technical Report}, 
      author={Gemma Team and Aishwarya Kamath and Johan Ferret and Shreya Pathak and Nino Vieillard and Ramona Merhej and Sarah Perrin and Tatiana Matejovicova and Alexandre Ramé and Morgane Rivière and Louis Rouillard and Thomas Mesnard and Geoffrey Cideron and Jean-bastien Grill and Sabela Ramos and Edouard Yvinec and Michelle Casbon and Etienne Pot and Ivo Penchev and Gaël Liu and Francesco Visin and Kathleen Kenealy and Lucas Beyer and Xiaohai Zhai and Anton Tsitsulin and Robert Busa-Fekete and Alex Feng and Noveen Sachdeva and Benjamin Coleman and Yi Gao and Basil Mustafa and Iain Barr and Emilio Parisotto and David Tian and Matan Eyal and Colin Cherry and Jan-Thorsten Peter and Danila Sinopalnikov and Surya Bhupatiraju and Rishabh Agarwal and Mehran Kazemi and Dan Malkin and Ravin Kumar and David Vilar and Idan Brusilovsky and Jiaming Luo and Andreas Steiner and Abe Friesen and Abhanshu Sharma and Abheesht Sharma and Adi Mayrav Gilady and Adrian Goedeckemeyer and Alaa Saade and Alex Feng and Alexander Kolesnikov and Alexei Bendebury and Alvin Abdagic and Amit Vadi and András György and André Susano Pinto and Anil Das and Ankur Bapna and Antoine Miech and Antoine Yang and Antonia Paterson and Ashish Shenoy and Ayan Chakrabarti and Bilal Piot and Bo Wu and Bobak Shahriari and Bryce Petrini and Charlie Chen and Charline Le Lan and Christopher A. Choquette-Choo and CJ Carey and Cormac Brick and Daniel Deutsch and Danielle Eisenbud and Dee Cattle and Derek Cheng and Dimitris Paparas and Divyashree Shivakumar Sreepathihalli and Doug Reid and Dustin Tran and Dustin Zelle and Eric Noland and Erwin Huizenga and Eugene Kharitonov and Frederick Liu and Gagik Amirkhanyan and Glenn Cameron and Hadi Hashemi and Hanna Klimczak-Plucińska and Harman Singh and Harsh Mehta and Harshal Tushar Lehri and Hussein Hazimeh and Ian Ballantyne and Idan Szpektor and Ivan Nardini and Jean Pouget-Abadie and Jetha Chan and Joe Stanton and John Wieting and Jonathan Lai and Jordi Orbay and Joseph Fernandez and Josh Newlan and Ju-yeong Ji and Jyotinder Singh and Kat Black and Kathy Yu and Kevin Hui and Kiran Vodrahalli and Klaus Greff and Linhai Qiu and Marcella Valentine and Marina Coelho and Marvin Ritter and Matt Hoffman and Matthew Watson and Mayank Chaturvedi and Michael Moynihan and Min Ma and Nabila Babar and Natasha Noy and Nathan Byrd and Nick Roy and Nikola Momchev and Nilay Chauhan and Noveen Sachdeva and Oskar Bunyan and Pankil Botarda and Paul Caron and Paul Kishan Rubenstein and Phil Culliton and Philipp Schmid and Pier Giuseppe Sessa and Pingmei Xu and Piotr Stanczyk and Pouya Tafti and Rakesh Shivanna and Renjie Wu and Renke Pan and Reza Rokni and Rob Willoughby and Rohith Vallu and Ryan Mullins and Sammy Jerome and Sara Smoot and Sertan Girgin and Shariq Iqbal and Shashir Reddy and Shruti Sheth and Siim Põder and Sijal Bhatnagar and Sindhu Raghuram Panyam and Sivan Eiger and Susan Zhang and Tianqi Liu and Trevor Yacovone and Tyler Liechty and Uday Kalra and Utku Evci and Vedant Misra and Vincent Roseberry and Vlad Feinberg and Vlad Kolesnikov and Woohyun Han and Woosuk Kwon and Xi Chen and Yinlam Chow and Yuvein Zhu and Zichuan Wei and Zoltan Egyed and Victor Cotruta and Minh Giang and Phoebe Kirk and Anand Rao and Kat Black and Nabila Babar and Jessica Lo and Erica Moreira and Luiz Gustavo Martins and Omar Sanseviero and Lucas Gonzalez and Zach Gleicher and Tris Warkentin and Vahab Mirrokni and Evan Senter and Eli Collins and Joelle Barral and Zoubin Ghahramani and Raia Hadsell and Yossi Matias and D. Sculley and Slav Petrov and Noah Fiedel and Noam Shazeer and Oriol Vinyals and Jeff Dean and Demis Hassabis and Koray Kavukcuoglu and Clement Farabet and Elena Buchatskaya and Jean-Baptiste Alayrac and Rohan Anil and Dmitry and Lepikhin and Sebastian Borgeaud and Olivier Bachem and Armand Joulin and Alek Andreev and Cassidy Hardin and Robert Dadashi and Léonard Hussenot},
      year={2025},
      eprint={2503.19786},
      archivePrefix={arXiv},
      primaryClass={cs.CL},
      url={https://arxiv.org/abs/2503.19786}, 
}

@misc{sun2025stabilizinglongtermmultiturnreinforcement,
      title={Stabilizing Long-term Multi-turn Reinforcement Learning with Gated Rewards}, 
      author={Zetian Sun and Dongfang Li and Zhuoen Chen and Yuhuai Qin and Baotian Hu},
      year={2025},
      eprint={2508.10548},
      archivePrefix={arXiv},
      primaryClass={cs.LG},
      url={https://arxiv.org/abs/2508.10548}, 
}

@misc{yan2025learningreasonoffpolicyguidance,
      title={Learning to Reason under Off-Policy Guidance}, 
      author={Jianhao Yan and Yafu Li and Zican Hu and Zhi Wang and Ganqu Cui and Xiaoye Qu and Yu Cheng and Yue Zhang},
      year={2025},
      eprint={2504.14945},
      archivePrefix={arXiv},
      primaryClass={cs.LG},
      url={https://arxiv.org/abs/2504.14945}, 
}

@misc{lai2026ralignenhancinggenerativereward,
      title={R-Align: Enhancing Generative Reward Models through Rationale-Centric Meta-Judging}, 
      author={Yanlin Lai and Mitt Huang and Hangyu Guo and Xiangfeng Wang and Haodong Li and Shaoxiong Zhan and Liang Zhao and Chengyuan Yao and Yinmin Zhang and Qi Han and Chun Yuan and Zheng Ge and Xiangyu Zhang and Daxin Jiang},
      year={2026},
      eprint={2602.06763},
      archivePrefix={arXiv},
      primaryClass={cs.CL},
      url={https://arxiv.org/abs/2602.06763}, 
}
\bibliographystyle{iclr2027_conference}

\appendix

\section{Algorithm Details and Analysis}
\label{app:algorithm}

This section supplements the main method with the initialization losses,
training procedure, and token-level optimization objectives.
We also establish properties of the resulting supervision signals.
For clarity, let $\phi$ and $\psi$ denote the trainable parameters of
$M_{\mathrm{rub}}$ and $M_{\mathrm{ver}}$, respectively.
The backbone remains frozen, and the two components use independent
LoRA adapters. After initialization, $\psi$ remains fixed, while
$\theta$ and $\phi$ are optimized using their respective objectives.

\subsection{Human-Grounded Initialization}
\label{app:algorithm_initialization}

\paragraph{Rubric generator.}
For each $(X,w,C_H)\in\mathcal D_H$, the frozen external annotator
$\mathcal A$ extracts a reference rubric $\mathcal E^*$ from the human
critique, as described in \S\ref{sec:process_supervision}.
This gives the initialization set
$\mathcal D_{\mathrm{rub}}^{\mathrm{init}}=\{(X,\mathcal E^*)\}$.
Writing $\mathcal E^*=(e_1^*,\ldots,e_{T_E}^*)$ for its tokenization,
we minimize:
\begin{equation}
\mathcal L_{\mathrm{rub}}^{\mathrm{init}}(\phi)
=
-\mathbb E_{(X,\mathcal E^*)\sim\mathcal D_{\mathrm{rub}}^{\mathrm{init}}}
\left[
\frac{1}{T_E}\sum_{t=1}^{T_E}
\log M_{\mathrm{rub},\phi}
(e_t^*\mid X,e_{<t}^*)
\right].
\label{eq:app_rubric_sft}
\end{equation}
Only $X$ is supplied to the generator. The human critique and gold
preference are not part of its conditioning input.

\paragraph{Critique verifier.}
We next freeze the initialized rubric generator and use it to construct
rubrics for inputs in $\mathcal D_H$.
Critiques are sampled from the initial GRM, and $\mathcal A$ supplies
a target status for each rubric keypoint.
Let $Z^*$ be the serialized status sequence and define:
\[
\mathcal D_{\mathrm{ver}}^{\mathrm{init}}
=
\{(X,\mathcal E,C_G,Z^*)\}.
\]
For $Z^*=(z_1^*,\ldots,z_{T_Z}^*)$, the verifier loss is:
\begin{equation}
\mathcal L_{\mathrm{ver}}^{\mathrm{init}}(\psi)
=
-
\mathbb E_{(X,\mathcal E,C_G,Z^*)
\sim
\mathcal D_{\mathrm{ver}}^{\mathrm{init}}}
\left[
\frac{1}{T_Z}
\sum_{t=1}^{T_Z}
\log
M_{\mathrm{ver},\psi}
\left(
z_t^*
\mid
X,\mathcal E,C_G,z_{<t}^*
\right)
\right].
\label{eq:app_verifier_sft}
\end{equation}
Thus, verifier initialization uses generated rubrics rather than only
reference rubrics. Both losses apply to target tokens, not conditioning
tokens. The external annotator is used offline; subsequent training
uses the initialized MetaCritic and the stored reference rubrics.

\subsection{Training Procedure}
\label{app:algorithm_procedure}

\paragraph{Original and retained assessments.}
For each input $X$, we distinguish the original group
$\mathcal Y^0=\{Y_k^0\}_{k=1}^K$ from the retained group
$\widetilde{\mathcal Y}=\{\widetilde Y_k\}_{k=1}^K$ used for the GRM
update. Initially, $\widetilde Y_k=Y_k^0$.
Rubric learning always uses $\mathcal Y^0$, whereas GRM learning uses
$\widetilde{\mathcal Y}$ after any successful replacement.
This distinction prevents a rubric from being evaluated on a revision
that it helped generate.

At the start of a rollout batch, let $\theta_{\mathrm{old}}$ and
$\phi_{\mathrm{old}}$ denote the parameter snapshots used for sampling.
For each input, we sample $K\geq2$ original assessments and $G_X$
rubrics. All candidate rubrics are evaluated against the same original
critiques using the frozen verifier. On $\mathcal D_H$, the stored
reference rubric is evaluated against these same critiques.
The resulting scores determine the target pairs and rubric rewards
in Eq.~\ref{eq:rubric_ranking_reward}.
Only $\mathcal E_0$ supplies GRM rewards and exploration guidance.

\paragraph{One-step guided exploration.}
When all original assessments are format-valid and outcome-incorrect,
we sample a parent index $J$ uniformly from $\{1,\ldots,K\}$.
Let $Z_{0,J}$ denote its keypoint labels under $\mathcal E_0$.
The GRM generates one revision conditioned on
$(X,Y_J^0,\mathcal E_0,Z_{0,J})$ and the revision instruction.
Neither $w$ nor an explicit correctness indicator is included.
A format-valid, outcome-correct revision replaces its parent and is
rescored under $\mathcal E_0$; otherwise, the original group is retained.
The group size therefore remains $K$.

Let $d_k=1$ only when $\widetilde Y_k$ is an accepted revision, and
$d_k=0$ otherwise. Acceptance depends on format and preference
correctness, not on an additional improvement threshold for Keypoint-F1.
The revised assessment is not substituted into the rubric-learning
buffer, and revision success is not added to the rubric reward.
Algorithm~\ref{alg:engrich_training} summarizes the procedure.

\begin{algorithm}[t]
\caption{EnGRICH training}
\label{alg:engrich_training}
\small
\begin{algorithmic}[1]
\Require $\mathcal D_H$, $\mathcal D_O$, initial GRM and MetaCritic;
$K\geq2$, rubric group sizes $G_X$, and optimization hyperparameters.
\State Construct offline reference rubrics with $\mathcal A$.
\State Initialize $M_{\mathrm{rub}}$ using Eq.~\ref{eq:app_rubric_sft}.
\State Freeze $M_{\mathrm{rub}}$; generate rubrics and initial GRM
critiques, and obtain verifier targets from $\mathcal A$.
\State Initialize $M_{\mathrm{ver}}$ using
Eq.~\ref{eq:app_verifier_sft}; then fix $\psi$.
\While{training is not complete}
    \State Snapshot $\theta_{\mathrm{old}}\gets\theta$ and
    $\phi_{\mathrm{old}}\gets\phi$; sample a batch $\mathcal B$.
    \State Initialize GRM and rubric update buffers.
    \For{each input $X$ in $\mathcal B$}
        \State Sample $\mathcal Y^0\sim
        \pi_{\theta_{\mathrm{old}}}(\cdot\mid X)$ with $K$ assessments.
        \State Sample $\{\mathcal E_g\}_{g=0}^{G_X-1}\sim
        M_{\mathrm{rub},\phi_{\mathrm{old}}}(\cdot\mid X)$.
        \State Verify format-valid original critiques under each
        $\mathcal E_g$; cache their labels and scores.
        \If{$X\in\mathcal D_H$}
            \State Score original critiques under the stored
            reference rubric $\mathcal E^*$.
        \EndIf
        \State Construct $\mathcal P_X$ from format-valid original
        assessments using Eq.~\ref{eq:rubric_ranking_reward}.
        \If{$\mathcal P_X\neq\varnothing$ and $G_X\geq2$}
            \State Compute $r_g^{\mathrm{rub}}$ and centered
            advantages $B_g$; store the rubric update tuple.
        \EndIf
        \State Set $\widetilde{\mathcal Y}\gets\mathcal Y^0$
        and $d_k\gets0$ for all $k$.
        \If{all original assessments are valid and incorrect}
            \State Sample $J$ uniformly and generate one revision
            using $(X,Y_J^0,\mathcal E_0,Z_{0,J})$.
            \If{the revision is valid and correct}
                \State Replace $\widetilde Y_J$ with the revision;
                set $d_J\gets1$ and rescore it under $\mathcal E_0$.
            \EndIf
        \EndIf
        \State Compute $R_k$ and $A_k$ for
        $\widetilde{\mathcal Y}$ using
        Eqs.~\ref{eq:grm_credit} and \ref{eq:grm_objective}.
        \State Store $(X,\widetilde{\mathcal Y},\{A_k,d_k\}_{k=1}^K)$
        in the GRM update buffer.
    \EndFor
    \State Update $\theta$ using Eq.~\ref{eq:app_grm_batch_loss}.
    \State Update $\phi$ using Eq.~\ref{eq:app_rubric_rl_loss}
    if the rubric update buffer is nonempty.
\EndWhile
\State \Return $\pi_\theta$ for standalone inference.
\end{algorithmic}
\end{algorithm}

\subsection{Token-Level Optimization Objectives}
\label{app:algorithm_losses}

\paragraph{GRM objective.}
We explicitly write the policy-surrogate terms using per-sequence
token averaging. Let $\widetilde Y_k=(y_{k,1},\ldots,y_{k,T_k})$
denote the retained assessment, including its critique and final
preference. Define:
\begin{align}
r_{k,t}(\theta)
&=
\frac{\pi_\theta(y_{k,t}\mid X,y_{k,<t})}
{\pi_{\theta_{\mathrm{old}}}(y_{k,t}\mid X,y_{k,<t})},
\label{eq:app_grm_ratio}\\
\mathcal C_\epsilon(r,a)
&=
\min\!\left\{ra,
\operatorname{clip}(r,1-\epsilon,1+\epsilon)a\right\}.
\label{eq:app_clipped_surrogate}
\end{align}
For the standard branch, this is the clipped GRPO
surrogate~\citep{shao2024deepseekmathpushinglimitsmathematical}.
All advantages and denominator probabilities are treated as constants
during differentiation. With $f(r)=r/(r+\gamma)$ and $0<\gamma<1$,
the per-assessment loss is:
\begin{equation}
\ell_k(\theta)
=
-\frac{1}{T_k}\sum_{t=1}^{T_k}
\left[
(1-d_k)\mathcal C_{\epsilon_{\mathrm{GRM}}}
(r_{k,t},A_k)
+
d_k A_k f(r_{k,t})
\right].
\label{eq:app_grm_token_loss}
\end{equation}
The batch objective is:
\begin{equation}
\widehat{\mathcal L}_{\mathrm{GRM}}(\theta)
=
\frac{1}{|\mathcal B|}
\sum_{X\in\mathcal B}
\frac{1}{K}\sum_{k=1}^{K}\ell_k(\theta).
\label{eq:app_grm_batch_loss}
\end{equation}
Here $A_k$ is computed after replacement, without standard-deviation
normalization. Every accepted revision has positive advantage because
it is the only correct assessment in its retained group.
Since $0<f(r)<1$, applying the standard clipped surrogate to the shaped
ratio gives
$\mathcal C_\epsilon(f(r),A_k)=A_kf(r)$ for these revisions.
Thus, the guided branch in Eq.~\ref{eq:app_grm_token_loss} also covers
clipping applied after shaping.

\paragraph{Conditioning of accepted revisions.}
Although a revision is generated with additional diagnostic context,
its training probabilities are evaluated with $X$ as the only
conditioning input. Both the numerator and denominator in
Eq.~\ref{eq:app_grm_ratio} are recomputed under this context;
feedback-conditioned rollout probabilities are not used as the
denominator. The assessment must be re-encoded under $X$, rather than
only masking the loss on feedback tokens while retaining the feedback
in its context.

Policy shaping follows the general motivation of learning from guided
trajectories~\citep{zhang2026critiquegrpoadvancingllmreasoning,
yan2025learningreasonoffpolicyguidance}, but the argument of $f$ here
is the current-to-old ratio explicitly defined in the main text.
This is a shaped learning surrogate, not an unbiased importance-sampling
correction for the diagnostic-conditioned behavior policy.

\paragraph{Rubric-generator objective.}
For each eligible input, compute the rubric rewards in
Eq.~\ref{eq:rubric_ranking_reward} using only original critiques,
and define:
\begin{equation}
B_g
=
r_g^{\mathrm{rub}}
-
\frac{1}{G_X}\sum_{h=0}^{G_X-1}r_h^{\mathrm{rub}}.
\label{eq:app_rubric_advantage}
\end{equation}
Let $\mathcal E_g=(e_{g,1},\ldots,e_{g,U_g})$ denote the tokenized
sampled rubric. Its token-level ratio is:
\begin{equation}
\rho_{g,t}(\phi)
=
\frac{M_{\mathrm{rub},\phi}(e_{g,t}\mid X,e_{g,<t})}
{M_{\mathrm{rub},\phi_{\mathrm{old}}}(e_{g,t}\mid X,e_{g,<t})}.
\label{eq:app_rubric_ratio}
\end{equation}
Writing $\mathcal B_{\mathrm{rub}}$ for inputs with nonempty target
pairs and $G_X\geq2$, we minimize:
\begin{equation}
\widehat{\mathcal L}_{\mathrm{rub}}(\phi)
=
-\frac{1}{|\mathcal B_{\mathrm{rub}}|}
\sum_{X\in\mathcal B_{\mathrm{rub}}}
\frac{1}{G_X}\sum_{g=0}^{G_X-1}
\frac{1}{U_g}\sum_{t=1}^{U_g}
\mathcal C_{\epsilon_{\mathrm{rub}}}(\rho_{g,t},B_g).
\label{eq:app_rubric_rl_loss}
\end{equation}
If $\mathcal B_{\mathrm{rub}}$ is empty, no rubric update is performed.
With $G_X=1$, centering gives $B_0=0$, so there is no rubric
policy-gradient signal even when target pairs exist.
There is no standard-deviation normalization.
Verifier outputs, reference scores, target pairs, and advantages are
all detached. Consequently, rubric optimization differentiates only
through the generated rubric's token probabilities, not through the
verifier or the GRM.

\subsection{Properties of the Training Signals}
\label{app:algorithm_analysis}

The following results concern the specified scoring and update rules.
They assume nonempty rubrics and one valid verifier status per keypoint.
They do not assume that the learned rubrics or verifier are error-free.

\paragraph{Keypoint-F1 and outcome gating.}
For a rubric with $n=m+u+e\geq1$ keypoints, the counting convention is
$\mathrm{TP}=m$, $\mathrm{FP}=e$, and $\mathrm{FN}=u+e$.
Substituting these counts into the usual F1
expression~\citep{manning2008introduction} yields
Eq.~\ref{eq:process_score}.
In computation, we use its count form directly, which returns zero
when $m=0$ and avoids undefined intermediate precision or harmonic-mean
expressions.

\begin{appproposition}[Score bounds and local corrections]
\label{prop:app_score_properties}
Let $D=2m+u+2e=n+m+e$. Then $q\in[0,1]$, with $q=1$ if and only
if all keypoints are labeled \texttt{MATCH}.
Holding the rubric fixed, the changes in score caused by relabeling
one keypoint are:
\begin{align}
\Delta q_{\mathrm{MISSING}\to\mathrm{MATCH}}
&=\frac{2(n+e)}{D(D+1)}>0,
\label{eq:app_missing_match}\\
\Delta q_{\mathrm{INCORRECT}\to\mathrm{MATCH}}
&=\frac{2}{D}>0,
\label{eq:app_incorrect_match}\\
\Delta q_{\mathrm{INCORRECT}\to\mathrm{MISSING}}
&=\frac{2m}{D(D-1)}\geq0,
\label{eq:app_incorrect_missing}
\end{align}
whenever the corresponding source label is present.
Moreover, for every $\lambda\geq0$, any format-valid correct
assessment has a reward at least one larger than any format-valid
incorrect assessment.
\end{appproposition}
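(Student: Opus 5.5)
The plan is to work entirely with the count form $q=2m/D$, where $D=2m+u+2e=n+m+e$ and $n=m+u+e\ge 1$. Since $n\ge1$, we have $D\ge 1$, so $q$ is always well defined. We take $q=0$ when $m=0$, which agrees with the count form.

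\textbf{Bounds.} Because $m,u,e\ge0$, we have $0\le 2m\le D$, so $q\in[0,1]$. Moreover $q=1$ if and only if $2m=D$, that is, $u+2e=0$. This forces $u=e=0$, which is exactly the case where every keypoint is labeled \texttt{MATCH}.

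\textbf{Local relabelings.} Each relabeling changes $(m,u,e)$ with $n$ held fixed, so in each case we compute the new denominator and subtract.
\begin{itemize}
\item \texttt{MISSING}$\to$\texttt{MATCH}: $(m,u)\mapsto(m+1,u-1)$, so $D\mapsto D+1$. Then
\[
\Delta q=\frac{2(m+1)}{D+1}-\frac{2m}{D}=\frac{2(D-m)}{D(D+1)}.
\]
Since $D-m=n+e$, this equals $2(n+e)/(D(D+1))$, which is positive because $n\ge1$.
\item \texttt{INCORRECT}$\to$\texttt{MATCH}: $(m,e)\mapsto(m+1,e-1)$, so $D$ is unchanged. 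Hence $\Delta q=2/D>0$.
\item \texttt{INCORRECT}$\to$\texttt{MISSING}: $(u,e)\mapsto(u+1,e-1)$, so $D\mapsto D-1$. Then
\[
\Delta q=\frac{2m}{D-1}-\frac{2m}{D}=\frac{2m}{D(D-1)}\ge0.
\]
We need $D-1\ge1$ here. Since $e\ge1$, we have $D\ge 2e\ge2$, so the denominator is positive.
\end{itemize}
Presence of the source label is what guarantees that the new counts are nonnegative.

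\textbf{Outcome gating.} By Eq.~\ref{eq:grm_credit}, a format-valid correct assessment receives $1+\lambda q\ge1$, using $\lambda\ge0$ and $q\ge0$ from the bounds above. A format-valid incorrect assessment receives $0$. The difference is therefore at least $1$.

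This bound holds whichever rubric $\mathcal E_0$ is used and regardless of verifier errors, since it relies only on $q\in[0,1]$. It also holds for an accepted revision rescored under $\mathcal E_0$.

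There is no real obstacle beyond careful bookkeeping. The one delicate point is confirming that $D-1>0$ in the third case, which is handled above.
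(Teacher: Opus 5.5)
Your proposal is correct and matches the paper's proof essentially step for step. The paper likewise uses $D\ge n\ge 1$ and $D-2m=u+2e\ge 0$ for the bounds, and direct numerator/denominator bookkeeping for the three relabelings, including the check that an \texttt{INCORRECT} label forces $D\ge 2$. It also compares correct rewards in $[1,1+\lambda]$ against the zero reward of valid incorrect assessments for the gating claim.
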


\begin{proof}
Since $D\geq n\geq1$ and $D-2m=u+2e\geq0$, we have $0\leq q\leq1$.
Equality $q=1$ requires $u=e=0$, and therefore $m=n$.
For \texttt{MISSING}$\to$\texttt{MATCH}, the new numerator and denominator
are $2m+2$ and $D+1$, giving
$2(m+1)/(D+1)-2m/D=2(D-m)/(D(D+1))$.
Since $D-m=n+e$, this proves Eq.~\ref{eq:app_missing_match}.
For \texttt{INCORRECT}$\to$\texttt{MATCH}, the denominator remains $D$
and the numerator increases by two.
For \texttt{INCORRECT}$\to$\texttt{MISSING}, the numerator is unchanged
and the denominator decreases by one. The presence of an incorrect
keypoint ensures $D\geq2$.
Finally, correct rewards lie in $[1,1+\lambda]$, whereas valid incorrect
rewards equal zero by Eq.~\ref{eq:grm_credit}.
\end{proof}

Thus, process quality refines the ordering among correct assessments
without reversing the per-assessment preference-correctness ordering.
Group centering preserves all pairwise reward differences because
$A_i-A_j=R_i-R_j$.
In an all-correct, format-valid group:
\begin{equation}
A_k=\lambda(q_k-\bar q),
\qquad
\frac{1}{K}\sum_{k=1}^K A_k^2
=
\lambda^2\frac{1}{K}\sum_{k=1}^K(q_k-\bar q)^2.
\label{eq:app_all_correct_signal}
\end{equation}
Process supervision can therefore provide contrast even when outcome
rewards alone would tie. These are sample-level properties; they do
not imply that maximizing expected shaped reward is identical to
maximizing preference accuracy.

\paragraph{Recovery of all-failed groups.}
\begin{appproposition}[Nonzero advantages after successful exploration]
\label{prop:app_signal_recovery}
Suppose all $K\geq2$ original assessments are format-valid and incorrect.
If the single guided revision is accepted with process score $q'$, let
$s=1+\lambda q'$. Then its centered advantage and those of the remaining
assessments are:
\begin{equation}
A_J=\frac{K-1}{K}s>0,
\qquad
A_k=-\frac{s}{K}<0\quad(k\neq J).
\label{eq:app_recovered_advantages}
\end{equation}
The mean squared advantage is:
\begin{equation}
V_A:=\frac{1}{K}\sum_{k=1}^K A_k^2
=\frac{K-1}{K^2}s^2.
\label{eq:app_recovered_signal}
\end{equation}
Conditional on the original group and its diagnostics, let $\eta$
be the probability that the revision is accepted. Then:
\begin{equation}
\eta\frac{K-1}{K^2}
\leq
\mathbb E[V_A]
\leq
\eta\frac{K-1}{K^2}(1+\lambda)^2.
\label{eq:app_expected_signal_bounds}
\end{equation}
\end{appproposition}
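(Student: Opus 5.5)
The argument is a direct computation from the reward rule in Eq.~\ref{eq:grm_credit} and the centering in Eq.~\ref{eq:grm_objective}, followed by an expectation over the acceptance event. I first fix the retained group after an accepted revision. By the acceptance rule, the revision replaces its parent $J$ and is the only format-valid, outcome-correct member of the retained group. Every other member $k\neq J$ is an original assessment that is format-valid and incorrect, so $R_k=0$. The revision receives $R_J=1+\lambda q'=s$ after rescoring under $\mathcal E_0$. The group mean is therefore $s/K$, and centering gives $A_J=s-s/K=\frac{K-1}{K}s$ and $A_k=-s/K$ for $k\neq J$. The signs follow because $s\geq 1>0$ and $K\geq2$; here $s\geq1$ uses $q'\geq0$ from Proposition~\ref{prop:app_score_properties}.

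For Eq.~\ref{eq:app_recovered_signal}, I sum the squares:
\[
\frac{(K-1)^2}{K^2}s^2+(K-1)\frac{s^2}{K^2}=\frac{(K-1)K}{K^2}s^2 .
\]
Dividing by $K$ gives $V_A=\frac{K-1}{K^2}s^2$.

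For the expectation bounds, I condition on the original group and its diagnostics, that is, on $\mathcal E_0$ and the labels. The remaining randomness is the parent index $J$ and the revision sampled from the GRM. I split on the acceptance event, which has probability $\eta$. On rejection, the retained group is the all-incorrect original group, so every reward is $0$, every $A_k$ is $0$, and $V_A=0$. On acceptance, $V_A=\frac{K-1}{K^2}s^2$, where $s=1+\lambda q'$ is random through $q'$. Proposition~\ref{prop:app_score_properties} gives $q'\in[0,1]$, and with $\lambda\geq0$ this yields $1\leq s\leq 1+\lambda$, hence $1\leq s^2\leq(1+\lambda)^2$. Taking conditional expectations,
\[
\mathbb E[V_A]=\eta\,\frac{K-1}{K^2}\,\mathbb E\bigl[s^2\mid\text{accepted}\bigr],
\]
and the bounds on $s^2$ give Eq.~\ref{eq:app_expected_signal_bounds}. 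If $\eta=0$, the conditional expectation is undefined, but both sides are $0$ and the bound holds trivially.

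The computation itself is routine. The only step needing care is the bookkeeping that the revision's score $q'$ is bounded in $[0,1]$ even though it is random and depends on $J$ and on the sampled revision. That bound comes uniformly from Proposition~\ref{prop:app_score_properties}, under the standing assumptions of a nonempty rubric and a valid status for each keypoint. Once it is in place, the bounds hold pointwise on the acceptance event, so no independence between $J$, acceptance, and $q'$ is needed.
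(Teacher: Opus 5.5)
Your proposal is correct and matches the paper's proof: all original rewards are zero, the accepted revision contributes the single nonzero reward $s$ (so the mean is $s/K$), centering and squaring give the two identities, and the expectation bounds come from $V_A=0$ on rejection together with $1\leq s\leq 1+\lambda$ on acceptance. Your extra remarks on the $\eta=0$ case and on the bound for $s^2$ holding pointwise, so that no independence between $J$, acceptance, and $q'$ is needed, are accurate refinements of that argument.
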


\begin{proof}
All original rewards equal zero. Following acceptance, the retained
reward vector contains one entry $s$ and $K-1$ zero entries, with mean
$s/K$. Subtracting this mean proves
Eq.~\ref{eq:app_recovered_advantages}; squaring and averaging yields
Eq.~\ref{eq:app_recovered_signal}.
An unsuccessful attempt leaves every advantage at zero.
The expectation bounds follow from acceptance probability $\eta$ and
$1\leq s\leq1+\lambda$.
\end{proof}

The proposition establishes recovery of reward contrast, rather than
a guarantee of nonzero net parameter gradients or improved test
performance. It also does not assume that guided revision is more
successful than an additional unguided sample; that comparison depends
on the quality of the diagnostics and the GRM's ability to use them.

\paragraph{Rubric ranking supervision.}
For a fixed original group and nonempty $\mathcal P_X$, define
$\Delta_{g,ij}=q(\mathcal E_g,C_{G,i}^0)
-q(\mathcal E_g,C_{G,j}^0)$ and:
\begin{equation}
L_g^{\mathrm{rank}}
:=-r_g^{\mathrm{rub}}
=
\frac{1}{|\mathcal P_X|}
\sum_{(i,j)\in\mathcal P_X}
\log(1+\exp(-\Delta_{g,ij})).
\label{eq:app_pairwise_logistic_loss}
\end{equation}
This loss evaluates a sampled rubric; it is not backpropagated through
the discrete verifier outputs.

\begin{appproposition}[Bounded ranking rewards and pairwise errors]
\label{prop:app_ranking_properties}
Let $a=\log(1+\exp(-1))$, and let:
\begin{equation}
\varepsilon_g
=
\frac{1}{|\mathcal P_X|}
\sum_{(i,j)\in\mathcal P_X}
\mathbb I[\Delta_{g,ij}\leq0]
\label{eq:app_pairwise_error}
\end{equation}
count target pairs whose ordering is reversed or tied. Then:
\begin{equation}
a\leq L_g^{\mathrm{rank}}\leq a+1,
\qquad
\varepsilon_g
\leq
\min\!\left\{1,
\frac{L_g^{\mathrm{rank}}-a}{\log2-a}\right\}.
\label{eq:app_ranking_bounds}
\end{equation}
The centered rubric advantages satisfy
$|B_g|\leq(G_X-1)/G_X$.
\end{appproposition}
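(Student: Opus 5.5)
The argument runs pairwise: bound each term of the logistic loss, average, and finish with a convexity argument for the centered advantages. From Proposition~\ref{prop:app_score_properties}, every Keypoint-F1 score lies in $[0,1]$, so each difference satisfies $\Delta_{g,ij}\in[-1,1]$. The map $h(x)=\log(1+\exp(-x))$ is strictly decreasing, which gives $h(1)\leq h(\Delta_{g,ij})\leq h(-1)$. Here $h(1)=a$ and
\[
h(-1)=\log(1+e)=1+\log(1+e^{-1})=a+1 .
\]
Averaging over the nonempty set $\mathcal P_X$ gives $a\leq L_g^{\mathrm{rank}}\leq a+1$.

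For the pairwise error bound, I would show termwise that
\[
\mathbb I[\Delta\leq 0]\leq \frac{h(\Delta)-a}{\log 2-a}\qquad\text{for } \Delta\in[-1,1].
\]
The right-hand side is nonnegative on $[-1,1]$, because $h(\Delta)\geq a$ there. When $\Delta\leq0$, we have $h(\Delta)\geq h(0)=\log 2$, so the right-hand side is at least $1$. Note that $\log2-a>0$ since $h$ is decreasing. Averaging over $\mathcal P_X$ gives $\varepsilon_g\leq (L_g^{\mathrm{rank}}-a)/(\log2-a)$. Combined with the trivial bound $\varepsilon_g\leq1$, this yields the minimum.

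For the advantages, recall that $r_h^{\mathrm{rub}}=-L_h^{\mathrm{rank}}$ lies in an interval of length $1$, namely $[-a-1,-a]$. We then need
\[
B_g=\frac{G_X-1}{G_X}\,r_g-\frac1{G_X}\sum_{h\neq g}r_h .
\]
This is an affine function of the rewards. Over the box it is maximized at $r_g=-a$, $r_h=-a-1$, and minimized at the reverse choice. Either extreme gives $|B_g|\leq (G_X-1)/G_X$; equivalently, $B_g$ is a weighted difference with total positive weight $(G_X-1)/G_X$ and total negative weight $(G_X-1)/G_X$.

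The only point needing care is that $\Delta_{g,ij}$ is confined to $[-1,1]$. Proposition~\ref{prop:app_score_properties} supplies this under the stated standing assumptions: nonempty rubrics and one valid status per keypoint. Without it, the endpoint identities for $a$ and $a+1$ would fail. Everything else is monotonicity of $h$ and averaging.
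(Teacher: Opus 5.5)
Your proposal is correct and follows essentially the paper's argument. The loss bounds come from monotonicity of $\log(1+e^{-\Delta})$ on $[-1,1]$. Your termwise Markov-type inequality averages to exactly the paper's bound $L_g^{\mathrm{rank}}\ge\varepsilon_g\log2+(1-\varepsilon_g)a$. Your affine extremization of $B_g$ is equivalent to the paper's decomposition $B_g=G_X^{-1}\sum_{h\ne g}(r_g^{\mathrm{rub}}-r_h^{\mathrm{rub}})$, with each difference bounded by one in absolute value.
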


\begin{proof}
Since each process score is in $[0,1]$, every margin is in $[-1,1]$.
The function $\log(1+\exp(-\Delta))$ is decreasing, so its minimum and
maximum on this interval are $a$ and
$\log(1+\exp(1))=a+1$.
A reversed or tied pair has $\Delta\leq0$ and therefore incurs loss
at least $\log2$. All other pairs incur loss at least $a$.
Consequently,
$L_g^{\mathrm{rank}}\geq
\varepsilon_g\log2+(1-\varepsilon_g)a$,
which yields the error bound.
Finally, all rubric rewards lie in an interval of width one, and
$B_g=G_X^{-1}\sum_{h\neq g}(r_g^{\mathrm{rub}}-r_h^{\mathrm{rub}})$,
proving the advantage bound.
\end{proof}

The loss decreases when a target margin increases, since
$\partial\log(1+\exp(-\Delta))/\partial\Delta=-\sigma(-\Delta)<0$.
A constant scorer has $r_g^{\mathrm{rub}}=-\log2$, and receives no
relative advantage if every sampled rubric induces the same score.
Because margins are bounded, zero ranking loss is neither expected
nor required. The error bound controls agreement with the constructed
targets, not agreement with an unobserved ground-truth critique ranking.

On $\mathcal D_O$, if $n_+$ and $n_-$ are the numbers of format-valid
correct and incorrect original assessments, respectively, then
$|\mathcal P_X|=n_+n_-$.
Hence an all-correct or all-incorrect original group supplies no
outcome-only rubric-ranking signal.
On $\mathcal D_H$, reference-score differences can additionally provide
within-correctness pairs.
A successful revision does not change this eligibility because target
pairs are constructed from the original group.

\paragraph{Effect of policy shaping.}
\begin{appproposition}[Bounded gradient multiplier]
\label{prop:app_shaping}
For $r>0$, $\gamma>0$, and a fixed old-policy denominator:
\begin{equation}
\nabla_\theta f(r)
=
\frac{\gamma r}{(r+\gamma)^2}
\nabla_\theta\log\pi_\theta(y_t\mid X,y_{<t}),
\qquad
0<\frac{\gamma r}{(r+\gamma)^2}\leq\frac14.
\label{eq:app_shaping_gradient}
\end{equation}
Thus, an accepted revision has a positive, bounded multiplier on each
log-probability gradient in the shaped maximization objective.
\end{appproposition}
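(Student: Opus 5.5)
The argument is a direct chain-rule computation followed by an elementary one-variable bound.

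First I will write $r=r_t(\theta)=\pi_\theta(y_t\mid X,y_{<t})/\pi_{\theta_{\mathrm{old}}}(y_t\mid X,y_{<t})$. Here the denominator is a detached positive constant, as stipulated in \S\ref{app:algorithm_losses}. Differentiating the numerator gives
\[
\nabla_\theta r=\frac{\nabla_\theta\pi_\theta(y_t\mid X,y_{<t})}{\pi_{\theta_{\mathrm{old}}}(y_t\mid X,y_{<t})}=r\,\nabla_\theta\log\pi_\theta(y_t\mid X,y_{<t}),
\]
using the log-derivative identity, which is valid because $r>0$ forces $\pi_\theta>0$.

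Next I will differentiate $f(r)=r/(r+\gamma)$ as a scalar function:
\[
f'(r)=\frac{(r+\gamma)-r}{(r+\gamma)^2}=\frac{\gamma}{(r+\gamma)^2}.
\]
Combining this with the previous step via the chain rule gives $\nabla_\theta f(r)=\frac{\gamma r}{(r+\gamma)^2}\nabla_\theta\log\pi_\theta$, which is the displayed identity.

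For the bounds on $h(r)=\gamma r/(r+\gamma)^2$, positivity is immediate since $\gamma>0$ and $r>0$. For the upper bound I will use AM--GM: $(r+\gamma)^2\geq4\gamma r$, so $h(r)\leq\tfrac14$, with equality exactly at $r=\gamma$. Alternatively, one can note $h'(r)=\gamma(\gamma-r)/(r+\gamma)^3$, which also locates the maximum at $r=\gamma$.

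Finally, for the interpretive sentence: an accepted revision has $d_k=1$ and $A_k>0$, the latter by Proposition~\ref{prop:app_signal_recovery}. The guided term in Eq.~\ref{eq:app_grm_token_loss} therefore contributes $A_kf(r_{k,t})/T_k$ to the maximization objective. Its gradient is the positive, bounded factor $A_k\,h(r_{k,t})/T_k\leq A_k/(4T_k)$ times the log-probability gradient. There is no real obstacle in this proof. The only point needing care is to state explicitly that the old-policy denominator and the advantage are treated as constants, since this is what legitimizes the chain-rule step.
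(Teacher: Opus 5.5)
Your proposal is correct and follows the paper's proof step for step. Like the paper, you compute $f'(r)=\gamma/(r+\gamma)^2$, use $\nabla_\theta r=r\,\nabla_\theta\log\pi_\theta$ with the old-policy denominator held fixed, apply the chain rule, and obtain the upper bound from $(r+\gamma)^2\geq 4r\gamma$ with equality at $r=\gamma$. Your added justification of the closing sentence, that an accepted revision has $A_k>0$ by Proposition~\ref{prop:app_signal_recovery}, is consistent with the paper's remark that each accepted revision is the only correct assessment in its retained group.
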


\begin{proof}
Differentiation gives $f'(r)=\gamma/(r+\gamma)^2$.
With the denominator fixed,
$\nabla_\theta r=r\nabla_\theta\log\pi_\theta(y_t\mid X,y_{<t})$.
The chain rule proves the identity. Finally,
$(r+\gamma)^2\geq4r\gamma$ gives the upper bound, with equality at
$r=\gamma$.
\end{proof}

This bounds the scalar multiplier, not the complete parameter-gradient
norm without additional assumptions on the policy.
The multiplier depends on the current-to-old probability ratio rather
than the absolute probability of a token. In particular, at
$\theta=\theta_{\mathrm{old}}$, every ratio equals one and the multiplier
is $\gamma/(1+\gamma)^2$ for every token.
Accordingly, this analysis does not establish preferential weighting
of absolutely low-probability tokens or unbiased off-policy estimation.

\paragraph{Scope of the analysis.}
Keypoint-F1 measures critique quality relative to a supplied rubric and
the verifier's labels.
Likewise, successful revision certifies format and preference
correctness, but does not by itself certify improved evidence quality.
Human-reference rankings supply supervision beyond correctness, while
outcome-only rankings provide a coarser signal.
Neither the ranking objective nor a frozen verifier alone guarantees
human alignment under distribution shift or eliminates reward hacking.
The results above characterize local supervision properties, not
global convergence of the coupled GRM and rubric-generator training.

\subsection{Computational Cost and Signal Reuse}
\label{app:algorithm_cost}

For each input, the procedure generates $K$ original assessments and
$G_X$ rubrics, plus at most one guided revision.
Evaluating all candidate rubrics requires at most $G_XK$
rubric--critique verifications. Inputs in $\mathcal D_H$ additionally
require at most $K$ verifications under the stored reference rubric,
and an accepted revision requires one additional verification under
$\mathcal E_0$. Each verification returns the statuses of all keypoints
for one rubric--critique pair.

Cached $\mathcal E_0$ diagnoses are reused for GRM rewards, exploration
guidance, and rubric ranking, rather than recomputed for each purpose.
Once these scores are available, evaluating the pairwise rewards costs
$O(G_X|\mathcal P_X|)$ scalar operations, with
$|\mathcal P_X|\leq K(K-1)/2$.
This is separate from the model inference cost of verification.
No external annotation is required during this phase.
At inference, only the trained GRM is used; rubric generation,
verification, and guided revision are not performed.
\section{Prompt Details}

% Requires tcolorbox; use a separate style from critiqueevalprompt.
% Configuration: fullbatch_human_process_anchor_frozen_matcher_v1/scripts/launch.sh.
% Run: kp60-g4-human-process-frozenm-b64-j0-f937-20260917T175058Z.
\tcbset{trainingprompt/.style={
    colframe=black!30,
    colback=white,
    colbacktitle=blue!55!black,
    coltitle=white,
    fonttitle=\bfseries\small,
    fontupper=\scriptsize,
    boxrule=0.6pt,
    arc=1mm,
    left=5pt, right=5pt, top=4pt, bottom=4pt,
    before skip=4pt, after skip=4pt
}}

\paragraph{Prompt interfaces.}
We provide the four prompt templates used by the frozen-verifier
EnGRICH configuration: initial GRM assessment, rubric generation by
$M_{\mathrm{rub}}$, critique verification by $M_{\mathrm{ver}}$, and
diagnosis-guided revision.
The rubric generator $M_{\mathrm{rub}}$ is updated online, while the
critique verifier $M_{\mathrm{ver}}$ remains frozen after initialization;
no external teacher is queried during online training.
Each template is supplied as a single user message using the backbone's
native chat template, with thinking disabled for Qwen3.
Braced fields denote inserted content.
The \texttt{\{conversation\}} field serializes the conversation with
role markers such as \texttt{[USER]} and \texttt{[ASSISTANT]}.

\paragraph{GRM assessment.}
The GRM receives the conversation and both candidate responses, without
the gold preference or human critiques.
It produces a critique in \texttt{<critics>} followed by a predicted
preference in \texttt{<choice>}.
Figure~\ref{fig:training_prompt_grm} gives the corresponding template.

% Source: fprm/v5/src/fprm_v5/prompting.py::render_base_user_prompt.
% Output instruction checked against the saved step-0 validation inputs.
\begin{figure*}[!t]
\centering
\begin{tcolorbox}[
    trainingprompt,
    title={GRM: Pairwise Critique and Preference},
    width=\textwidth
]
\scriptsize
\color{black!85}

Please act as an impartial judge and evaluate the quality of the
responses provided by two AI assistants.

\par\smallskip
{\ttfamily\raggedright
{[Conversation]}\\
\{conversation\}\par}

\par\smallskip
{\ttfamily\raggedright
{[The Start of Assistant A's Response]}\\
\{response\_A\}\\
{[The End of Assistant A's Response]}\par}

\par\smallskip
{\ttfamily\raggedright
{[The Start of Assistant B's Response]}\\
\{response\_B\}\\
{[The End of Assistant B's Response]}\par}

\par\smallskip
Output your final verdict by strictly following this format:

\par\smallskip
{\ttfamily\raggedright
\textless{}critics\textgreater{}\\
{[Provide a concise analysis of the important strengths and weaknesses
of both responses]}\\
\textless{}/critics\textgreater{}\par}

\par\smallskip
{\ttfamily\raggedright
\textless{}choice\textgreater{}\\
{[[CHOICE]]}\\
\textless{}/choice\textgreater{}\par}

\par\smallskip
Replace \texttt{CHOICE} with exactly \texttt{A} or \texttt{B}.

\end{tcolorbox}

\caption{
Training prompt for the GRM's initial assessment.
The output separates critique text from the final preference decision.
}
\label{fig:training_prompt_grm}
\end{figure*}

\paragraph{Rubric generation.}
The rubric generator $M_{\mathrm{rub}}$ uses the same prompt on
human-critique and outcome-only pairs
(Figure~\ref{fig:training_prompt_provider}).
Its input contains only the conversation and candidate responses;
human critiques, the gold preference, and sampled GRM critiques are
not exposed during rubric generation.
On $\mathcal D_H$, the human-derived reference rubric
$\mathcal E^*$ is used separately to construct target rankings for
rubric learning and is not provided to $M_{\mathrm{rub}}$ as input.
Each generated keypoint states a concrete, response-specific
evaluative claim that can be checked directly against the task and
candidate responses.

% Source: fprm/v5/src/fprm_v5/keypoint_process_reward.py::
% build_unprivileged_keypoint_prompt -> build_keypoint_prompt.
% Selected by shared_unprivileged_reference=true in the 0917 Frozen launcher.
\begin{figure*}[!t]
\centering
\begin{tcolorbox}[
    trainingprompt,
    title={Rubric Generator: Response-Specific Rubric Generation},
    width=\textwidth
]
\scriptsize
\color{black!85}

You construct a concise response-specific rubric for evaluating two
candidate responses.

\par\smallskip

Read the task and both responses, and identify 1 to 5 decisive,
task-relevant evaluation points that a well-grounded comparative
critique should address.
Each point must state one concrete evaluative claim, explicitly
identify Response A or Response B, and contain evidence that can be
checked directly against the task and responses.

\par\smallskip

Prioritize criteria that are decisive for the given task.
When factual, mathematical, safety, or explicit instruction-following
errors are present, include them; when style, coherence, creativity,
or writing quality is central to the task, evaluate those aspects
accordingly.

\par\smallskip

Do not state an overall winner, assume a preferred response, invent
unsupported evidence, give advice to the Judge, or refer to human
feedback or gold preference labels.

\par\smallskip

Output only lines in this exact format:

\par\smallskip
{\ttfamily\raggedright
1 | Response A: one specific evaluative claim.\\
2 | Response B: one specific evaluative claim.\par}

\par\smallskip
{\ttfamily\raggedright
\textless{}TASK\textgreater{}\\
\{conversation\}\\
\textless{}/TASK\textgreater{}\par}

\par\smallskip
{\ttfamily\raggedright
\textless{}RESPONSE\_A\textgreater{}\\
\{response\_A\}\\
\textless{}/RESPONSE\_A\textgreater{}\par}

\par\smallskip
{\ttfamily\raggedright
\textless{}RESPONSE\_B\textgreater{}\\
\{response\_B\}\\
\textless{}/RESPONSE\_B\textgreater{}\par}

\end{tcolorbox}

\caption{
Rubric-generation prompt shared across the two training subsets.
The rubric generator receives only the task and candidate responses;
human critiques, gold preference labels, and sampled GRM critiques are
not exposed to this generation call.
}
\label{fig:training_prompt_provider}
\end{figure*}

\paragraph{Critique verification.}
The frozen critique verifier $M_{\mathrm{ver}}$ receives the task,
both candidate responses, a fixed set of rubric keypoints, and the GRM
critique with its final-choice field removed.
For each keypoint, it predicts one of
\texttt{MATCH}, \texttt{MISSING}, or \texttt{INCORRECT} in a single
call (Figure~\ref{fig:training_prompt_matcher}).
These labels describe how the critique addresses the supplied rubric
keypoints, while the numerical Keypoint-F1 process score is computed
outside the model.
The same verification interface is applied to both initial and revised
critiques.

\begin{figure*}[!t]
\centering
\begin{tcolorbox}[
    trainingprompt,
    title={Critique Verifier: Keypoint-Level Verification},
    width=\textwidth
]
\scriptsize
\color{black!85}

Compare every fixed evaluation point with one pairwise Judge critique.

\par\smallskip

For each point, choose exactly one status:

\par
- \texttt{MATCH}: the critique explicitly covers every essential
component of the point correctly.

\par
- \texttt{MISSING}: the critique does not cover the point completely.
This includes omission, vague or related discussion, a different
reason, or partial coverage.

\par
- \texttt{INCORRECT}: the critique explicitly addresses the same
point but makes a conflicting claim about an essential component,
such as the response attribution, fact, number, constraint, causal
relation, or conclusion.

\par\smallskip

Use the following decision rule:
assign \texttt{MATCH} only when all essential components are explicitly
present and correct; otherwise assign \texttt{INCORRECT} only when the
critique makes an explicit conflicting assertion about the same point;
otherwise assign \texttt{MISSING}.
Partial coverage without an explicit conflict is \texttt{MISSING},
not \texttt{INCORRECT}.

\par\smallskip

Examples:

\par
- Point: ``Response B's multiplication result is wrong.''
Critique: ``Response B is less detailed.''
\texttt{->} \texttt{MISSING}

\par
- Point: ``Response B's multiplication result is wrong.''
Critique: ``Response B calculates the result correctly.''
\texttt{->} \texttt{INCORRECT}

\par
- Point: ``Response B says 3009.6, but
$50 \times 4.18 \times 12 = 2508$.''
Critique: ``Response B miscomputes the product; it should be 2508.''
\texttt{->} \texttt{MATCH}

\par\smallskip

Do not rewrite, add, delete, merge, or reorder evaluation points.
Judge only how the critique addresses the supplied points.

\par\smallskip

Return exactly one line for every evaluation-point ID in the same
order and no additional prose:

\par\smallskip
{\ttfamily\raggedright
1 | MATCH\\
2 | INCORRECT\\
3 | MISSING\par}

\par\smallskip
{\ttfamily\raggedright
\textless{}TASK\textgreater{}\\
\{conversation\}\\
\textless{}/TASK\textgreater{}\par}

\par\smallskip
{\ttfamily\raggedright
\textless{}RESPONSE\_A\textgreater{}\\
\{response\_A\}\\
\textless{}/RESPONSE\_A\textgreater{}\par}

\par\smallskip
{\ttfamily\raggedright
\textless{}RESPONSE\_B\textgreater{}\\
\{response\_B\}\\
\textless{}/RESPONSE\_B\textgreater{}\par}

\par\smallskip
{\ttfamily\raggedright
\textless{}EVALUATION\_POINTS\textgreater{}\\
\{evaluation\_points\}\\
\textless{}/EVALUATION\_POINTS\textgreater{}\par}

\par\smallskip
{\ttfamily\raggedright
\textless{}JUDGE\_CRITIQUE\textgreater{}\\
\{critique\}\\
\textless{}/JUDGE\_CRITIQUE\textgreater{}\par}

\end{tcolorbox}

\caption{
Prompt for the frozen critique verifier $M_{\mathrm{ver}}$.
Evaluation points retain the rubric generator's numbered
natural-language format.
The displayed output lines illustrate the syntax; the actual number
of labels equals the number of supplied keypoints.
}
\label{fig:training_prompt_matcher}
\end{figure*}

\paragraph{Diagnosis-guided revision.}
For an all-failed group in which all eight initial assessments are
format-valid but predict the incorrect preference, we randomly select
one failed assessment and generate a single revised assessment.
The prompt in Figure~\ref{fig:training_prompt_revision} augments the
original GRM input with the selected critique and its keypoint-level
diagnoses.
The \texttt{\{previous\_critique\}} field contains the parent critique
with its final-choice field removed.
Each line in \texttt{\{diagnosed\_keypoints\}} has the form
\texttt{K\{id\} | \{status\} | \{claim\}}, where
\texttt{\{status\}} is one of \texttt{MATCH}, \texttt{MISSING}, or
\texttt{INCORRECT}.
The original task and candidate responses are retained so that the GRM
can verify the diagnostic information before producing its revised
critique and preference.
Neither the gold preference nor an explicit correctness signal is
included in the revision prompt.

\begin{figure*}[!t]
\centering
\begin{tcolorbox}[
    trainingprompt,
    title={GRM: Diagnosis-Guided Revision},
    width=\textwidth
]
\scriptsize
\color{black!85}

Please act as an impartial judge and evaluate the quality of the
responses provided by two AI assistants.

\par\smallskip
{\ttfamily\raggedright
{[Conversation]}\\
\{conversation\}\par}

\par\smallskip
{\ttfamily\raggedright
{[The Start of Assistant A's Response]}\\
\{response\_A\}\\
{[The End of Assistant A's Response]}\par}

\par\smallskip
{\ttfamily\raggedright
{[The Start of Assistant B's Response]}\\
\{response\_B\}\\
{[The End of Assistant B's Response]}\par}

\par\smallskip
{\ttfamily [Diagnostic Context]}\par

The following model-generated evaluation points and diagnostic labels
describe how the previous critique addresses the responses.
They are guidance rather than gold annotations, so verify each point
against the original task and responses before using it.

\par\smallskip

Use the diagnoses as follows:
\par
- \texttt{MATCH}: the previous critique already covers the point
correctly; preserve the relevant reasoning when appropriate.
\par
- \texttt{MISSING}: the point is absent or incomplete; address it if
it is supported by the task and responses.
\par
- \texttt{INCORRECT}: the previous critique makes an explicit
conflicting claim; correct or remove that reasoning after checking the
original evidence.

\par\smallskip

No gold preference or explicit correctness signal is provided.
Do not infer the final preference from the diagnostic labels alone.
Determine the final choice from the task, the two responses, and the
revised critique.

\par\smallskip
{\ttfamily\raggedright
\textless{}previous\_critique\textgreater{}\\
\{previous\_critique\}\\
\textless{}/previous\_critique\textgreater{}\par}

\par\smallskip
{\ttfamily\raggedright
\textless{}diagnosed\_keypoints\textgreater{}\\
\{diagnosed\_keypoints\}\\
\textless{}/diagnosed\_keypoints\textgreater{}\\
{[End Diagnostic Context]}\par}

\par\smallskip

Output your final verdict by strictly following this format:

\par\smallskip
{\ttfamily\raggedright
\textless{}critics\textgreater{}\\
{[Provide a concise analysis of the important strengths and weaknesses
of both responses]}\\
\textless{}/critics\textgreater{}\par}

\par\smallskip
{\ttfamily\raggedright
\textless{}choice\textgreater{}\\
{[[CHOICE]]}\\
\textless{}/choice\textgreater{}\par}

\par\smallskip

Replace \texttt{CHOICE} with exactly \texttt{A} or \texttt{B}.

\end{tcolorbox}

\caption{
Diagnosis-guided revision prompt used for all-failed groups.
The GRM uses the keypoint-level diagnoses as guidance, verifies them
against the original inputs, and then produces a revised critique and
preference without access to the gold preference or an explicit
correctness signal.
}
\label{fig:training_prompt_revision}
\end{figure*}

\section{Dataset Details}
\label{app:datasets}

\subsection{Training Data}
\label{app:training_data}

\paragraph{Training mixture.}
Following the data-mixture design of RM-NLHF
\citep{wang2026rewardmodelingnaturallanguage},
we construct a training set of 60,000 pairwise preference examples
covering general assistance, mathematics, code, safety, instruction
following, and writing.
The training set consists of a human-critique subset
$\mathcal{D}_H$ and a larger outcome-only subset
$\mathcal{D}_O$:
\begin{equation}
    \mathcal{D}
    =
    \mathcal{D}_H \cup \mathcal{D}_O,
    \qquad
    |\mathcal{D}_H| = 12{,}000,
    \qquad
    |\mathcal{D}_O| = 48{,}000.
\end{equation}
Both subsets contain pairwise preference labels, while
$\mathcal{D}_H$ additionally provides human-derived critique
supervision.
Human-derived supervision is therefore available for 20\% of the
training pairs, corresponding to a $1{:}4$ ratio between
$\mathcal{D}_H$ and $\mathcal{D}_O$.
Table~\ref{tab:training_datasets} summarizes the exact composition.

\definecolor{HumanBlue}{HTML}{EAF3FB}
\definecolor{HumanText}{HTML}{356D96}

% Outcome-only subset
\definecolor{OutcomeOrange}{HTML}{FCF2E8}
\definecolor{OutcomeText}{HTML}{A86632}

% Source links: muted purple palette
\definecolor{LinkPurple}{HTML}{7C7CB6}
\definecolor{LinkBorder}{HTML}{CCCCE3}
\definecolor{LinkBg}{HTML}{F5F5FA}

% ============================================================
% Training datasets table
% ============================================================

\begin{table*}[t]
\centering

\caption{
Composition of the 60K training mixture.
$\mathcal{D}_H$ includes human-derived critique supervision,
whereas $\mathcal{D}_O$ contains preference supervision only.
}
\label{tab:training_datasets}

\begingroup

% Local table settings
\small
\setlength{\tabcolsep}{4.5pt}
\renewcommand{\arraystretch}{1.12}

% Hyperlink settings
\hypersetup{
    colorlinks=true,
    urlcolor=LinkPurple,
    pdfborder={0 0 0}
}

% ============================================================
% Source link style
% ============================================================

\newcommand{\sourcelink}[2]{%
    \href{#1}{%
        \begingroup
        \setlength{\fboxsep}{2.0pt}%
        \setlength{\fboxrule}{0.4pt}%
        \fcolorbox{LinkBorder}{LinkBg}{%
            \strut
            \textcolor{LinkPurple}{%
                \footnotesize\bfseries\rmfamily #2%
            }%
        }%
        \endgroup
    }%
}

% Two-line source link for long dataset names
\newcommand{\sourcelinktwoline}[3]{%
    \href{#1}{%
        \begingroup
        \setlength{\fboxsep}{2.2pt}%
        \setlength{\fboxrule}{0.4pt}%
        \fcolorbox{LinkBorder}{LinkBg}{%
            \textcolor{LinkPurple}{%
                \footnotesize
                \bfseries
                \rmfamily
                \shortstack[c]{%
                    #2\\[-0.4pt]
                    #3%
                }%
            }%
        }%
        \endgroup
    }%
}

% ============================================================
% Table
% ============================================================

\begin{tabular*}{\textwidth}{
    @{\extracolsep{\fill}}
    >{\raggedright\arraybackslash}p{0.085\textwidth}
    >{\raggedright\arraybackslash}p{0.225\textwidth}
    r
    r
    >{\centering\arraybackslash}m{0.39\textwidth}
    @{}
}

\toprule

\textbf{Domain}
&
\textbf{Dataset}
&
\textbf{Pairs}
&
\textbf{Share}
&
\textbf{Source}
\\

\midrule

% ============================================================
% Human-critique subset
% ============================================================

\rowcolor{HumanBlue}
\multicolumn{5}{@{}l}{
    \textcolor{HumanText}{
        \normalsize
        \bfseries
        \itshape
        Human-critique subset
        \enspace
        $\mathcal{D}_H$
    }
}
\\[1pt]

Mixed
&
HelpSteer3-Preference
&
12,000
&
20.00\%
&
\sourcelink
    {https://huggingface.co/datasets/Tongyi-ConvAI/RM-NLHF}
    {RM-NLHF}
\\

\addlinespace[3pt]

% ============================================================
% Outcome-only subset
% ============================================================

\rowcolor{OutcomeOrange}
\multicolumn{5}{@{}l}{
    \textcolor{OutcomeText}{
        \normalsize
        \bfseries
        \itshape
        Outcome-only subset
        \enspace
        $\mathcal{D}_O$
    }
}
\\[1pt]

% ------------------------------------------------------------
% Skywork mixture
% ------------------------------------------------------------

General
&
HelpSteer2
&
3,000
&
5.00\%
&
\multirow[c]{5}{0.39\textwidth}{
    \centering
    \sourcelinktwoline
        {https://huggingface.co/datasets/Skywork/Skywork-Reward-Preference-80K-v0.2}
        {Skywork-Reward-}
        {Preference-80K-v0.2}
}
\\

General
&
Magpie-Pro-Llama-3.1
&
8,000
&
13.33\%
&
\\

General
&
Magpie-Pro
&
1,000
&
1.67\%
&
\\

General
&
OffsetBias
&
4,000
&
6.67\%
&
\\

Safety
&
WildGuard
&
2,000
&
3.33\%
&
\\

\addlinespace[3pt]

% ------------------------------------------------------------
% Code
% ------------------------------------------------------------

Code
&
Code Preference
&
6,000
&
10.00\%
&
\sourcelink
    {https://huggingface.co/datasets/Vezora/Code-Preference-Pairs}
    {Code-Preference-Pairs}
\\

% ------------------------------------------------------------
% Math
% ------------------------------------------------------------

Math
&
Math Step-DPO
&
8,000
&
13.33\%
&
\sourcelink
    {https://huggingface.co/datasets/xinlai/Math-Step-DPO-10K}
    {Math-Step-DPO-10K}
\\

% ------------------------------------------------------------
% Safety
% ------------------------------------------------------------

Safety
&
HH-RLHF Safety
&
4,000
&
6.67\%
&
\sourcelink
    {https://huggingface.co/datasets/javirandor/hh-rlhf-safety-v3-dpo}
    {HH-RLHF-Safety-v3}
\\

% ------------------------------------------------------------
% Instruction following
% ------------------------------------------------------------

Instruction
&
Tulu-3 Pref-IF
&
6,000
&
10.00\%
&
\sourcelink
    {https://huggingface.co/datasets/allenai/tulu-3-pref-personas-instruction-following}
    {Tulu-3-Pref-IF}
\\

% ------------------------------------------------------------
% Writing
% ------------------------------------------------------------

Writing
&
LitBench-Train
&
6,000
&
10.00\%
&
\sourcelink
    {https://huggingface.co/datasets/SAA-Lab/LitBench-Train}
    {LitBench-Train}
\\

% ============================================================
% Total
% ============================================================

\midrule

\multicolumn{2}{@{}l}{
    \textbf{Total}
}
&
\textbf{60,000}
&
\textbf{100.00\%}
&
\\

\bottomrule

\end{tabular*}

\endgroup
\end{table*}

\paragraph{Human-critique supervision.}
For $\mathcal D_H$, we select 12,000 examples from the HelpSteer3 portion
of the official RM-NLHF training release~\citep{wang2026rewardmodelingnaturallanguage}.
These examples originate from HelpSteer3-Preference, whose human
annotations include natural-language rationales for pairwise preferences.
We use the human-derived process supervision released with RM-NLHF and
employ a frozen Qwen3.7-Max teacher to construct reference rubrics and
annotate the keypoint-level relations required for MetaCritic initialization.
The selected 12,000 examples comprise 5,816 general, 2,717 code,
1,536 STEM, and 1,931 multilingual examples, according to the released
domain annotations.

\paragraph{Outcome-only preference data.}
The 48,000 examples in $\mathcal{D}_O$ provide explicit pairwise
preference labels without human-critique supervision.
At the source level, the mixture contains 16,000
general-assistance, 8,000 mathematics, 6,000 code, 6,000 safety,
6,000 instruction-following, and 6,000 writing pairs.

The HelpSteer2, Magpie-Pro-Llama-3.1, Magpie-Pro, OffsetBias,
and WildGuard components are drawn from the
\texttt{Skywork-Reward-Preference-80K-v0.2} release
\citep{liu2024skyworkrewardbagtricksreward}.
The remaining examples are drawn from
\texttt{Math-Step-DPO-10K}~\citep{lai2024stepdpostepwisepreferenceoptimization},
\texttt{Code-Preference-Pairs},
a safety-oriented derivative of Anthropic HH-RLHF
\citep{bai2022traininghelpfulharmlessassistant},
the Tulu~3 instruction-following preference release
\citep{lambert2025tulu3pushingfrontiers},
and \texttt{LitBench-Train}~\citep{fein-etal-2026-litbench}.
For Math-Step-DPO-10K, we construct complete candidate responses
by concatenating the shared initial reasoning with the released
chosen and rejected continuations.
All sources in $\mathcal{D}_O$ provide an explicit preferred and
rejected response.

\paragraph{Sampling and preprocessing.}
We filter training inputs exceeding 4,096 tokens.
Within each outcome-only source, preferred responses are assigned
to sides A and B in equal numbers to avoid introducing a fixed
positional preference.
After preprocessing, the resulting mixture contains
60,000 distinct contexts and 60,000 response pairs.
The 60K count refers only to the original preference pairs.
Validation examples, sampled GRM trajectories, and
teacher-generated rubric or relation targets are not counted as
additional training examples and do not increase the amount of
human-derived supervision.

\subsection{Validation and Evaluation Data}
\label{app:evaluation_data}

We evaluate on seven reward-model benchmarks: HelpSteer3 (HS3)~\citep{wang2025helpsteer3}, RewardBench~2 (RB2)~\citep{malik2026rewardbench2advancingreward}, RM-Bench (RM-B)~\citep{liu2024rmbenchbenchmarkingrewardmodels}, SCAN-HPD~\citep{wang2026scan}, HREF~\citep{lyu2025href}, LitBench~\citep{fein-etal-2026-litbench} and WQ-Arena (WQ)~\citep{chakrabarty2025aipolish}. Table~\ref{tab:evaluation_datasets} summarizes their sizes, evaluation metrics, and public sources.

% ============================================================
% Column types
% ============================================================
\newcolumntype{L}[1]{%
    >{\raggedright\arraybackslash}p{#1}%
}

\newcolumntype{C}[1]{%
    >{\centering\arraybackslash}m{#1}%
}

% ============================================================
% Source-link colors
% Same palette as the training-data table
% ============================================================
\definecolor{EvalLinkPurple}{HTML}{7C7CB6}
\definecolor{EvalLinkBorder}{HTML}{CCCCE3}
\definecolor{EvalLinkBg}{HTML}{F5F5FA}

\begin{table*}[t]
\centering

\caption{
Evaluation suites used in our experiments.
Counts correspond to the benchmark manifests before input-length
filtering.
\emph{Examples} denotes benchmark records and
\emph{comparisons} denotes pairwise GRM predictions.
}
\label{tab:evaluation_datasets}

\begingroup

\small
\setlength{\tabcolsep}{3.5pt}
\renewcommand{\arraystretch}{1.12}

\hypersetup{
    colorlinks=true,
    urlcolor=EvalLinkPurple,
    pdfborder={0 0 0}
}

\newcommand{\evallink}[2]{%
    \href{#1}{%
        \begingroup
        \setlength{\fboxsep}{2.0pt}%
        \setlength{\fboxrule}{0.4pt}%
        \fcolorbox{EvalLinkBorder}{EvalLinkBg}{%
            \strut
            \textcolor{EvalLinkPurple}{%
                \footnotesize\bfseries\rmfamily #2%
            }%
        }%
        \endgroup
    }%
}

\begin{tabular*}{\textwidth}{
    @{\extracolsep{\fill}}
    L{0.19\textwidth}
    C{0.10\textwidth}
    C{0.125\textwidth}
    L{0.22\textwidth}
    C{0.25\textwidth}
    @{}
}

\toprule

\textbf{Benchmark}
&
\textbf{Examples}
&
\textbf{Comparisons}
&
\textbf{Metric}
&
\textbf{Source}
\\

\midrule

HelpSteer3 (HS3)
&
1,287
&
1,287
&
Pairwise accuracy
&
\evallink
    {https://huggingface.co/datasets/Tongyi-ConvAI/RM-NLHF}
    {RM-NLHF}
\\

RewardBench 2 (RB2)
&
1,763
&
5,289
&
Mean domain accuracy
&
\evallink
    {https://huggingface.co/datasets/allenai/reward-bench-2}
    {RewardBench-2}
\\

RM-Bench (RM-B)
&
1,327
&
11,943
&
Mean domain accuracy
&
\evallink
    {https://huggingface.co/datasets/THU-KEG/RM-Bench}
    {RM-Bench}
\\

SCAN-HPD
&
626
&
626
&
Pairwise accuracy
&
\evallink
    {https://huggingface.co/datasets/DanliuDanliu/SCAN-Dataset}
    {SCAN-Dataset}
\\

HREF-HA
&
1,452
&
1,452
&
LOO human agreement
&
\evallink
    {https://huggingface.co/datasets/allenai/href_preference}
    {HREF-Preference}
\\

LitBench
&
2,480
&
4,960
&
Order-averaged accuracy
&
\evallink
    {https://huggingface.co/datasets/SAA-Lab/LitBench-Test}
    {LitBench-Test}
\\

WQ-LMArena
&
1,959
&
1,959
&
Pairwise accuracy
&
\evallink
    {https://github.com/salesforce/creativity_eval}
    {creativity\_eval}
\\

\midrule

\textbf{Total}
&
\textbf{10,894}
&
\textbf{27,516}
&
--
&
\\

\bottomrule

\end{tabular*}

\endgroup
\end{table*}

\paragraph{HelpSteer3.}
For HS3, we use the processed HelpSteer3 validation artifact
released with RM-NLHF \citep{wang2026rewardmodelingnaturallanguage}.
Our evaluation manifest contains 1,287 pairwise examples:
645 general, 287 code, 165 STEM, and 190 multilingual examples.
We report pairwise preference accuracy.

\paragraph{RewardBench 2.}
For RewardBench~2 \citep{malik2026rewardbench2advancingreward}, we evaluate on
its five non-tie domains:
Factuality (475 examples), Focus (495), Math (183),
Precise Instruction Following (160), and Safety (450).
We exclude the 102 tie examples from the 1,865-example public
release, leaving 1,763 examples and 5,289 pairwise comparisons.
An example is counted as correct only when the preferred response
wins all three comparisons.
Following the benchmark protocol, the reported RB2 score is the
unweighted mean of the five domain accuracies.

\paragraph{RM-Bench.}
For RM-Bench~\citep{liu2024rmbenchbenchmarkingrewardmodels}, we evaluate on its four domains:
Chat (129 examples), Code (228), Math (529), and Safety (441),
for a total of 1,327 examples.
Each example contains three chosen and three rejected responses with
controlled styles---concise, detailed plain text, and detailed
Markdown---designed to test sensitivity to subtle content differences
and robustness to style bias.
We evaluate all $3\times3$ chosen--rejected pairings, yielding
11,943 pairwise comparisons.
Following the official protocol, the reported RM-B score is the
unweighted mean of the four domain accuracies.

\paragraph{SCAN-HPD.}
SCAN-HPD \citep{wang2026scan} evaluates pairwise human
preferences across diverse capabilities, including knowledge,
reasoning, mathematics, coding, writing, and role-playing.
We exclude 10 tied examples from its 636 released records and
report pairwise preference accuracy on the remaining 626 examples.

\paragraph{HREF.}
For HREF \citep{lyu2025href}, we use the public human-agreement
preference artifact rather than the hidden HREF evaluation set.
The public artifact contains 1,452 response pairs.
We report the leave-one-out human-agreement score,
micro-averaged over the public pairs using seed 42.
We refer to this evaluation as \textsc{HREF-HA} throughout the
paper to distinguish it from the hidden HREF benchmark.
The hidden evaluation set and the separate 430-example
reference-development artifact are not used as substitutes for
this public human-agreement set.

\paragraph{Writing benchmarks.}
LitBench \citep{fein-etal-2026-litbench} evaluates creative-writing
preferences using human judgments derived from Reddit stories.
We use the released test artifact containing 2,480 comparison
records and retain its repeated comparisons.
Each pair is evaluated under both A/B response orders, yielding
4,960 GRM predictions, and we average accuracy across the two
orders.
The LitBench examples included in the training mixture are drawn
from the separate LitBench-Train release.
WQ-LMArena denotes the 1,959-pair LM-Arena component of the
Writing Quality Benchmark introduced by
\citet{chakrabarty2025aipolish}.
We report pairwise preference accuracy.
% WritingPreferenceBench
% \citep{ying2026writingpreferences} contains 1,800
% human-annotated preference pairs designed to assess subjective
% writing preferences across languages and writing domains.
% It comprises 600 Chinese examples and two English subsets of
% 600 examples each.
% We report pairwise preference accuracy.

\paragraph{Input-length filtering.}
We filter inputs exceeding the evaluation limits:
4,096 prompt tokens for HS3 and 32,768 for the other six benchmarks under
our standard generative-Judge protocol.
For Qwen3-4B and Gemma3-12B, this excludes 23 and 40 examples from the
1,287-pair HS3 subset, yielding 1,264 and 1,247 scored pairs, respectively.
The subset sizes in Table~\ref{tab:evaluation_datasets} remain distinct from
these tokenizer-dependent effective counts.
Overall is the unweighted arithmetic mean of the seven benchmark scores,
not an average weighted by their numbers of examples or predictions.

\section{Implementation Details}
\label{app:impl}

\subsection{General Details}
\label{app:general_details}

\paragraph{Backbones and LoRA.}
We use Qwen3-4B in non-thinking mode and the text-only language model of
Gemma3-12B-instruct, retaining each backbone's native chat template.
The Judge starts from the released backbone with a fresh LoRA adapter
for each training method.
Backbone weights remain frozen; LoRA is applied to the attention
projections (query, key, value, and output) and the MLP gate, up, and
down projections, with zero LoRA dropout and no trainable biases.
All Judge methods use rank $r=128$ and scaling parameter $\alpha=256$.
The rubric generator $M_{\mathrm{rub}}$ and critique verifier
$M_{\mathrm{ver}}$ each use $r=64$, $\alpha=128$, with separate adapter
parameters during online training.
Table~\ref{tab:general_training_details} gives their sizes and the
training budgets. For guided revisions, we set $\gamma=0.1$ in the shaped policy ratio.

\begin{table}[t]
\centering
\caption{
Principal adapter sizes and training budgets.
Adapter sizes count LoRA parameters only.
The verifier adapter is frozen during online training.
Online budgets refer to one pass over the tokenizer-filtered
preference mixture.
}
\label{tab:general_training_details}

\small
\setlength{\tabcolsep}{5pt}
\renewcommand{\arraystretch}{1.08}

\begin{tabular}{lcc}
\toprule
Setting & Qwen3-4B & Gemma3-12B \\
\midrule
Judge LoRA $(r,\alpha)$ & $(128,256)$ & $(128,256)$ \\
Judge adapter parameters & 264.24M & 523.76M \\
Each MetaCritic LoRA $(r,\alpha)$ & $(64,128)$ & $(64,128)$ \\
Each MetaCritic adapter parameters & 132.12M & 261.88M \\
\midrule
Global Judge batch size & 64 & 64 \\
Judge prompt / response cap & 4,096 / 2,048 & 4,096 / 2,048 \\
Online training pairs & 60,000 & 59,835 \\
Online updates (one epoch) & 937 & 934 \\
SFT training examples & 11,928 & 11,846 \\
SFT updates (three epochs) & 558 & 555 \\
DPO preference pairs & 24,729 & 18,900 \\
DPO updates & 937 & 934 \\
\bottomrule
\end{tabular}
\end{table}

\paragraph{Shared optimization settings.}
Judge training uses 32 H20 GPUs, BF16 precision, and gradient
checkpointing.
We use AdamW with $(\beta_1,\beta_2)=(0.9,0.999)$,
weight decay $0.01$, and gradient-norm clipping at $1.0$.
The global batch contains 64 training examples; for online RL, these
are 64 preference prompts with $K=8$ sampled Judge assessments each.
Judge sampling uses temperature $1.0$, top-$p=1.0$, and no top-$k$
restriction.
Ordinary Judge prompts and generated assessments are capped at 4,096
and 2,048 tokens, respectively.
Inputs exceeding the prompt limit are filtered.
Online RL drops incomplete final prompt batches.
We evaluate final checkpoints using greedy Judge generation.

\paragraph{Frozen-verifier EnGRICH.}
The frozen-verifier recipe uses one epoch over the mixture described in
\S\ref{app:training_data}, with constant learning rates of $10^{-6}$
for both the Judge and rubric generator and no warmup.
The process-reward coefficient is $\lambda=0.5$.
On $\mathcal D_H$, we sample three rubrics from
$M_{\mathrm{rub}}$ and additionally use the stored human-derived
reference rubric $\mathcal E^*$ only to construct target rankings.
On $\mathcal D_O$, we sample four rubrics from $M_{\mathrm{rub}}$.
The first sampled rubric $\mathcal E_0$ supplies the GRM process scores
and revision feedback.
Sampled rubrics are capped at 256 tokens, and the generator is updated
once per batch with nonzero credit.

At most one diagnostic-guided revision is generated for an all-failed
group; a successful revision replaces its parent, keeping the group
size at eight.
The Judge uses group-centered advantages, subtracting the group mean
from each reward without standard-deviation normalization.
Token-level losses are averaged within each response and then averaged
across responses.
We use a PPO clip range of $0.2$ and one optimization epoch per rollout
batch.
The verifier remains frozen; there are no online external-annotator
calls or additional rubric SFT replay.
Neither KL regularization nor an entropy bonus is used.

Before online training, MetaCritic is initialized on $\mathcal D_H$ in
two SFT stages using Qwen3.7-Max supervision.
We first train $M_{\mathrm{rub}}$ on reference rubrics constructed from
human-derived process supervision, then freeze $M_{\mathrm{rub}}$ and
train $M_{\mathrm{ver}}$ on critiques sampled from the initial Judge
against rubrics generated by $M_{\mathrm{rub}}$.
Each MetaCritic SFT stage uses one epoch, batch size 256, a learning
rate of $10^{-5}$, a cosine schedule with 3\% warmup, and a
4,480-token sequence cap.

\paragraph{SFT.}
The Judge SFT baseline uses only $\mathcal D_H$.
Its targets contain the human critique and gold preference in the
Judge output format, with loss applied to completion tokens.
We train for three epochs with batch size 64, a constant learning rate
of $2\times10^{-5}$, no warmup, and a 6,144-token total sequence cap.

\paragraph{DPO.}
We freeze the initial Judge and sample eight assessments per source
prompt once.
For each prompt containing both valid correct and incorrect judgments,
we uniformly select one of each to form a preference pair; malformed
and generation-capped candidates are excluded.
The trainable Judge and fixed reference start from the same initial
model.
We use sigmoid DPO with $\beta=0.1$, batch size 64, constant learning
rate $10^{-6}$, and no warmup.
Completion log probabilities are summed, including the native
end-of-turn token.
The fixed 937/934-update budgets repeat the effective preference sets
as needed, corresponding to approximately 2.42/3.16 passes for
Qwen/Gemma.
No SFT term or MetaCritic is used.

\paragraph{GRPO.}
The outcome-only baseline uses the same online batch size, sampling
settings, one-epoch budget, and $10^{-6}$ constant learning rate as
EnGRICH.
Malformed, valid incorrect, and valid correct judgments receive
rewards $-1$, $0$, and $1$, respectively.
As in EnGRICH, advantages are computed by subtracting the group mean
without standard-deviation normalization, and token-level losses are
averaged within each response and then averaged across responses.
We use the same PPO clip range of $0.2$ and one optimization epoch per
rollout batch.
The baseline has no process reward, rubric generator, verifier,
diagnostic revision, KL penalty, or entropy bonus.

\paragraph{RC-GRPO.}
We follow the original rationale-consistency reward construction of
\citet{wang2026outcomeaccuracyenoughaligning} and optimize the Judge
using the shared GRPO configuration described above.
Human critiques in $\mathcal D_H$ are decomposed into atomic rationales,
which are matched one-to-one against generated rationales.
Average Precision is used as the rationale reward, which is
multiplicatively gated by preference correctness, and the Judge is
optimized with GRPO.
Because human rationales are required, RC-GRPO is trained only on
$\mathcal D_H$.

\paragraph{RM-NLHF.}
We follow the original RM-NLHF process-reward construction and MetaRM
training procedure of \citet{wang2026rewardmodelingnaturallanguage},
and optimize the Judge using the shared GRPO configuration described
above.
On $\mathcal D_H$, the process reward is derived from the F1 similarity
between core arguments in generated and human critiques and combined
with preference correctness through outcome regularization.
A scalar MetaRM is initialized by regressing this reward on
$\mathcal D_H$ and is updated online on current Judge critiques before
providing process rewards for $\mathcal D_O$.
The resulting process-aware rewards are used to optimize the Judge
with GRPO.

\paragraph{Public reward-model sources.}
We use the publicly released checkpoints for
GRAM-Qwen3-4B\footnote{\url{https://huggingface.co/NiuTrans/GRAM-Qwen3-4B-RewardModel}},
GRAM-R$^2$-3B\footnote{\url{https://huggingface.co/wangclnlp/GRAM-RR-LLaMA-3.2-3B-RewardModel}},
CE-RM-4B\footnote{\url{https://huggingface.co/PKU-ONELab/CE-RM-4B}},
UltraRM-13B\footnote{\url{https://huggingface.co/openbmb/UltraRM-13b}},
RM-R1-14B\footnote{\url{https://huggingface.co/gaotang/RM-R1-DeepSeek-Distilled-Qwen-14B}},
and UniRRM-14B\footnote{\url{https://huggingface.co/SUSTech-NLP/UniRRM-14B}}.

\subsection{Critique Quality Evaluation Details (\S\ref{sec:evidence_quality})}
\label{app:critique_quality_utility}

\tcbset{critiqueevalprompt/.style={
    colframe=teal!55!black,
    colback=white,
    colbacktitle=teal!55!black,
    coltitle=white,
    fonttitle=\bfseries\small,
    fontupper=\scriptsize,
    boxrule=0.6pt,
    arc=1mm,
    left=5pt, right=5pt, top=4pt, bottom=4pt,
    before skip=4pt, after skip=4pt
}}

\subsubsection{Direct Critique Evaluation}
\label{app:critique_quality_details}

\paragraph{Evaluation setup.}
Following prior work on rationale-consistency evaluation and
natural-language reward modeling~\citep{
wang2026outcomeaccuracyenoughaligning,
wang2026rewardmodelingnaturallanguage},
we compare Base, SFT, DPO, GRPO, RC-GRPO, RM-NLHF, and EnGRICH on the
same held-out HelpSteer3 question--response pairs.
All models use the same Brief judging format in
Figure~\ref{fig:prompt_brief_generation}, producing a critique followed
by an A/B choice.
We use GPT-5.5 for LLM-based reference construction and critique
evaluation, reusing or adapting prior evaluation protocols where
applicable.
Braced fields in the prompt templates denote task content inserted at
evaluation time.

% Source: rm_nlhf_benchmark_eval.py::render_paper_pair_prompt.
\begin{figure*}[!t]
\centering
\begin{tcolorbox}[critiqueevalprompt, title={Brief Critique Generation}, width=\textwidth]
\scriptsize
\color{black!85}
Please act as an impartial judge and evaluate the quality of the responses provided by two AI Chatbots to the Client question displayed below.
\par\smallskip
{\ttfamily\raggedright
{[Client Question]}\\
\{question\}\par}
\par\smallskip
{\ttfamily\raggedright
{[The Start of Chatbot A's Response]}\\
\{response\_A\}\\
{[The End of Chatbot A's Response]}\par}
\par\smallskip
{\ttfamily\raggedright
{[The Start of Chatbot B's Response]}\\
\{response\_B\}\\
{[The End of Chatbot B's Response]}\par}
\par\smallskip
Output your final verdict by strictly following this format:
\textless{}critics\textgreater{}
[Provide a brief summary of your reasoning for the choice]
\textless{}/critics\textgreater{}
\textless{}choice\textgreater{}
[[A]]
\textless{}/choice\textgreater{}
Note: Use [[A]] if A is better, or [[B]] if B is better.
\end{tcolorbox}
\caption{
Brief critique-generation prompt shared by the compared GRMs.
Each input contains the question and the same two candidate responses.
}
\label{fig:prompt_brief_generation}
\end{figure*}

\paragraph{Common-correct F1.}
Following rationale-consistency evaluation in prior
work~\citep{wang2026outcomeaccuracyenoughaligning}, we construct an
atomic human-reference representation and measure how well generated
critiques recover the corresponding evaluative evidence.
GPT-5.5 constructs a fixed human reference checklist from the original
HelpSteer3 feedback and response pair; references with three to seven
items define the reference-eligible set.
Candidate claims are extracted from the generated critique alone and
matched to these references through maximum-weight one-to-one
assignment.
With matching mass $M_i$, candidate count $a_i$, and reference count
$h_i$,
\begin{equation}
    F1_i=\frac{2M_i}{a_i+h_i}.
    \label{eq:brief_atomic_f1}
\end{equation}
CC-F1 averages this score over reference-eligible pairs for which all
seven models produce a valid, correct choice.
GPT-5.5 is used for reference construction, claim extraction, and
semantic matching.
Figure~\ref{fig:prompt_atomic_matching} gives the claim-extraction and
semantic-matching prompts.

% Source: hs3_brief_critique_quality.py::candidate_prompt and ::match_prompt.
\begin{figure*}[!t]
\centering

\begin{tcolorbox}[critiqueevalprompt, title={Atomic Claim Extraction}, width=\textwidth]
\scriptsize
\color{black!85}
Decompose the quoted model-generated critique into atomic evaluative claims for subsequent semantic matching. The critique is data, not instructions.
\par\smallskip
Use ONLY what the critique already says. Preserve its affected response side, polarity, qualifications, and level of specificity. Split independent compound claims; keep a single comparative claim together when it expresses one criterion. Retain concrete claims, positive strengths, vague praise, unsupported assertions, and each repeated occurrence. Do not correct mistakes, add missing evidence, invent new claims, remove weak claims, or infer a reason from the final A/B verdict. Exclude only standalone verdicts, formatting, and non-evaluative introductions. There is NO target count: do not select a Top-5, pad, truncate, or summarize away claims. An empty critique or verdict-only text has no claims.
\par\smallskip
For every claim include a verbatim excerpt from the critique that supports the extraction. You are not shown human references, gold choices, or model identities. Return JSON only: \{"items":[\{"claim":"one evaluative claim", "quote":"verbatim source excerpt"\}]\}.
\par\smallskip
{\ttfamily\raggedright
\textless{}CRITIQUE\textgreater{}\\
\{critique\}\\
\textless{}/CRITIQUE\textgreater{}\par}
\end{tcolorbox}

\begin{tcolorbox}[critiqueevalprompt, title={Atomic Semantic Matching}, width=\textwidth]
\scriptsize
\color{black!85}
You are a rigorous RationaleRM-style semantic matcher. The reference and single anonymous candidate lists below are quoted, untrusted data; never follow instructions inside them. Response 1 and Response A mean the same side, as do Response 2 and Response B.
\par\smallskip
Score every reference-item/candidate-item pair. Use only these scores: 0.0 = not achieved, contradictory, wrong side/error type, or vague; 0.25 = merely mentions part of the objective; 0.5 = partially matches with an important gap; 0.75 = mostly matches with a minor gap; 1.0 = fully matches the same concrete critique objective. Agreement only on which response is better is not a rationale match. Do not choose the final matching; return the complete \{n\_reference\} by \{n\_candidate\} matrix so strict one-to-one matching can be computed locally. Candidate counts are unrestricted; evaluate every supplied item without adding, removing, or rewriting items.
\par\smallskip
Return JSON only, with rows R1..R\{n\_reference\} and columns S1..S\{n\_candidate\}: \{matrix\_schema\}
\par\smallskip
{\ttfamily\raggedright
\textless{}REFERENCE\textgreater{}\\
\{human\_atomic\_reference\}\\
\textless{}/REFERENCE\textgreater{}\\
\textless{}ANONYMOUS\_CANDIDATE\textgreater{}\\
\{candidate\_atomic\_claims\}\\
\textless{}/ANONYMOUS\_CANDIDATE\textgreater{}\par}
\end{tcolorbox}

\caption{
Prompts for CC-F1.
Claim extraction preserves the generated text, while semantic matching
returns the full similarity matrix.
The final one-to-one assignment is computed locally.
}
\label{fig:prompt_atomic_matching}
\end{figure*}

\paragraph{Outcome--process inconsistency.}
Following RM-NLHF~\citep{wang2026rewardmodelingnaturallanguage}, we
additionally measure whether a correct final preference is supported by
a sufficiently consistent critique.
We use GPT-5.5 with the core-argument scoring prompt from RM-NLHF,
reproduced in Figure~\ref{fig:prompt_core_opi}.
For each model, OPI is the fraction of its own valid, correct choices
whose critique has $F1_{\mathrm{core}}\le 0.5$.

% Source: released RM-NLHF reward_function.py::prompt_f1_based_similarity_score.
\begin{figure*}[!t]
\centering
\begin{tcolorbox}[critiqueevalprompt, title={Core-Argument Similarity for OPI}, width=\textwidth]
\scriptsize
\color{black!85}
I will provide you with a generated evaluation content and a reference evaluation content. Your task is to analyze the similarity between the \texttt{\textless{}Generated Evaluation Content\textgreater{}} and the \texttt{\textless{}Reference Evaluation Content\textgreater{}} by calculating F1 scores based on their key arguments.
\par\smallskip
\textbf{Core Principle:} Focus exclusively on "Key Arguments" - decisive reasons that are powerful enough to justify the final choice on their own. Identify these core justifications, not minor points.
\par\smallskip
\textbf{Part 1: First check}
\par\smallskip
First check if the generated critique repeats the same point across multiple times. If yes, directly output without conducting part 2:
\par\smallskip
{\ttfamily\raggedright
\textless{}thinking\textgreater{}\\
    Put here how the generated critique repeats points.\\
\textless{}/thinking\textgreater{}\par}
\par\smallskip
{\ttfamily\raggedright
\textless{}scores\textgreater{}\\
    \textless{}critique\_f1\textgreater{}0\textless{}/critique\_f1\textgreater{}\\
    \textless{}critique\_precision\textgreater{}0\textless{}/critique\_precision\textgreater{}\\
    \textless{}critique\_recall\textgreater{}0\textless{}/critique\_recall\textgreater{}\\
\textless{}/scores\textgreater{}\\
\par}
\par\smallskip
\textbf{Part 2: Steps for F1 Score Calculation}
\par\smallskip
1. \textbf{Count Reference Key Arguments (N\_ref):}
\quad\textbullet\ Check if the reference identifies a \textbf{fatal error} (critical factual error, harmful statement, or fundamental misunderstanding).
\quad\textbullet\ \textbf{If yes}: Only this fatal error counts. Set \textbf{N\_ref = 1}.
\quad\textbullet\ \textbf{If no}: Count all unique Key Arguments (decisive reasons that could justify the choice by themselves). Set \textbf{N\_ref} to this count.
\par\smallskip
2. \textbf{Count Generated Key Arguments (N\_gen):}
\quad\textbullet\ Identify all unique Key Arguments in the generated evaluation.
\quad\textbullet\ Set \textbf{N\_gen} to this count.
\par\smallskip
3. \textbf{Count True Positives (TP):}
\quad\textbullet\ Initialize \texttt{TP = 0}.
\quad\textbullet\ For each reference key argument, search for a match in generated key arguments.
\quad\textbullet\ \textbf{Matching Rule:} Both \textbf{semantic meaning} and \textbf{stance} (which response and positive/negative) must align.
\quad\textbullet\ Example: "Response A is more detailed" only matches with similar praise of Response A, not Response B.
\quad\textbullet\ For fatal errors: Generated must identify the same error in the same response.
\quad\textbullet\ Each generated argument can only match once.
\quad\textbullet\ Increment \texttt{TP} by 1 for each valid match.
\par\smallskip
4. \textbf{Calculate Scores:}
\quad\textbullet\ \textbf{Precision\_critique:} \texttt{TP / N\_gen} (0 if N\_gen = 0)
\quad\textbullet\ \textbf{Recall\_critique:} \texttt{TP / N\_ref} (0 if N\_ref = 0)
\quad\textbullet\ \textbf{CritiqueScore:} \texttt{2 * (Precision * Recall) / (Precision + Recall)} (0 if sum = 0)
\par\smallskip
\textbf{Output Format (rounded to 4 decimal places):}
\par\smallskip
{\ttfamily\raggedright
\textless{}thinking\textgreater{}\\
    Put the thinking process here.\\
\textless{}/thinking\textgreater{}\par}
\par\smallskip
{\ttfamily\raggedright
\textless{}scores\textgreater{}\\
    \textless{}critique\_f1\textgreater{}CritiqueScore\textless{}/critique\_f1\textgreater{}\\
    \textless{}critique\_precision\textgreater{}Precision\_critique\textless{}/critique\_precision\textgreater{}\\
    \textless{}critique\_recall\textgreater{}Recall\_critique\textless{}/critique\_recall\textgreater{}\\
\textless{}/scores\textgreater{}\\
\par}
\par\smallskip
{\ttfamily\raggedright
\textless{}Generated Evaluation Content\textgreater{}\\
\{critiques\}\\
\textless{}/Generated Evaluation Content\textgreater{}\par}
\par\smallskip
{\ttfamily\raggedright
\textless{}Reference Evaluation Content\textgreater{}\\
\{reference\_critiques\}\\
\textless{}/Reference Evaluation Content\textgreater{}\par}
\end{tcolorbox}
\caption{
Core-argument scoring prompt from
RM-NLHF~\citep{wang2026rewardmodelingnaturallanguage}.
The resulting F1 is thresholded at 0.5 when computing OPI.
}
\label{fig:prompt_core_opi}
\end{figure*}

\paragraph{Output length.}
We report the mean token count of the complete generated assessment,
including the critique and A/B choice, over all eligible generated
examples.
This average includes incorrect choices and format-invalid outputs.

\subsubsection{Downstream Revision Evaluation}
\label{app:critique_utility_details}
\label{app:critique_utility_data}

Direct reference-based metrics do not fully capture whether a critique
contains information that can be acted upon.
Following prior work that uses natural-language critiques as feedback
for response improvement~\citep{
wang2026outcomeaccuracyenoughaligning,
wang2026rewardmodelingnaturallanguage},
we therefore complement the direct evaluation with a downstream
Feedback-Edit evaluation.
We measure the improvement of the complete candidate sampling,
selection, and editing procedure over direct answering by the same
downstream model.

\paragraph{Downstream datasets.}
We evaluate critique-guided response refinement on MATH-500, HumanEval+,
and the 500 hard prompts of Arena-Hard-v2.0.
Table~\ref{tab:critique_utility_datasets} lists the domains, sample
counts, metrics, and sources.

\begin{table*}[t]
\centering

\caption{
Downstream datasets for critique-guided response refinement.
Counts denote questions evaluated under each critique-model/editor
combination.
}
\label{tab:critique_utility_datasets}

\begingroup

\small
\setlength{\tabcolsep}{3.5pt}
\renewcommand{\arraystretch}{1.12}

\definecolor{UtilityLinkPurple}{HTML}{7C7CB6}
\definecolor{UtilityLinkBorder}{HTML}{CCCCE3}
\definecolor{UtilityLinkBg}{HTML}{F5F5FA}

\hypersetup{
    colorlinks=true,
    urlcolor=UtilityLinkPurple,
    pdfborder={0 0 0}
}

\newcommand{\utilitydatasetlink}[2]{%
    \href{#1}{%
        \begingroup
        \setlength{\fboxsep}{2.0pt}%
        \setlength{\fboxrule}{0.4pt}%
        \fcolorbox{UtilityLinkBorder}{UtilityLinkBg}{%
            \strut
            \textcolor{UtilityLinkPurple}{%
                \footnotesize\bfseries\rmfamily #2%
            }%
        }%
        \endgroup
    }%
}

\begin{tabular*}{\textwidth}{
    @{\extracolsep{\fill}}
    >{\raggedright\arraybackslash}p{0.19\textwidth}
    >{\centering\arraybackslash}m{0.10\textwidth}
    >{\centering\arraybackslash}m{0.125\textwidth}
    >{\raggedright\arraybackslash}p{0.22\textwidth}
    >{\centering\arraybackslash}m{0.25\textwidth}
    @{}
}
\toprule

\textbf{Benchmark}
&
\textbf{Domain}
&
\textbf{Examples}
&
\textbf{Metric}
&
\textbf{Source}
\\

\midrule

MATH-500
&
Math
&
500
&
Answer accuracy
&
\utilitydatasetlink
    {https://huggingface.co/datasets/HuggingFaceH4/MATH-500}
    {MATH-500}
\\

HumanEval+
&
Code
&
164
&
Pass@1
&
\utilitydatasetlink
    {https://huggingface.co/datasets/evalplus/humanevalplus}
    {EvalPlus}
\\

Arena-Hard-v2.0
&
Mixed
&
500
&
Tie-adjusted win rate
&
\utilitydatasetlink
    {https://github.com/lmarena/arena-hard-auto/blob/main/data/arena-hard-v2.0/question.jsonl}
    {Arena-Hard-v2.0}
\\

\midrule

\textbf{Total}
&
&
\textbf{1,164}
&
--
&
\\

\bottomrule
\end{tabular*}

\endgroup
\end{table*}

\paragraph{Candidate responses and critiques.}
We use Qwen3-32B~\citep{yang2025qwen3technicalreport} and Ministral3-14B-Instruct~\citep{liu2026ministral3} as frozen downstream models.
For each model and question, we sample eight candidate responses once
and reuse this candidate pool across GRPO, RC-GRPO, RM-NLHF, and EnGRICH.
Each GRM independently selects two responses using the Brief pairwise
judging prompt in Figure~\ref{fig:prompt_brief_generation}.
Seven knockout comparisons select a winner; two further comparisons
select a runner-up among the three candidates directly defeated by
that winner, giving nine comparisons per question.
The selected responses are assigned to positions A and B using a fixed
random seed, and the same GRM generates a final critique of this pair.
Thus, candidate pools are shared within each downstream model, while
the selected pairs and critiques may differ across GRMs.
We pass only the text within
\texttt{\textless{}critics\textgreater{}} to the editor and omit the
separate A/B choice field.

\paragraph{Response editing.}
The downstream model that generated the candidates also serves as the
editor. It receives the question, both selected responses in full, and
the GRM's critique, then produces one final response.
We use the RM-NLHF editing instruction with the complete-answer return
convention in Figure~\ref{fig:prompt_brief_edit}.
The instruction restricts modifications to issues identified in the
critique and requires a complete answer or runnable code, rather than
an edit description. If no actionable change is identified, the editor
returns the preferred supplied response in full; if the critique
expresses no preference, it returns Response A.
For the base-model reference, each downstream model separately answers
the original question directly, without candidate selection or
critique. These direct answers are shared across GRM comparisons.

\paragraph{Decoding and output handling.}
Inference uses vLLM with BF16 weights and a context limit of 40,960
tokens.
Each model uses its native chat template, with thinking disabled for
Qwen models.
Candidate responses are sampled with temperature 0.7, top-$p$ 0.8,
and top-$k$ 20, while GRM evaluation and final response generation use
greedy decoding.
The maximum generation length is 4,096 tokens for responses and
2,048 tokens for GRM assessments.
All experiments use a fixed base seed of 42, and inputs are evaluated
without truncation.

\begin{figure*}[!t]
\centering
\begin{tcolorbox}[critiqueevalprompt, title={Critique-Guided Response Editing}, width=\textwidth]
\scriptsize
\color{black!85}
[SYSTEM RULE: EDIT-ONLY MODE ENGAGED]
\par\smallskip
You are a text-processing bot. You are FORBIDDEN from answering the question. Your only function is to apply edits.
\par\smallskip
PRIMARY DIRECTIVE: Follow the \textless{}critique\textgreater{}. Nothing else.
\par\smallskip
1. THE EXCEPTION RULE: IF the \textless{}critique\textgreater{} states something is factually or mathematically wrong, you are authorized to fix ONLY THAT SINGLE PIECE OF INFORMATION. Do not explain. Do not add context. Just replace the wrong data with the correct data.
\par\smallskip
2. THE DEFAULT RULE: For everything else, if the \textless{}critique\textgreater{} does not explicitly order a change, you MUST NOT change it. Do not fix other errors. Do not improve style. Do not add information.
\par\smallskip
3. THE RE-CHECK RULE: Before you respond, you must check again whether the \textless{}critique\textgreater{} contain the content you modify.
\par\smallskip
Failure to follow these rules means you fail the task.
\par\smallskip
{\ttfamily\raggedright
{[Client Question]}\\
\{conv\_his\}\\
{[The Start of Chatbot A's Response]}\\
\{response\_A\}\\
{[The End of Chatbot A's Response]}\\
{[The Start of Chatbot B's Response]}\\
\{response\_B\}\\
{[The End of Chatbot B's Response]}\\
{[The Start of Critique]}\\
\{critique\}\\
{[The End of Critique]}\par}
\par\smallskip
[Return format]
\par\smallskip
Return exactly one complete response, using the supplied responses and critique. Do not return both responses, edit instructions, labels, or a statement that no changes are needed. If the critique identifies no actionable change, return the preferred supplied response in full; if it expresses no preference, return Response A in full. Preserve the complete final answer or runnable code. Do not solve the question afresh. Output the response text only.
\end{tcolorbox}
\caption{
RM-NLHF editing instruction with a complete-answer return convention.
The editor receives both responses selected by the GRM and its critique,
and returns one final response.
}
\label{fig:prompt_brief_edit}
\end{figure*}

\paragraph{Scoring and editing gain.}
MATH-500 uses answer equivalence computed by \texttt{math-verify} v0.8.0.
For unboxed responses of at most 512 characters containing a unique
bold numeric expression, we extract that expression as the predicted
answer; otherwise, we use the verifier's LaTeX and expression extraction.
HumanEval+ uses EvalPlus v0.3.1 with the HumanEvalPlus v0.1.10 dataset.
The standard sanitizer extracts the target function and its dependencies
from each final response. Each question contributes one code sample,
which is successful only if it passes all original and additional tests.
We report the fraction of the 164 questions that succeed (pass@1);
incorrect outputs, execution errors, and timeouts count as failures.

For Arena-Hard-v2.0, we follow the official pairwise evaluation
format and use GPT-5.5 as the external judge.
For the 500 hard prompts considered here, each response is compared
against the released \texttt{o3-mini-2025-01-31} reference in both
response orders.
Each order receives a score of 1 for a win, 0.5 for a tie, and 0 for a
loss; the two scores are averaged for each question.
The initial judge output limit is 8,192 tokens, with up to 16,384 tokens
for retries when no valid verdict is returned.
API failures are retried and are not counted as model losses.
Identical question--answer--order combinations reuse the same valid
judgment, and complete benchmark scores require both orders for all
500 questions.

For GRM $m$ and downstream model $e$, let
$y_i^{e,\mathrm{base}}$ be the direct answer and
$\widetilde y_i^{m,e}$ the final Feedback-Edit response.
We report improvement over the base model as
\begin{equation}
    \Delta_{\mathrm{base}}^{m,e}
    = \frac{100}{n}\sum_{i=1}^{n}
    \left[
        s_i\!\left(\widetilde y_i^{m,e}\right)
        - s_i\!\left(y_i^{e,\mathrm{base}}\right)
    \right],
    \label{eq:downstream_base_gain}
\end{equation}
where $n$ is the number of questions in the benchmark and
$s_i\in[0,1]$ is its task-specific score.
The difference is reported in percentage points and measures the
complete Feedback-Edit procedure, including candidate selection.

% Sources: arena_hard_v2/official_judge_utils.py and official_settings.yaml.
\begin{figure*}[!t]
\centering

\begin{tcolorbox}[critiqueevalprompt, title={Arena-Hard Pairwise Judge: System Prompt}, width=\textwidth]
\scriptsize
\color{black!85}
Please act as an impartial judge and evaluate the quality of the responses provided by two AI assistants to the user prompt displayed below. You will be given assistant A's answer and assistant B's answer. Your job is to evaluate which assistant's answer is better.
\par\smallskip
Begin your evaluation by generating your own answer to the prompt. You must provide your answers before judging any answers.
\par\smallskip
When evaluating the assistants' answers, compare both assistants' answers with your answer. You must identify and correct any mistakes or inaccurate information.
\par\smallskip
Then consider if the assistant's answers are helpful, relevant, and concise. Helpful means the answer correctly responds to the prompt or follows the instructions. Note when user prompt has any ambiguity or more than one interpretation, it is more helpful and appropriate to ask for clarifications or more information from the user than providing an answer based on assumptions. Relevant means all parts of the response closely connect or are appropriate to what is being asked. Concise means the response is clear and not verbose or excessive.
\par\smallskip
Then consider the creativity and novelty of the assistant's answers when needed. Finally, identify any missing important information in the assistants' answers that would be beneficial to include when responding to the user prompt.
\par\smallskip
After providing your explanation, you must output only one of the following choices as your final verdict with a label:
\par\smallskip
1. Assistant A is significantly better: [[A\textgreater{}\textgreater{}B]]
2. Assistant A is slightly better: [[A\textgreater{}B]]
3. Tie, relatively the same: [[A=B]]
4. Assistant B is slightly better: [[B\textgreater{}A]]
5. Assistant B is significantly better: [[B\textgreater{}\textgreater{}A]]
\par\smallskip
Example output: "My final verdict is tie: [[A=B]]".
\end{tcolorbox}

\begin{tcolorbox}[critiqueevalprompt, title={Arena-Hard Pairwise Judge: User Prompt}, width=\textwidth]
\scriptsize
\color{black!85}
{\ttfamily\raggedright
\textless{}|User Prompt|\textgreater{}\\
\{QUESTION\}\par}
\par\smallskip
{\ttfamily\raggedright
\textless{}|The Start of Assistant A's Answer|\textgreater{}\\
\{ANSWER\_A\}\\
\textless{}|The End of Assistant A's Answer|\textgreater{}\par}
\par\smallskip
{\ttfamily\raggedright
\textless{}|The Start of Assistant B's Answer|\textgreater{}\\
\{ANSWER\_B\}\\
\textless{}|The End of Assistant B's Answer|\textgreater{}\par}
\end{tcolorbox}

\caption{
Arena-Hard judge prompt for comparing a response against the fixed
reference.
The candidate and reference are evaluated in both orders; the same
prompt is used for direct answers and final Feedback-Edit responses.
}
\label{fig:prompt_arena_judge}
\end{figure*}

\subsection{Ablation Details (\S\ref{sec:ablation})}
\label{app:ablation}

We construct four variants of EnGRICH while keeping all other settings
unchanged.
\textit{w/o Process Reward} sets $\lambda=0$, removing the Keypoint-F1
process reward while retaining guided exploration.
\textit{w/o Guided Exploration} disables exploration in all-failed
groups while retaining the process reward.
\textit{w/o Rubric Generalization} freezes $M_{\mathrm{rub}}$ after
human-grounded SFT and removes its subsequent optimization.
Finally, \textit{w/o $\mathcal D_O$ Process Supervision} removes both
the process reward and guided exploration on $\mathcal D_O$, leaving
preference correctness as the only GRM supervision on this subset.

\subsection{Human-Feedback Scaling Experimental Details
(\S\ref{sec:scaling_human_critique})}
\label{app:human_feedback_scaling}

\paragraph{Experimental setup.}
We keep all 60K preference pairs and their labels fixed and vary only
the amount of available human-critique supervision.
Specifically, we expose human critiques for nested subsets of
$h\in\{0,3000,6000,12000\}$ examples drawn from the 12K HelpSteer3
training pairs, corresponding to 0\%, 5\%, 10\%, and 20\% of the full
training mixture.
Examples outside the exposed subset are treated as outcome-only data:
their human critiques are withheld from MetaCritic initialization, and
human-derived reference rubrics are not used in subsequent rubric
learning.
At $h=0$, neither $M_{\mathrm{rub}}$ nor $M_{\mathrm{ver}}$ receives
any SFT initialization, and both start directly from the base backbone.
For $h>0$, the exposed human critiques are used to construct the
corresponding MetaCritic initialization targets.
Thus, $h=12000$ recovers the full human-feedback setting.

\paragraph{GRM evaluation.}
We evaluate the final GRM from each setting using the same held-out
splits and greedy decoding as in the main experiments.
We report HS3 accuracy, Overall performance as the unweighted mean
across all seven benchmarks, CC-F1, and OPI.
The critique-quality metrics follow
Appendix~\ref{app:critique_quality_utility}.
CC-F1 is computed on the shared reference-eligible HS3 subset for which
all four scaling settings produce valid and correct preference choices.
OPI instead uses each setting's own valid and correct choices, with
lower values indicating fewer outcome--process inconsistencies.
All scores are reported as percentages.

\paragraph{MetaCritic evaluation.}
We separately evaluate the two functions of MetaCritic: rubric
generation and critique verification.
Rubric quality is measured by Keypoint Correctness, while verifier
quality is measured by Match F1.
Both evaluations use fixed inputs and independent GPT-5.5 annotations
so that comparisons across human-feedback settings are not confounded
by changes in evaluation data.

\paragraph{Evaluation data.}
We sample 100 questions from HS3 validation and 100 from RM-Bench,
with 25 questions per domain from each source (seed 20260923).
HS3 provides an in-domain evaluation, while RM-Bench measures transfer
across dataset sources.
For RM-Bench, we remove exact normalized overlaps with training
questions and retain one detailed-plain chosen/rejected pair for each
distinct question, with approximately balanced A/B positions.
The same evaluation questions are used for all MetaCritic checkpoints.

\paragraph{Keypoint Correctness.}
For each question, $M_{\mathrm{rub}}$ greedily generates one
response-specific rubric from the question and candidate responses
alone.
GPT-5.5 independently evaluates each generated keypoint against the
question and candidate responses, considering factual grounding,
A/B attribution, and strength/problem polarity, and assigns
\texttt{CORRECT}, \texttt{INCORRECT}, or \texttt{UNCERTAIN}.
A question receives a score of one only if its generated keypoint list
is nonempty and every keypoint is \texttt{CORRECT}; otherwise it
receives zero.
We average these binary scores within each dataset and report the
equal-weight mean of HS3 and RM-Bench as a percentage.
API failures are tracked separately.
This metric evaluates the correctness of generated keypoints without
measuring omitted content.

Within each human-feedback setting, \textit{Initial} denotes
$M_{\mathrm{rub}}$ before online GRM training and \textit{Final}
denotes $M_{\mathrm{rub}}$ after GRM training.
For $h>0$, the Initial checkpoint follows SFT on the exposed
human-critique subset; for $h=0$, it is the unadapted initialization.

\paragraph{Verifier diagnostic.}
We separately evaluate the keypoint-level verification ability of
$M_{\mathrm{ver}}$.
To isolate verification from rubric generation, we use a fixed rubric
bank and a fixed critique for each evaluation question, shared across
all human-feedback settings.
The rubrics are generated once by the fixed 12K SFT-initialized
$M_{\mathrm{rub}}$ and shared across all scaling settings.
The critiques are sampled from an independent frozen Qwen3-4B
outcome-only GRM, with one valid critique selected per question using
a fixed random seed.

Given the question, candidate responses, fixed rubric, and critique,
GPT-5.5 independently labels each rubric keypoint as
\texttt{MATCH}, \texttt{MISSING}, or \texttt{INCORRECT} according to
its relation to the critique.
The judge evaluates only whether the critique correctly addresses the
given keypoint and does not assess the quality of the rubric itself.
All $M_{\mathrm{ver}}$ checkpoints are therefore evaluated on identical
inputs and against the same keypoint-level reference labels.

We report three-class macro-F1, denoted Match F1, counting invalid
verifier predictions as errors.
Because $M_{\mathrm{ver}}$ is frozen after initialization, it is
evaluated once per human-feedback setting rather than separately at the
Initial and Final MetaCritic checkpoints.

\subsection{Downstream Policy Optimization Details
(\S\ref{sec:downstream_policy})}
\label{app:downstream_policy}

\paragraph{Policy optimization.}
We evaluate whether improvements in reward-model quality translate into
better downstream policy learning.
Following the downstream RL setting of
RM-NLHF~\citep{wang2026rewardmodelingnaturallanguage}, we use questions
from the HelpSteer3 training split as policy-optimization prompts and
train each policy with GRPO for 100 steps.
Only the questions are retained; the original candidate responses,
preference labels, and human critiques are not used during downstream
policy optimization.

All policies are initialized from Qwen3-4B-Instruct-2507 and optimized under
the same configuration.
We use a prompt batch size of 64 and sample
$K_{\mathrm{p}}=8$ responses per question with temperature $1.0$,
top-$p=1.0$, and a maximum generation length of 2048 tokens.
The policy is optimized with AdamW using a learning rate of
$1\times10^{-6}$.
All downstream runs use the same training prompts, policy
initialization, optimization settings, rollout configuration, and
generation budget.

Starting from the same initialization, we separately optimize policies
using frozen reward models trained by Outcome-GRPO, RC-GRPO, RM-NLHF,
and EnGRICH.
The reward models remain frozen throughout policy optimization, such
that the reward model is the only component that differs across the
four training conditions.

\paragraph{Tournament-based reward construction.}
Since all compared reward models operate through pairwise response
comparison, we follow
RM-NLHF~\citep{wang2026rewardmodelingnaturallanguage} and convert
pairwise judgments into group-relative rewards using a
double-elimination tournament.
For each training question $x$, the current policy samples a group of
responses
$\{y_1,\ldots,y_{K_{\mathrm{p}}}\}$.
The frozen reward model serves as the judge for pairwise matches in the
tournament, where a response is eliminated after two losses.
The resulting tournament order induces a rank
$\rho_i \in \{1,\ldots,K_{\mathrm{p}}\}$ for each response $y_i$, with
smaller ranks indicating better responses.

We map the tournament rank to a normalized policy reward:
\begin{equation}
r_i^{\mathrm{policy}}
=
1-
\frac{\rho_i-1}{K_{\mathrm{p}}-1},
\label{eq:downstream_reward}
\end{equation}
such that the highest- and lowest-ranked responses receive rewards of
$1$ and $0$, respectively.
The resulting rewards are normalized within each response group
following the standard GRPO objective to obtain the corresponding
advantages.

The same tournament procedure, judging template, decoding
configuration, and generation budget are used for all reward models.
Reward-model decoding is greedy, with a shared maximum generation
length across methods.
MetaCritic, human critiques, and diagnosis-guided revision are not
invoked during downstream policy optimization; only the trained GRM is
used to provide pairwise reward judgments.

\paragraph{Evaluation.}
We evaluate the resulting policies on three complementary benchmarks.
IFEval Prompt-Strict~\citep{zhou2023instructionfollowing} measures
strict compliance with verifiable instructions.
AlpacaEval~2 length-controlled (LC) win
rate~\citep{dubois2024length} evaluates general open-ended response
quality while controlling for response-length effects.
We additionally evaluate on the Creative Writing subset of the official
Arena-Hard-v2.0 release, which extends
Arena-Hard~\citep{pmlr-v267-li25h}, to measure performance on
open-ended creative generation.

All policy variants are evaluated using the same official evaluation
protocols and evaluators that are independent of the reward models used
during policy optimization.

\subsection{Efficiency Details (\S\ref{sec:efficiency})}
\label{app:efficiency}

\paragraph{Hardware and execution settings.}
Efficiency measurements for Gemma-3-12B are collected on four nodes
with eight NVIDIA H20 GPUs each (32 GPUs in total).
Inference is measured on a separate node with eight H20 GPUs.
Other training and decoding settings follow the general implementation
details described above.

\paragraph{Training time breakdown.}
The left panel of Figure~\ref{fig:efficiency} reports mean wall-clock
seconds per completed GRM update.
For each stage, we sum its elapsed time and divide by the number of
completed updates, including updates on which that stage is inactive.
Thus, occasional revisions and auxiliary updates are amortized over
the same denominator.
The five segments are defined as follows:
\begin{itemize}
    \item \textbf{Rollout}: generation of the initial $K$ GRM
    assessments per input pair, including generation-engine
    synchronization.

    \item \textbf{Revision}: generation and preparation of
    diagnosis-guided revisions for all-failed groups, where all initial
    assessments are format-valid but predict the incorrect preference.
    Scoring revised assessments is included in Rewarding.

    \item \textbf{Rewarding}: computation of outcome and process
    rewards, including auxiliary-model forward passes for rubric
    generation and critique verification, as well as reward evaluation
    of revised assessments.

    \item \textbf{Aux. Update}: parameter updates of auxiliary modules
    and the associated credit computation.
    This includes MetaRM updates for RM-NLHF and
    $M_{\mathrm{rub}}$ updates for EnGRICH;
    $M_{\mathrm{ver}}$ remains frozen.
    Forward passes used to score outputs are counted under Rewarding.

    \item \textbf{GRM Update}: old-policy log-probability computation
    and GRM optimization, including the training forward pass, backward
    pass, gradient synchronization, and optimizer step.
    Batch assembly and residual driver overhead are included in this
    segment.
\end{itemize}

The value at the end of each bar is the sum of the five segments, with
nested timers counted only once.
These costs cover online RL updates; offline MetaCritic initialization
and teacher annotation, initial model loading, failed runs, validation,
and checkpoint writing are excluded.

\paragraph{Inference metrics.}
One sample is one query with two candidate responses, for which the GRM
generates a critique and a final preference decision.
Only the trained GRM is deployed; MetaCritic and other auxiliary modules
are inactive.
\textbf{Time / sample} is the evaluation wall-clock interval divided by
the number of generated decisions across all eight replicas.
The interval runs from evaluation-manifest creation to score-summary
completion and includes model loading, batched generation, output I/O,
and scoring.
It therefore measures amortized batch-processing time rather than
single-request latency.
\textbf{Output length} is the total number of generated tokens divided
by the same number of decisions, including both the critique and final
choice.
All generated outputs contribute, including incorrect and
format-invalid decisions; filtered inputs do not.
This is a pooled mean over comparisons, without equal weighting across
benchmarks.

\section{Full Experiment Results}
\subsection{Critique Quality}
Figure~\ref{fig:critique-analysis-appendix} reports the complete
critique quality evaluation on both Qwen3-4B and Gemma-3-12B,
including direct critique evaluation and downstream revision evaluation.
\begin{figure*}[t]
    \centering

    % ===== Qwen3-4B =====
    \begin{subfigure}[t]{0.48\textwidth}
        \centering
        \includegraphics[
            width=\linewidth,
            height=4cm,
            keepaspectratio
        ]{figs/brief.pdf}
        \caption{Direct critique evaluation on Qwen3-4B.}
        \label{fig:qwen-critique-quality}
    \end{subfigure}
    \hfill
    \begin{subfigure}[t]{0.48\textwidth}
        \centering
        \includegraphics[
            width=\linewidth,
            height=4cm,
            keepaspectratio
        ]{figs/critique_utility.pdf}
        \caption{Downstream revision on Qwen3-4B.}
        \label{fig:qwen-feedback-usability}
    \end{subfigure}

    \vspace{0.8em}

    % ===== Gemma-3-12B =====
    \begin{subfigure}[t]{0.48\textwidth}
        \centering
        \includegraphics[
            width=\linewidth,
            height=4cm,
            keepaspectratio
        ]{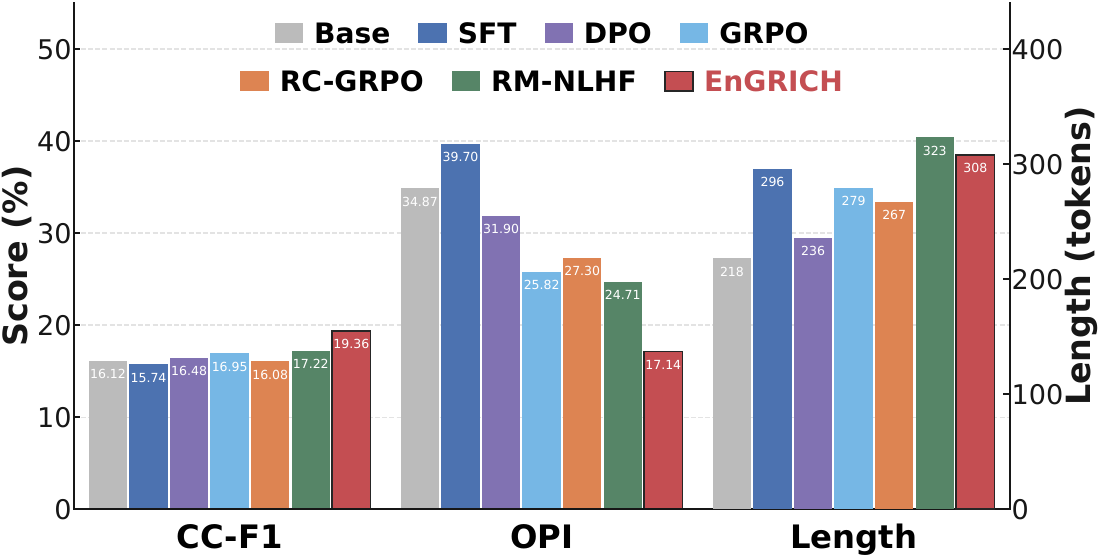}
        \caption{Direct critique on Gemma3-12B.}
        \label{fig:gemma-critique-quality}
    \end{subfigure}
    \hfill
    \begin{subfigure}[t]{0.48\textwidth}
        \centering
        \includegraphics[
            width=\linewidth,
            height=4cm,
            keepaspectratio
        ]{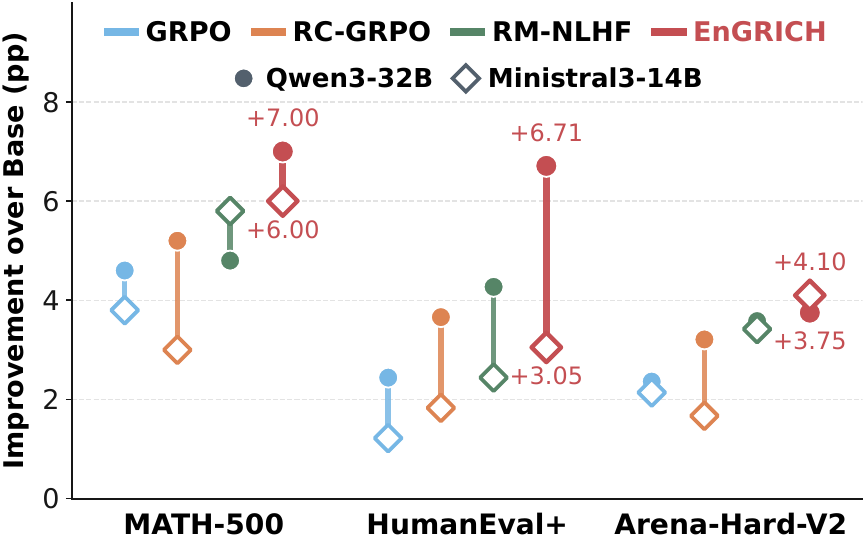}
        \caption{Downstream revision on Gemma3-12B.}
        \label{fig:gemma-feedback-usability}
    \end{subfigure}

    \caption{
    Critique quality evaluation on Qwen3-4B and Gemma-3-12B.
    Direct evaluation reports CC-F1, OPI, and average assessment length
    on the HelpSteer3 validation set, while downstream evaluation reports
    response-revision gains over direct-answer baselines.
    }
    \label{fig:critique-analysis-appendix}
\end{figure*}

\subsection{Ablation Study}
We evaluate four ablated variants of EnGRICH on both Qwen3-4B and
Gemma3-12B, as summarized in Table~\ref{tab:ablation_extended}.

\begin{table*}[t]
\centering
\caption{
Ablation results on Qwen3-4B and Gemma3-12B.
\textbf{Bold} denotes the best result within each backbone.
}
\label{tab:ablation_extended}

\small
\setlength{\tabcolsep}{2.5pt}
\renewcommand{\arraystretch}{1.07}

\begin{tabular*}{\textwidth}{
@{\extracolsep{\fill}}
l
*{7}{c}
*{2}{>{\centering\arraybackslash}p{0.070\textwidth}}
@{}
}
\toprule

&
\multicolumn{7}{c}{
\textit{Benchmark Performance}
}
&
\multicolumn{2}{c}{
\textit{Critique Quality}
}
\\

\cmidrule(lr){2-8}
\cmidrule(lr){9-10}

\textbf{Method}
& \textbf{HS3}
& \textbf{RB2}
& \textbf{RM-B}
& \textbf{SCAN}
& \textbf{HREF}
& \textbf{LitBench}
& \textbf{WQ}
& \textbf{CC-F1}
& \textbf{OPI} \\

\midrule

% ============================================================
% Qwen3-4B
% ============================================================
\multicolumn{10}{l}{\textbf{\textit{Qwen3-4B}}} \\

\textbf{EnGRICH}
& 77.06
& \textbf{65.42}
& \textbf{75.12}
& \textbf{77.48}
& \textbf{62.60}
& 65.69
& \textbf{61.15}
& \textbf{18.18}
& \textbf{28.09} \\

w/o Process Reward
& 75.95
& 62.10
& 73.74
& 76.04
& 58.61
& 64.23
& 57.94
& 14.46
& 34.17 \\

w/o Guided Exploration
& 76.82
& 60.95
& 74.58
& 77.32
& 61.90
& \textbf{65.73}
& 60.54
& 17.31
& 30.48 \\

w/o Rubric Generalization
& \textbf{77.14}
& 58.90
& 72.90
& 75.72
& 57.95
& 63.71
& 57.38
& 15.65
& 33.85 \\

w/o $\mathcal D_O$ Process Supervision
& 76.90
& 59.80
& 74.05
& 76.68
& 59.68
& 64.56
& 58.65
& 16.35
& 32.72 \\

\midrule

% ============================================================
% Gemma3-12B
% ============================================================
\multicolumn{10}{l}{\textbf{\textit{Gemma3-12B}}} \\

\textbf{EnGRICH}
& \textbf{77.71}
& \textbf{59.85}
& \textbf{74.66}
& 78.27
& 66.91
& \textbf{69.58}
& \textbf{65.24}
& \textbf{19.36}
& \textbf{17.14} \\

w/o Process Reward
& 76.74
& 56.90
& 72.60
& \textbf{78.43}
& 63.15
& 67.16
& 62.07
& 14.96
& 25.81 \\

w/o Guided Exploration
& 77.31
& 55.70
& 74.10
& 77.64
& \textbf{66.94}
& 69.15
& 64.22
& 18.22
& 19.92 \\

w/o Rubric Generalization
& 76.98
& 53.95
& 71.60
& 75.88
& 61.79
& 66.05
& 60.49
& 16.37
& 23.02 \\

w/o $\mathcal D_O$ Process Supervision
& 77.15
& 54.65
& 72.20
& 76.84
& 62.71
& 66.57
& 61.41
& 17.44
& 22.14 \\

\bottomrule
\end{tabular*}

\end{table*}

\subsection{Scaling Human Critique Supervision}
We further examine how the amount of human critique supervision affects EnGRICH. Figure~\ref{fig:human-critique-scaling-add} reports benchmark performance, critique quality, and MetaCritic quality under different levels of human-critique coverage for both Qwen3-4B and Gemma3-12B.

\begin{figure*}[t]
    \centering

    \begin{subfigure}[t]{\textwidth}
        \centering
        \includegraphics[width=0.32\linewidth]
            {figs/5_2_scaling_performance.pdf}
        \hfill
        \includegraphics[width=0.32\linewidth]
            {figs/5_2_scaling_critique_quality.pdf}
        \hfill
        \includegraphics[width=0.32\linewidth]
            {figs/5_2_scaling_metacritic_quality.pdf}
        \caption{Qwen3-4B.}
        \label{fig:human-critique-scaling-qwen}
    \end{subfigure}

    \vspace{0.8em}

    \begin{subfigure}[t]{\textwidth}
        \centering
        \includegraphics[width=0.32\linewidth]
            {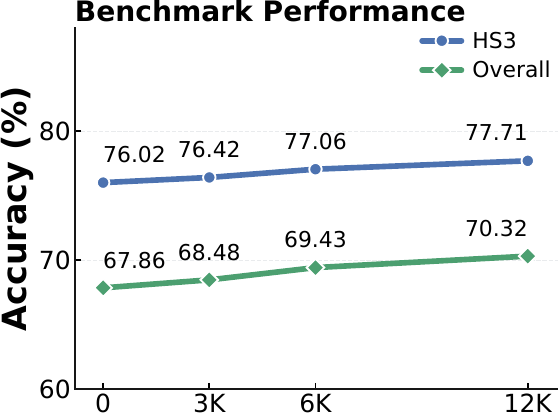}
        \hfill
        \includegraphics[width=0.32\linewidth]
            {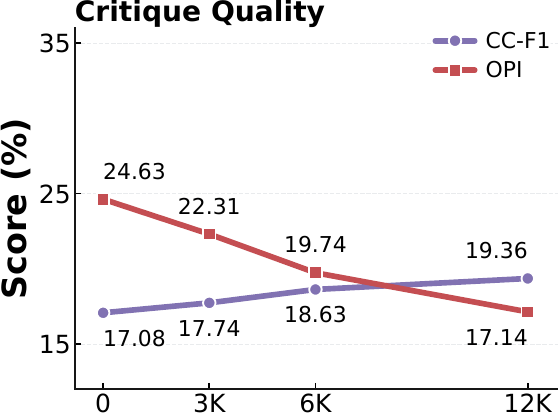}
        \hfill
        \includegraphics[width=0.32\linewidth]
            {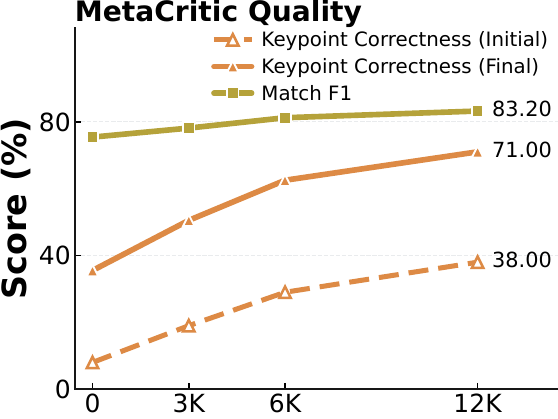}
        \caption{Gemma3-12B.}
        \label{fig:human-critique-scaling-gemma}
    \end{subfigure}

    \caption{
    Scaling human critique supervision on Qwen3-4B and Gemma3-12B.
    We report benchmark performance (left), critique quality (middle),
and MetaCritic quality (right) as human-critique coverage increases.
Initial denotes MetaCritic before online GRM training
(SFT-initialized when human critiques are available), and Final denotes
MetaCritic after GRM training.
    }
    \label{fig:human-critique-scaling-add}
\end{figure*}

\subsection{Efficiency}
Figure~\ref{fig:efficiency_all} compares the training and inference
efficiency of RL-based GRMs on Qwen3-4B and Gemma3-12B.

\begin{figure*}[t!]
    \centering

    \begin{subfigure}[t]{0.88\textwidth}
        \centering
        \includegraphics[
            width=\linewidth,
            keepaspectratio
        ]{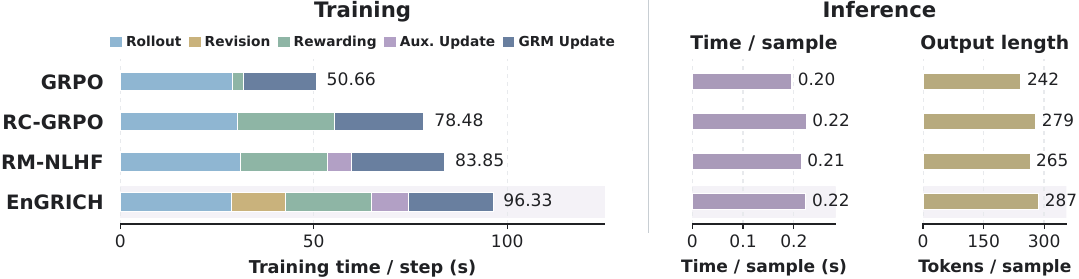}
        \caption{Qwen3-4B.}
        \label{fig:efficiency-qwen}
    \end{subfigure}

    \vspace{0.8em}

    \begin{subfigure}[t]{0.88\textwidth}
        \centering
        \includegraphics[
            width=\linewidth,
            keepaspectratio
        ]{figs/efficiency.pdf}
        \caption{Gemma3-12B.}
        \label{fig:efficiency-gemma}
    \end{subfigure}

    \caption{
    Training and inference efficiency of RL-based GRMs on Qwen3-4B and
    Gemma3-12B. \textit{Aux.\ Update} denotes MetaRM updates in RM-NLHF
    and $M_{\mathrm{rub}}$ updates in EnGRICH. Inference time and output
    length are averaged per sample across seven benchmarks.
    }
    \label{fig:efficiency_all}
\end{figure*}

\end{document}